\documentclass[11pt]{amsart}
\usepackage[T1]{fontenc}
\usepackage{amsmath,amsfonts,amsbsy,amsgen,amscd,mathrsfs,amssymb,amsthm}
\usepackage{enumerate,mathtools}
\usepackage{xcolor}
\usepackage{hyperref}
\usepackage[capitalize,noabbrev]{cleveref}
\usepackage{natbib}

\hypersetup{
  colorlinks=true,
  linkcolor=red,
  citecolor=blue,
  urlcolor=black
}

\makeatletter
\renewcommand{\eqref}[1]{%
  \hyperref[#1]{\textup{\tagform@{\ref*{#1}}}}%
}
\makeatother

\usepackage{float}
\usepackage{subcaption}
\usepackage{siunitx}

\usepackage{xcolor}

\usepackage{amssymb}
\usepackage{graphics}
\usepackage[mathcal]{euscript}
\usepackage{indentfirst}
\usepackage{braket}
\usepackage{amssymb}
\usepackage{latexsym}
\usepackage{amsmath}
\usepackage{amsthm}
\usepackage{amscd}
\usepackage{enumerate}
\usepackage{graphicx}
\usepackage{fullpage}
\usepackage{url}
\usepackage{tikz}
\usepackage{mathrsfs}
\usepackage{algorithm}
\usepackage{algorithmic}
\usepackage{booktabs}

\numberwithin{equation}{section}

\newtheorem{theorem}{Theorem}[section]
\newtheorem{proposition}[theorem]{Proposition}

\newtheorem{corollary}[theorem]{Corollary}
\newtheorem{lemma}[theorem]{Lemma}

\theoremstyle{definition}
\newtheorem{definition}[theorem]{Definition}
\newtheorem{claim}[theorem]{Claim}

\newtheorem{remark}[theorem]{Remark}

\newtheorem*{claim*}{Claim}

\newcommand{\Def}{\textbf}
\newcommand*\diff{\mathop{}\!\mathrm{d}}
\newcommand{\target}{\mu}

\newcommand{\flow}{p}
\newcommand{\trans}{\flow}
\newcommand{\Space}{\mathcal O}
\newcommand{\Sett}{\mathbb X}
\newcommand{\Setsig}{\mathcal X}
\newcommand{\Time}{T}
\newcommand{\x}{x}
\newcommand{\h}{h}
\newcommand{\R}{\mathbb R}
\newcommand{\N}{\mathbb N}
\newcommand{\Z}{\mathbb Z}
\newcommand{\Dim}{d}
\newcommand{\den}{\mathrm m}
\newcommand{\E}{\mathbb E}
\newcommand{\PP}{\mathbb P}
\newcommand{\semi}{\mathrm P}
\newcommand{\pos}{\pi}
\newcommand{\reltarget}{f}
\newcommand{\fn}{G}
\newcommand{\cvx}{\ell}
\newcommand{\beg}{\mathrm D_{\cvx}}
\newcommand{\B}{\operatorname{Binomial}}
\newcommand{\Poisson}{\operatorname{Poisson}}
\newcommand{\NB}{\operatorname{Negative Binomial}}
\newcommand{\var}{\operatorname{Var}}
\newcommand{\cov}{\operatorname{Cov}}
\newcommand{\Bin}{\mathrm B}

\newcommand{\ints}{\lambda}
\newcommand{\idx}{i}

\newcommand{\cp}{X}
\newcommand{\XX}{\cp}
\newcommand{\der}{\partial}
\newcommand{\basisi}{e_{\idx}}
\newcommand{\basis}{e}

\newcommand{\Leb}{\mathcal L}
\newcommand{\ub}{C}
\newcommand{\fp}{Y}
\newcommand{\bm}{B}
\newcommand{\heat}{H}
\newcommand{\gauss}{\gamma}
\newcommand{\id}{I}
\newcommand{\Rp}{R}
\newcommand{\ent}{\mathrm {KL}}
\newcommand{\Qp}{Q}
\newcommand{\paths}{D([0,\Time];\N^{\Dim})}

\newcommand{\loss}{\mathsf{L}}
\newcommand{\w}{w}

\newcommand{\cev}[1]{\reflectbox{\ensuremath{\vec{\reflectbox{\ensuremath{#1}}}}}}

\title{Binomial flows: Denoising and flow matching for discrete ordinal data}
\author{Yair Shenfeld}
\address{Division of Applied Mathematics, Brown University, Providence, RI, USA 02912}
\email{yair\_shenfeld@brown.edu}
\author{Ricardo Baptista}
\address{Department of Statistical Sciences, University of Toronto, Toronto, ON, Canada M5G 1X6}
\email{r.baptista@utoronto.ca}
\author{Stefano Peluchetti}
\address{Sakana AI, Tokyo, Japan}
\email{stepelu@sakana.ai}

 \thanks{}

\begin{document}
\maketitle

\begin{abstract}
Flow-based generative modeling in continuous spaces exploit
Tweedie's formula to express the denoiser (learned in training) as a score function (used in sampling). In contrast, this relation has been largely missing in the discrete setting where common approaches focus on learning discrete scores and rates. In this work we close this gap for discrete non-negative ordinal data by introducing Binomial flows. Our framework provides a simple recipe for training a discrete diffusion model which simultaneously denoises, samples, and estimates exact likelihoods. We verify our methodology on synthetic examples and obtain competitive results on real-world data sets.
\end{abstract} 

\section{Introduction}

Flow matching and diffusion models in continuous spaces are currently state-of-the-art methods for the generation of images, video, audio, proteins, and robotic data~\cite{DBLP:journals/corr/abs-2412-06264}. Given that many of these structured dataset are naturally defined over discrete spaces, e.g., using 8-bit representations of images, text, molecules~\cite{liu2025think}, there is active research in bringing these flow-based models into the discrete setting. 
While many of the ideas of flow-based methods carry over from continuous to discrete settings, there are in fact some important differences and missing gaps. In continuous spaces, the standard approaches perturb data points with additive noise, train a denoiser to recover the clean data points from their noisy versions, and then sequentially apply denoising steps to sample new data points. A key relation for using denoisers to solve the sampling task correctly is \emph{Tweedie's formula}, which expresses the denoiser as the score function of the perturbed data distribution. The score is used to sample using a variety of iterative techniques, such as Langevin dynamics or time-reversed diffusion-based solvers, which progressively transform noise into samples from the target data distribution. 

In contrast, flow-based methods in discrete spaces have mostly focused on  learning a (discrete analogue of the) score. This is partly due to the fact that \emph{Gaussian} noise, which is the natural choice for perturbing the data
in continuous settings, satisfies a Tweedie's formula relating denoisers and scores, while the analogue of Gaussian noise in the discrete setting is less clear. As argued in recent works \cite{pham2025discrete, karras2022elucidating, li2025back}, learning denoisers, as opposed to  scores, improves stability in training so it is of interest to find a discrete analogue to Tweedie's formula. In this paper we show that when the data is defined over an ordered discrete space with nonnegative data, \emph{Binomial} noise is the analogue of Gaussian noise. Based on this observation we introduce a simple recipe for training a discrete diffusion model which mirrors diffusion models in continuous spaces.

Our main contributions are as follows:
\begin{enumerate} \itemsep0pt
\item We show that choosing the noising distributions to be \Def{Binomial} allows us to directly learn the denoiser, which in turn satisfies a discrete Tweedie's formula.
\item We show that the discrete Tweedie's formula yields exact samples from the data distribution in a finite time through the \Def{Poisson-F\"ollmer process}.

\item We derive an \emph{identity} for the likelihood of the data distribution involving the Binomial flow denoiser, which leads to \Def{exact likelihood estimation and training}. 
\item We validate our methodology on various 
synthetic and real-world 
datasets and obtain competitive FID values on imaging datasets. 
\end{enumerate}
In the remainder  of the introduction, we review in more detail 
the general framework of flow matching and diffusion models. 
Section \ref{sec:Bin_flow} presents our main results on the Binomial flows and the Poisson-F\"ollmer process. Section \ref{sec:exp} describes our experiments. Section \ref{sec:related} covers the relevant related work. The proofs of our results and further information are contained in  Appendices \ref{sec:poisson_semigroup}-\ref{sec:experiments}.

\subsection{Learning denoisers}
Consider a data distribution $\target$ over an ordered space $\Space$, e.g, $\Space=\R^{\Dim}$ or $\Space=\N^{\Dim}$. Our goal is to learn a denoiser 
\begin{equation}
\label{eq:denoiser_intro}
\den(t,\x):=\E[\XX_{\Time}|\XX_t=\x]
\end{equation}
for all $t\in [0,\Time]$ and $\x\in\Space$, where $\XX_t$ is a noisy version of $\XX_{\Time} \sim \target$ with $t$ standing for the signal level. A flow-based recipe to train a denoiser is to choose a family of conditional distributions $\flow_{t|\Time}$ on $\Space$, for $t\in [0,\Time]$, such that  
\begin{enumerate}[(i)] 
\item \label{enm:easy} \itemsep0pt $\flow_{t|\Time}(\cdot|\x_{\Time})$ can be easily sampled for all $t\in [0,\Time]$ and $\x_{\Time}\in\Space$,
\item \label{enm:delta} $\flow_{\Time|\Time}(\x_{\Time}|y)=\delta_y(\x_{\Time})$ for all $\x_{\Time},y\in\Space$,
\item \label{enm:ind} $\flow_{0|\Time}(\cdot|\x_{\Time})$ is independent of $\x_{\Time}$ for all $\x_{\Time}\in\Space$.
\end{enumerate}
Property \eqref{enm:easy} means that it is easy to perturb a data point $\x_{\Time}$, property \eqref{enm:delta} means that at time $t=\Time$, when there is no perturbation, $x_t$ corresponds to the clean data. Property \eqref{enm:ind} means that at $t=0$ the data point $\x_{\Time}$ is forgotten and $x_0$ is pure noise. Once the family $(\flow_{t|\Time})_{t\in [0,\Time]}$ is constructed, we define the flow $(\flow_t)_{t\in [0,\Time]}$ of probability measures as
\begin{equation}
\label{eq:marg_eq_intro}
\flow_t(\x_t):=\int_{\Space} \flow_{t|\Time}(\x_t|\x_{\Time}) \target(\x_{\Time}).
\end{equation}
By construction, $\flow_{\Time}=\target$, and $\flow_0$ is a measure which is independent of the data, typically assumed to be easy to sample. Thus, the flow $(\flow_t)_{t\in [0,\Time]}$ interpolates between an easy distribution at $t=0$ to the data distribution $\target$ at $t=\Time$. To learn the denoiser $\den$ in \eqref{eq:denoiser_intro}, with $\XX_t\sim \flow_t$, we minimize
\begin{equation}
\label{eq:denoiser_obj_intro}
\min_{\theta}\int_0^{\Time}\int_{\Space} w(t) \beg\left(\x_{\Time},\den_{\theta}(t,\x_t)\right) \flow_{t|\Time}(\x_t|\x_{\Time})\target(\x_{\Time})\diff t,
\end{equation}
where $\beg$ is a Bregman divergence for a convex function $\cvx$ (e.g., quadratic, entropic), $\w \colon [0,T] \rightarrow \R_{+}$ is a time-dependent weight, and $\{\den_{\theta}\}_{\theta}$ is a class of functions parameterized by $\theta$. For example, with $\cvx(\x)=\frac{|\x|^2}{2}$, the optimization problem~\eqref{eq:denoiser_obj_intro} reads 
\begin{equation}
\label{eq:denoiser_obj_quad_intro}
\min_{\theta}\int_0^{\Time}\int_{\Space} w(t) \left|\x_{\Time}-\den_{\theta}(t,\x_t)\right|^2 \flow_{t|\Time}(\x_t|\x_{\Time})\target(\x_{\Time})\diff t.
\end{equation}
A standard computation \cite{banerjee2005optimality}, \cite{DBLP:journals/corr/abs-2505-05082} shows that $\den(t,\x)=\E[\XX_{\Time}|\XX_t=\x]$ is the unique minimizer of \eqref{eq:denoiser_obj_intro} for any strictly convex $\cvx$, provided that $\{\den_{\theta}\}_{\theta}$ is a sufficiently large class of functions. The advantage of the objective~\eqref {eq:denoiser_obj_intro} is that, given i.i.d.\ samples from $\target$, it can be estimated with Monte Carlo, since by property \eqref{enm:easy} sampling from $\flow_{t|\Time}(\cdot|\x_{\Time})$ is easy. 
\subsection{Sampling with denoisers}
In the context of generative modeling one is interested in using a denoiser $\den_{\theta}$ to generate new samples from $\target$. In the continuous setting, e.g., $\Space =\R^{\Dim}$, this can be achieved by choosing the conditional distributions to be \emph{Gaussian}. For example, we can choose
\begin{equation}
\label{eq:gaussian_cond_intro}
\flow_{t|\Time}(\cdot|\x_{\Time}):=\mathcal N\left(\frac{t}{\Time}\x_{\Time},\frac{t(\Time-t)}{\Time}\id_{\Dim}\right),
\end{equation}
which indeed satisfies \eqref{enm:easy}-\eqref{enm:delta}-\eqref{enm:ind}\footnote{With the choice \eqref{eq:gaussian_cond_intro} we have $\flow_{0|\Time}(\cdot|\x_{\Time})=\delta_0$,  where other Gaussian models would lead to, for example, $\flow_{0|\Time}(\cdot|\x_{\Time})\approx \mathcal N(0, \id_{\Dim})$. In our exposition we chose  \eqref{eq:gaussian_cond_intro} to make the analogy with the discrete case clearer; cf.  \cref{sec:gaussian}.}. In order to sample we use \Def{Tweedie's formula},
\begin{equation}
\label{eq:Tweedie_cont_intro}
\den(t,x)=\frac{\Time}{t}\x+(\Time-t)\,\nabla\log \flow_t(\x)
\end{equation}
to express the denoiser $\den(t,x)$ in terms of the score  $\nabla\log \flow_t(\x)$. Once we have an estimate of $\nabla\log \flow_t$ we can sample from $\target$ via a number of methods such as time-reversal, (annealed) Langevin dynamics, and probability flows \cite{ho2020denoising, song2021scorebased, song2019generative}.

The setting of discrete data posses a challenge to implementing the above approach, due to the lack of the Gaussian distribution, and the lack of ordinal structure. Instead, most approaches focus on learning the distributions $\trans_{T|t}(\cdot|\x_t)$, which predicts clean data from its noisy versions, and then use these distributions in sampling, often in heuristic ways. Since the denoiser $\den(t,\x_t)$ is simply the mean of $\trans_{T|t}(\cdot|\x_t)$, it is much easier to learn, so being able to get exact samples using just the denoiser is of great value. This is precisely the goal we achieve in this work.

\section{Binomial flows}
\label{sec:Bin_flow}
\subsection{Binomial denoisers}
\label{subsec:bin_denoiser}
Let $\x_{\Time}\in \N^{\Dim}$ be a clean data point which we wish to noise into $\x_t\in \N^{\Dim}$ for $t\in [0,\Time]$. In the continuous setting the natural and common approach is to use \emph{additive} noise to perturb the data, $\x_t=\x_{\Time}+(\Time -t)y$ where $y$ is the noise. But in the discrete setting such perturbation will yield $x_t\notin \N^{\Dim}$. %
Our approach is to use \emph{thinning} as a substitute to addition. In particular, we thin $\x_{\Time}\in \N^{\Dim}$ by sampling $x_t|x_{\Time}$ from a Binomial (product) distribution with the number of trials $\x_{\Time}$ and success probability $\frac{t}{\Time}$. Clearly, at $t=\Time$ we recover the clean data $\x_{\Time}$, while at $t=0$ we have no signal left, $\x_0=0$. We now make these definitions precise.

Let $\B_{n,\alpha}$ be the binomial distribution with $n \in \N$ trials and probability of success $\alpha\in [0,1]$,
\[
\B_{n,\alpha}(k)={n\choose k}\alpha^k(1-\alpha)^{n-k},\quad k=0,\ldots, n.
\]
For $\x_{\Time}\in \N^{\Dim}$ let $\B_{\x_{\Time},\alpha}$ stand for the product distribution that factorizes over each component
\[
\B_{\x_{\Time},\alpha}=\prod_{\idx=1}^{\Dim}\B_{\x_{\Time}^{\idx},\alpha},\quad \x_{\Time}=(x_{\Time}^1,\ldots,\x_{\Time}^{\Dim}).
\]
Given a data sample $\x_{\Time} \in \N^{\Dim}$ we define the \Def{Binomial flow} as the family of conditional distributions for the thinned data,
\begin{equation}
\label{eq:cond_dist_bino_poss}
\flow_{t|\Time}(\x_t|\x_{\Time}):=\B_{\x_{\Time},\frac{t}{\Time}}(\x_t).
\end{equation}
We note that the choice \eqref{eq:cond_dist_bino_poss} satisfies properties \eqref{enm:easy},  \eqref{enm:delta}, \eqref{enm:ind}, with $\flow_{0|\Time}(\x_0|\x_{\Time})=\delta_0$. The \Def{denoiser} is given by
\begin{equation}
\label{eq:denoiser_def}
\den(t,\x):=\E_{\XX_{\Time} \sim p_{T|t}(\cdot|\XX_t=\x)}[\XX_{\Time}|\XX_t=\x],
\end{equation}
where $\trans_{\Time|t}(\cdot|\XX_t) \propto \trans_{t|\Time}(\XX_t|\cdot)\target(\cdot)$ 
and $\XX_t|\XX_{\Time}\sim \flow_{t|\Time}(\cdot|\XX_{\Time})=\B_{\XX_{\Time},\frac{t}{\Time}}$.
The denoiser $\den$ can be estimated by solving~\eqref{eq:denoiser_obj_intro} for the Binomial flow as shown in Algorithm~\ref{alg:train}.
\begin{algorithm}[tb]
  \caption{Training denoiser: Binomial flow}
  \label{alg:train}
  \begin{algorithmic}
    \STATE \textbf{Input:} Dataset $\mathcal{D}$, final time $\Time>0$, Bregman divergence $\beg$, batch size $B$, family of denoisers $\{\den_{\theta}\}_{\theta}$
         \WHILE{training}
         \STATE Sample minibatch $\{\x_{\Time}^{(k)}\}_{k=1}^B\sim \mathcal{D}$
         \STATE Sample times $\{t^{(k)}\}_{k=1}^B$  in $[0,\Time]$
         \STATE Sample $\x_t^{(k)}\sim \B_{\x_{\Time}^{(k)},\frac{t^{(k)}}{\Time}}$ for $k=1,\ldots,B$
      \STATE Compute:
      $$g_{\theta}:=\nabla_{\theta}\sum_{k=1}^B\beg\left(\x_{\Time}^{(k)},\den_{\theta}(t^{(k)},\x_t^{(k)})\right)$$
      \STATE Update: $\theta \leftarrow \texttt{Optimizer}(\theta,g_{\theta})$.
      \ENDWHILE
      \end{algorithmic}
\end{algorithm}

Once the denoiser in \eqref{eq:denoiser_def} is estimated we need to find a way to use it in sampling. In the following sections we will show that there exists a continuous time Markov chain $(\cp_t)_{t\in [0,T]}$ in $\N^{\Dim}$, the \Def{Poisson-F\"ollmer process},  with  intensity  $\ints(t,\x)$ such that 
\begin{enumerate}[(a)] \itemsep0pt
\item \label{enm:X_T_target} $\cp_{\Time}\sim \target,\qquad \cp_0=0$,
\item \label{enm:bridge} $\cp_t|\cp_{\Time}\sim \flow_{t|\Time}(\cdot|\cp_{\Time})=\B_{\cp_{\Time},\frac{t}{\Time}}$,
\item \label{enm:Tweedie} $\frac{\den(t,\cp_t)-\cp_t}{\Time-t}=\ints(t,\cp_t)$.
\end{enumerate}
The combination of \eqref{enm:X_T_target}-\eqref{enm:bridge}-\eqref{enm:Tweedie} is the precise discrete analogue of the Gaussian setting. Property~\eqref{enm:X_T_target} guarantees that the process matches the data distribution at the final time $t = T$, property~\eqref{enm:bridge} allows us to estimate the denoiser $\den$ by solving~\eqref{eq:denoiser_obj_intro}, and property \eqref{enm:Tweedie} is a \Def{discrete Tweedie's formula}, which shows how to use the denoiser to sample $(\cp_t)_{t\in [0,T]}$ via Algorithm~\ref{alg:sample}. 
\begin{algorithm}[tb]
  \caption{Sampling the Poisson-F\"ollmer process}
  \label{alg:sample}
  \begin{algorithmic}
    \STATE \textbf{Input:} Denoiser $\den_{\theta}$, final time $\Time>0$, step size $\Delta t$
    \STATE \textbf{Initialize:} $\cp_0:=0$
    \STATE \textbf{Set:} rate $\ints_{\theta}(t,\x):=\frac{\den_{\theta}(t,\x)-\x}{\Time-t}$
    
    \FOR{$k=1,\ldots,\frac{\Time}{\Delta t}$}
      \STATE $\cp_{k\Delta t} = \texttt{Sampler}(\cp_{(k-1)\Delta t},\ints_{\theta},\Delta t)$
    \ENDFOR
    \STATE \textbf{Output:} $\cp_{\Time}$
  \end{algorithmic}

  \vspace{0.35em}
  \hrule
  \vspace{0.35em}
  \footnotesize
  \noindent The \texttt{Sampler} method can be implemented with Algorithms \ref{alg:euler} or \ref{alg:tau}.
\end{algorithm}
In the next section we formally introduce the process $(\cp_t)_{t\in [0,T]}$. It will be shown to possess desirable theoretical properties, and in addition will yield an identity to compute the \Def{likelihood} of the data under this process via a particular choice of a Bregman divergence.
\subsection{The Poisson-F\"ollmer process}
\label{subsec:PF}
The Poisson-F\"ollmer process $(\cp_t)_{t\in [0,T]}$ is a continuous-time Markov chain (CTMC) with values in $\N^{\Dim}$. We refer to \citep{DBLP:journals/corr/abs-2205-14987} for a quick introduction to CTMC in the context of discrete diffusion models, and recall that the rate matrix $R_t(x,y)_{x,y\in\Space}$ completely describes the process. The rate matrix for  the Poisson-F\"ollmer process is 
\begin{equation}
\label{eq:rate_pf}
R_t(x,y)=
\begin{cases}
\ints^{\idx}(t,x) & \textnormal{if } y = x+e_{\idx},\\
1-\sum_{i=1}^{\Dim}\ints^{\idx}(t,x) & \textnormal{if } y = x,\\
0 & \textnormal{else},
\end{cases}
\end{equation}
where $\{\basisi\}_{\idx=1}^{\Dim}$ are the one-hot vectors and $\{\ints^{\idx}(t,x)\}_{\idx=1}^{\Dim}$ are the \Def{intensities} defined below. In words, The Poisson-F\"ollmer process $(\cp_t)$ is a continuous-time counting-process where at time $t$ each coordinate $\cp_t^{\idx}$ jumps up to $\cp_t^{\idx}+1$ with intensity $\ints^{\idx}(t,\cp_t)$. Note that the decision of each coordinate to jump depends on the current values  of all the other coordinates since the intensity $\ints^{\idx}(t,\cp_t)$ of the $\idx$th coordinate depends on $\cp_t$, rather than just $\cp_t^{\idx}$. The intensities will be chosen in such a way that we are guaranteed that $\cp_{\Time}$ is distributed like the data distribution $\target$.

To define the intensity
\begin{equation}
\label{eq:ints_def}
\ints(t,x):=(\ints^1(t,x),\ldots, \ints^{\Dim}(t,x)),
\end{equation}
we start with the definition of the one-dimensional Poisson distribution $\pos_t$ with parameter $t\ge 0$,
\begin{equation}
\label{eq:pos_def}
\pos_t(k)=e^{-t}\frac{t^k}{k!},\qquad k\in \N,
\end{equation}
and its multi-dimensional extension, 
\begin{equation}
\label{eq:pos_multi_def}
\pos_t(x)=\prod_{j=1}^{\Dim}\pos_t(\x^j),\qquad x=(\x^1,\ldots,\x^{\Dim})\in \N^{\Dim}.
\end{equation}
Based on the Poisson distribution we define the Poisson semigroup (which is analogous to the convolution operator/heat semigroup defined by the Gaussian distribution).

\begin{definition}[The Poisson semigroup]
Let $\fn:\N^{\Dim}\to \R$ be such that $\sum_{\x\in \N^{\Dim}}|\fn(\x)|\pos_t(x)<+\infty$. The \emph{Poisson semigroup} $(\semi_t)_{t\ge 0}$ is given by 
\begin{equation}
\label{eq:Poss_semi_def}
\semi_t\fn(x):=\sum_{y\in \N^{\Dim}}\fn(\x+y)\pos_t(y).
\end{equation}
\end{definition}
The intensities are now defined by 
\begin{equation}
\label{eq:ints_j_def}
\ints^{\idx}(t,x):=\frac{\semi_{\Time-t}\reltarget(x+\basisi)}{\semi_{\Time-t}\reltarget(x)},\quad t\in [0,\Time],\quad x\in \N^{\Dim},
\end{equation}
where $\{\basisi\}_{\idx=1}^{\Dim}$ are the one-hot vectors, and where $\reltarget$ is the ratio of the data distribution $\target$ and $\pos_{\Time}$,
\begin{equation}
\label{eq:rel_den}
\reltarget(\x):=\frac{\target(x)}{\pos_{\Time}(\x)},\qquad \x\in \N^{\Dim}.
\end{equation}
Once the intensities are defined as in \cref{eq:ints_j_def} one can verify that $\cp_{\Time}$ follows the data distribution $\target$ (\cref{cor:time_marg}). The form of the intensities in \eqref{eq:ints_j_def} is the discrete analogue of the score in the continuous setting (\cref{sec:gaussian}), and similarly cannot be computed simply from the definition \eqref{eq:ints_j_def}. However, in the next section we will overcome this issue using our discrete Tweedie's formula.
\subsection{Binomial denoisers and the Poisson-F\"ollmer process}
Our first key result is that learning the Binomial denoiser allows us to sample the data distribution using the Poisson-F\"ollmer process. 

\label{subsec:main_results}
\begin{proposition}
\label{prop:key_properties}
Fix $\Time>0$ and let $\target=\reltarget\pos_{\Time}$, with $\reltarget:\N^{\Dim}\to (0,\infty)$ bounded from above and below by positive constants\footnote{These assumptions are innocuous in practice as positivity can be guaranteed by setting the probability of any specific $x\in \N^{\Dim}$ to be arbitrarily small, while the upper bound assumption can be guaranteed with a large enough upper bound.}, be a distribution on $\N^{\Dim}$. The Poisson-F\"ollmer process  $(\cp_t)_{t\in [0,T]}$ with the intensities \eqref{eq:ints_j_def} satisfies \eqref{enm:X_T_target}-\eqref{enm:bridge}-\eqref{enm:Tweedie},
\begin{enumerate}[(a)] \itemsep0pt
\item \label{enm:X_T_target} $\cp_{\Time}\sim \target,\qquad \cp_0=0$,
\item \label{enm:bridge} $\cp_t|\cp_{\Time}\sim \flow_{t|\Time}(\cdot|\cp_{\Time})=\B_{\cp_{\Time},\frac{t}{\Time}}$,
\item \label{enm:Tweedie} $\frac{\den(t,\cp_t)-\cp_t}{\Time-t}=\ints(t,\cp_t)$.
\end{enumerate}
\end{proposition}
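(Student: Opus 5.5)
The plan is to identify the Poisson-F\"ollmer process as a Doob $h$-transform of the standard $\Dim$-dimensional Poisson process, and then read all three properties off this representation. Let $(N_t)_{t\in[0,\Time]}$ be a Poisson process on $\N^{\Dim}$ started at $N_0=0$ with unit rate in each coordinate, so that $N_t\sim\pos_t$ and $N_{\Time}-N_t\sim\pos_{\Time-t}$ is independent of $(N_s)_{s\le t}$. Let $\Qp$ denote the law of $N$ on path space $\paths$, and define the reweighted measure $\diff\PP:=\reltarget(N_{\Time})\,\diff\Qp$. This is a probability measure because $\E_{\Qp}[\reltarget(N_{\Time})]=\sum_x\reltarget(x)\pos_{\Time}(x)=\sum_x\target(x)=1$. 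The Markov property gives
\[
h(t,x):=\E_{\Qp}[\reltarget(N_{\Time})\mid N_t=x]=\semi_{\Time-t}\reltarget(x),
\]
and $h$ is positive and bounded by the assumptions on $\reltarget$. It is space-time harmonic for the generator $\partial_t+\sum_{\idx}(\cdot(x+\basisi)-\cdot(x))$, since the Poisson semigroup satisfies $\partial_s\semi_s\fn(x)=\sum_\idx(\semi_s\fn(x+\basisi)-\semi_s\fn(x))$. Standard $h$-transform theory (or Girsanov for counting processes, using boundedness of $h$ and $1/h$) then shows that under $\PP$ the canonical process is a CTMC. Its jump rate from $x$ to $x+\basisi$ is $h(t,x+\basisi)/h(t,x)=\ints^{\idx}(t,x)$, so under $\PP$ the canonical process is exactly the Poisson-F\"ollmer process.

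Property (a) is then immediate. We have $\cp_0=0$, and $\PP(\cp_{\Time}=x)=\reltarget(x)\pos_{\Time}(x)=\target(x)$. For (b), the density $\diff\PP/\diff\Qp$ depends only on the endpoint, so the bridges of $\PP$ and $\Qp$ coincide: the law of $\cp_t$ given $\cp_{\Time}=y$ equals the law of $N_t$ given $N_{\Time}=y$. For a Poisson process, conditioning on $N_{\Time}=y$ places each coordinate's $y^{\idx}$ points i.i.d.\ uniformly on $[0,\Time]$. Hence $N^{\idx}_t\mid N_{\Time}=y\sim\B_{y^\idx,t/\Time}$, independently across coordinates. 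The independence comes from the factorization of $\pos_t\otimes\pos_{\Time-t}$, and the binomial form can be checked directly from $\pos_t(k)\pos_{\Time-t}(y-k)/\pos_{\Time}(y)={y\choose k}(t/\Time)^k(1-t/\Time)^{y-k}$.

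For (c), property (b) tells us that the pair $(\cp_t,\cp_{\Time})$ under $\PP$ has exactly the joint law used to define $\den$, namely $\cp_{\Time}\sim\target$ and $\cp_t\mid\cp_{\Time}\sim\flow_{t|\Time}$. Therefore $\den(t,x)=\E_{\PP}[\cp_{\Time}\mid\cp_t=x]$. Computing this expectation under the $h$-transform, by the Markov property and independence of increments,
\[
\den^{\idx}(t,x)-x^{\idx}=\frac{\E[(N^{\idx}_{\Time}-N^{\idx}_t)\reltarget(N_{\Time})\mid N_t=x]}{h(t,x)}=\frac{\sum_{y}y^{\idx}\reltarget(x+y)\pos_{\Time-t}(y)}{\semi_{\Time-t}\reltarget(x)}.
\]
The Poisson identity $k\,\pos_s(k)=s\,\pos_s(k-1)$ gives $\sum_y y^\idx\reltarget(x+y)\pos_s(y)=s\,\semi_s\reltarget(x+\basisi)$. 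With $s=\Time-t$ this yields $\den^\idx(t,x)-x^\idx=(\Time-t)\ints^\idx(t,x)$, which is (c) for $t<\Time$.

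The main obstacle is the rigorous justification that the reweighted measure $\PP$ is the law of the CTMC with the time-inhomogeneous rates $\ints$, including nonexplosion. I would try to handle this by showing that $M_t=h(t,N_t)$ is a bounded positive $\Qp$-martingale, and then applying Girsanov's theorem for point processes with bounded intensity ratio $h(t,x+\basisi)/h(t,x)\le C$, which follows from $\reltarget$ being bounded above and below. An alternative would be to verify directly that the transition probabilities $\pos_{s-t}(y-x)h(s,y)/h(t,x)$ satisfy the Kolmogorov forward equation with rate matrix \eqref{eq:rate_pf} and invoke uniqueness, which again relies on the bounded rates. The remaining steps are routine computations.
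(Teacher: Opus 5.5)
Your proof is correct, and it reaches the paper's key formulas by a different route. The paper constructs the Poisson--F\"ollmer process pathwise as a controlled Poisson point process, with existence by a fixed-point argument. It then derives the forward and backward Kolmogorov equations from the discrete It\^o formula and verifies that $\trans_{t|s}(y|x)=\frac{\h(t,y)}{\h(s,x)}\pos_{t-s}(y-x)$ and $\E[\fn(\cp_{\Time})|\cp_s=x]=\semi_{\Time-s}(\fn\reltarget)(x)/\h(s,x)$ solve them. From these, (a) is $\flow_{\Time}=\h(\Time,\cdot)\pos_{\Time}=\target$, (b) is Bayes' rule with these kernels, and (c) is the backward formula with $\fn(x)=x^{\idx}$ plus $k\pos_s(k)=s\pos_s(k-1)$.

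You instead take the law to be the endpoint reweighting $\reltarget(N_{\Time})\,\diff\Qp$ of the Poisson process. This buys three things. Existence is free. The kernel and the conditional expectation in (c) follow from the abstract Bayes formula and the Markov property of $N$, rather than from solving Kolmogorov equations. And (b) becomes structural, since a density depending only on the endpoint leaves bridges unchanged. The final algebra in (b) and (c) coincides with the paper's.

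The price is identifying the reweighted law with the CTMC defined by the rates $\ints$. This is no heavier than what the paper uses tacitly: its identification of the transition kernel also rests on uniqueness for the forward equation, and both follow from $\ints^{\idx}\le\sup\reltarget/\inf\reltarget$. It is also easier than you suggest. For $s<t$ and $A\in\mathcal F_s$ you have $\PP(A,N_t=y)=\E_{\Qp}[1_A\,\pos_{t-s}(y-N_s)\,\h(t,y)]$. Together with $\diff\PP|_{\mathcal F_s}=\h(s,N_s)\,\diff\Qp|_{\mathcal F_s}$, this gives the Markov kernel $\pos_{t-s}(y-x)\h(t,y)/\h(s,x)$ under $\PP$ directly, and its derivative at $t=s$ yields the rates.

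The paper's pathwise construction, in exchange, realizes the process as a strong solution driven by fixed Poisson point processes. That is what underlies its controlled-point-process picture and the stochastic-calculus tools it reuses elsewhere.

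Your write-up also makes explicit a step the paper leaves implicit in (c). Namely, (a) and (b) together identify $\E_{\PP}[\cp_{\Time}|\cp_t=x]$ with the denoiser $\den$, which is defined through the Binomial joint law rather than through the process.
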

In light of Proposition \ref{prop:key_properties} we can now combine Algorithm \ref{alg:train} and Algorithm \ref{alg:sample} to sample from the data distribution.  Algorithm \ref{alg:train} trains the denoiser $\den$ using Binomial thinning \eqref{enm:bridge}, and given the trained denoiser $\den$, Algorithm \ref{alg:sample} leverages the discrete Tweedie's formula \eqref{enm:Tweedie} to get samples from the data distribution $\target$ \eqref{enm:X_T_target}. 

The next result shows that once the intensity $\ints$ is estimated we automatically get  exact likelihood estimates. Alternatively, the identity below can be used to train the denoiser via maximum likelihood. 
\begin{proposition}[Likelihood estimation]
\label{prop:loglike}
Let $\ints$ be the intensity defined by \eqref{eq:ints_def}-\eqref{eq:ints_j_def}. Then,
\begin{equation}
\label{eq:loglike}
-\log \target(\x)=\int_0^{\Time}\E_{y\sim\B_{\x,t}} \left[\beg\left(\frac{\x-y}{\Time-t},\ints(t,y)\right)\right]\diff t,
\end{equation}
where, for $a\in \N^{\Dim}$,
\begin{equation}
\label{eq:cvx_loglike}
\cvx(a)=\sum_{\idx=1}^{\Dim}a^{\idx}\log a^{\idx},\quad \nabla \cvx(a):=(1+\log a^1,\ldots, 1+\log a^{\Dim}),
\end{equation}
and, for $a,b\in \N^{\Dim}$,
\begin{equation}
\label{eq:beg_div_def}
\begin{split}
\beg(a,b)&:=\cvx(a)-\cvx(b)-\nabla \cvx(b)\cdot(a-b)\\
&=\sum_{\idx=1}^{\Dim}\left[a^{\idx}\log a^{\idx} - a^{\idx} \log b^{\idx}-a^{\idx}+b^{\idx}\right].
\end{split}
\end{equation}
\end{proposition}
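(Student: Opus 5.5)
The plan is to obtain \eqref{eq:loglike} by a Girsanov (change of measure) argument on path space. We compare the Poisson-F\"ollmer process with a reference process and condition both on the endpoint $\cp_{\Time}=\x$. I read $\B_{\x,t}$ in \eqref{eq:loglike} as the thinning law $\B_{\x,\frac{t}{\Time}}$ of \eqref{eq:cond_dist_bino_poss}.

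\textbf{Reference process and Doob transform.} Let $\Qp$ be the law on $\paths$ of the $\Dim$-dimensional unit-rate Poisson counting process started at $0$, so that $\cp_{\Time}\sim\pos_{\Time}$ under $\Qp$. Let $\PP$ be the law of the Poisson-F\"ollmer process. The intensities \eqref{eq:ints_j_def} are exactly those of the Doob $h$-transform of $\Qp$ by the space-time harmonic function $h(t,\x)=\semi_{\Time-t}\reltarget(\x)$, which is normalized since $h(0,0)=\sum_{\x}\reltarget\pos_{\Time}=1$. Hence
\[
\frac{d\PP}{d\Qp}=\reltarget(\cp_{\Time}).
\]
This is the same fact behind \cref{prop:key_properties}\eqref{enm:X_T_target}-\eqref{enm:bridge}, which I take as given. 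In particular, the endpoint-conditioned laws of $\PP$ and $\Qp$ coincide with the Poisson bridge. Under the bridge to $\x$, the time marginals are $\B_{\x,\frac{t}{\Time}}$, and coordinate $\idx$ jumps with compensator intensity $a^{\idx}(t,y)=\frac{\x^{\idx}-y^{\idx}}{\Time-t}$.

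\textbf{Girsanov formula and its endpoint average.} For counting processes, the density of $\PP$ with respect to $\Qp$ is
\[
\log\frac{d\PP}{d\Qp}=\sum_{\idx=1}^{\Dim}\left[\int_0^{\Time}\log\ints^{\idx}(t,\cp_{t-})\,\diff \cp^{\idx}_t-\int_0^{\Time}\big(\ints^{\idx}(t,\cp_t)-1\big)\diff t\right].
\]
Equating this with $\log\reltarget(\cp_{\Time})$ and taking the bridge expectation given $\cp_{\Time}=\x$, I replace $\diff\cp^{\idx}_t$ by its bridge compensator $a^{\idx}\diff t$. This gives
\[
\log\reltarget(\x)=\int_0^{\Time}\E_{y\sim\B_{\x,\frac{t}{\Time}}}\Big[\sum_{\idx}\big(a^{\idx}\log\ints^{\idx}(t,y)-\ints^{\idx}(t,y)+1\big)\Big]\diff t .
\]
The bridge itself is $\Qp$ conditioned on $\{\cp_{\Time}=\x\}$, with density $\mathbf 1_{\{\cp_{\Time}=\x\}}/\pos_{\Time}(\x)$ with respect to $\Qp$. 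Applying the same Girsanov computation to the bridge (intensity $a$) against $\Qp$ and taking bridge expectations yields
\[
-\log\pos_{\Time}(\x)=\int_0^{\Time}\E\Big[\sum_{\idx}\big(a^{\idx}\log a^{\idx}-a^{\idx}+1\big)\Big]\diff t.
\]
One can double-check this directly against $\sum_{\idx}\big(\Time-\x^{\idx}\log\Time+\log\x^{\idx}!\big)$.

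\textbf{Combining.} Subtracting the two identities and using $\log\target=\log\reltarget+\log\pos_{\Time}$ gives
\[
-\log\target(\x)=\int_0^{\Time}\E\Big[\sum_{\idx}\big(a^{\idx}\log a^{\idx}-a^{\idx}\log\ints^{\idx}-a^{\idx}+\ints^{\idx}\big)\Big]\diff t .
\]
The integrand is exactly $\beg(a,\ints)$ from \eqref{eq:beg_div_def}, which is \eqref{eq:loglike}.

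\textbf{Main obstacle.} The delicate part is the rigorous justification of the Girsanov formulas and of replacing $\diff\cp_t$ by its compensator. This requires integrability of $a\log a$ and $a\log\ints$ near $t=\Time$, where $a$ blows up. There are two facts to use. First, on the event $y^{\idx}<\x^{\idx}$ the probability is $O(\Time-t)$, so $\E[a^{\idx}]=\x^{\idx}/\Time$ is bounded. Second, $\ints$ is bounded above and below, since $\reltarget$ is bounded from above and below by positive constants; hence $\log\ints$ is bounded. Finally, $\E[a\log a]$ is integrable because $a$ takes values in $\{k/(\Time-t)\}$ with $\PP(a=k/(\Time-t))=O((\Time-t)^k)$. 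I would first handle the identities on $[0,\Time-\epsilon]$, where everything is bounded, and then pass to $\epsilon\to0$ by monotone/dominated convergence.
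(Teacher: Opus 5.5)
Your argument is correct, but it takes a genuinely different route from the paper.

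\emph{The paper's route.} The paper does not work on path space for this proposition. With $T=1$ it writes $-\log\mu(x)=\mathrm{KL}(\B_{x,1}\,|\,p_1)-\mathrm{KL}(\B_{x,0}\,|\,p_0)$ and differentiates $t\mapsto\mathrm{KL}(\B_{x,t}\,|\,p_t)$. It evaluates this derivative by hand, using the Binomial identities for $\partial_t\B_{x,t}$ and its ratios, the marginal formula $p_t=h(t,\cdot)\pi_t$ of Corollary~\ref{cor:time_marg}, and equation \eqref{eq:h_log_pde}. A re-indexing then produces $\beg(a,\lambda)$ with $a=(x-y)/(1-t)$, pointwise in $t$.

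\emph{Your route.} You realize $-\log\mu(x)$ as the relative entropy of the Poisson bridge to $x$ with respect to the Poisson--F\"ollmer law, computed through the unit-rate Poisson reference. Both $\log f(x)$ and $-\log\pi_T(x)$ become bridge expectations of log-densities, with $\mathrm{d}X^i$ compensated by the bridge intensity $a^i$.

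Both of your log-density identities in fact hold pathwise, by the discrete It\^o formula (Lemma~\ref{lem:ito}). Apply it to $\log h(t,X_t)$, using \eqref{eq:h_log_pde} and $\partial_i\log h=\log\lambda^i$, and to $\log\pi_{T-t}(x-X_t)$ along bridge paths. So no abstract Girsanov theorem is needed. The only analytic input is the compensator identity under the bridge, whose intensity is unbounded near $t=T$. Your integrability checks handle this, together with truncation at $T-\epsilon$:
\begin{itemize}
\item $\log\lambda$ is bounded;
\item $\mathbb E\int_0^T a^i\,\mathrm{d}t=x^i$;
\item $\mathbb E[a^i\log a^i]=O(\log\frac{1}{T-t})$.
\end{itemize}

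\emph{What each buys.} Your route explains \emph{why} the integrand is a Bregman divergence: it is the Girsanov entropy rate between the intensities $a$ and $\lambda$. The path KL collapses to $\mathrm{KL}(\delta_x\,|\,\mu)$ because both laws are $h$-transforms of the Poisson process and therefore share its bridges. Your argument also extends verbatim to other $h$-transformed counting processes, and it explicitly addresses the singularity at $t=T$, which the paper does not discuss. The paper's route is more elementary, using only time marginals and no stochastic calculus. It also directly yields the intermediate identity that the integral over $[0,t]$ equals $\mathrm{KL}(\B_{x,t/T}\,|\,p_t)$ for every $t$.
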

\subsection{The many facets of the Poisson-F\"ollmer process}
\label{subsec:prop_PF}
The previous sections described the Poisson-F\"ollmer process by explicitly defining its rate matrix. In this section we provide equivalent descriptions of the process which, while interesting theoretically, are not needed for the implementation of our Binomial flow discrete diffusion model. 

The first description goes through the notion of a \Def{controlled Poisson point process}. This description is made rigorous in \cref{sec:PF}, and here we will provide the rough idea which is portrayed in \cref{fig:X}. We begin with $\Dim$ independent Poisson point processes in $[0,\Time]\times [0,\ub]$ (for large enough constant $\ub$), and $\Dim$ curves $(\ints^{\idx}_t)_{t\in [0,\Time]}$ for $\idx=1,\ldots,\Dim$. The $\idx$th coordinate $(\cp_t^{\idx})_{t\in [0,T]}$ of $(\cp_t)_{t\in [0,T]}$ is a non-decreasing process in $\N$, starting at $\cp_0^{\idx}=0$, whose value at time $t$ is equal to the number of points from the $\idx$th Poisson point process that fall in the region under the curves $\ints^{\idx}$ by time $t$. Each of the curve $\ints^{\idx}$ is chosen stochastically, depending on the values of all the coordinates of $(\cp_t)_{t\in [0,T]}$. These curves can be chosen to guarantee that at time $\Time$ the process follows the data distribution, $\cp_{\Time}\sim\target$. 
\begin{figure}
\begin{centering}
\begin{tikzpicture}[scale=1]
\draw[->] (0,0) -- (5,0);
  \draw[->] (0,0) -- (0,5);
  \draw[dashed] (0,4.8) -- (5,4.8);
  \draw (1.4,0.2) -- (1.4,-0.2);
\draw [black] plot [smooth, tension=1] coordinates { (0,.5) (1,4) (3.5,2) (4.2,4.5) (5,3.8)};

\draw [black] plot [smooth, tension=1] coordinates { (0,.5) (1.5,3) (2.9,3.3) (3.5,3) (3.8, 2.8) (4, 2.2) (5,2.4)};

\node at (-0.4,4.8) {$\ub$};
\node at (5,-0.3) {$\Time$};
\node at (1.4,-0.4) {$0.5$};
\node at (6,3.8) {$(\ints_t^1)_{t\in [0,\Time]}$};
\node at (6,2.4) {$(\ints_t^2)_{t\in [0,\Time]}$};
\draw[red,] (0.2,4) circle (.3ex);
\draw[red,fill=red] (1,1) circle (.3ex);
\draw[red,fill=red] (2,2) circle (.3ex);
\draw[red,] (2.8,3) circle (.3ex);
\draw[red,fill=red] (3.7,1) circle (.3ex);
\draw[red,] (3.8,3.3) circle (.3ex);
\draw[red,fill=red] (4.5,4) circle (.3ex);

\draw[blue,fill=blue] (0.3,0.2) circle (.3ex);
\draw[blue,fill=blue] (0.8,0.8) circle (.3ex);
\draw[blue,] (2.4,4) circle (.3ex);
\draw[blue, fill] (3,2.8) circle (.3ex);
\draw[blue, ] (4.8,3) circle (.3ex);

\end{tikzpicture}
\caption{The points in $[0,\Time]\times [0,\ub]$ are generated according to $\Dim=2$ independent standard Poisson process ({\color{red}red} and {\color{blue}blue});  {\color{red} 7 red} points and {\color{blue} 5 blue}  points. At time $t\in [0,\Time]$ the value of the process {\color{red}$\cp_t^1$} (res. {\color{blue}$\cp_t^2$}) is equal to the number of points under the curve $\ints^1$ (res.  $\ints^2$) which are denoted as {\color{red}red}  (res. {\color{blue}blue} ) \textbf{filled} circles. In the figure {\color{red}$\cp_{t=0.5}^1=1$} and {\color{red}$\cp_{t=\Time}^1=4$} (res. {\color{blue}$\cp_{t=0.5}^2=2$} and {\color{blue}$\cp_{t=\Time}^2=3$}).}
\label{fig:X}
\end{centering}
\end{figure} 

The second description is that of a \Def{Schr\"odinger bridge} or \Def{Doob $\h$-transform}. In \cref{sec:schrondinger} we show that the Poisson-F\"ollmer process is a special case of a \Def{Schr\"odinger bridge}: Of all processes starting at 0 whose distribution at time $\Time$ is the data distribution $\target$, the Poisson-F\"ollmer process is the closest to a $\Dim$-dimensional Poisson process. Alternatively, it can be described as a \Def{Doob $\h$-transform}, where a standard Poisson process in $\N^{\Dim}$ is conditioned to be distributed like the data distribution $\target$ at time $\Time$ (rather than $\pos_{\Time}$); see 
\cite{MR3121631}. The connection between the Poisson-F\"ollmer process and the Doob $\h$-transform goes through
\begin{equation}
\label{eq:stoch_int}
\ints^{\idx}(t,\x)=\frac{\h(t,\x+\basisi)}{\h(t,\x)},\qquad \idx=1,\ldots, \Dim,
\end{equation}
where the $\h$-transform is given by $\h(t,\x)=\semi_{\Time-t}\reltarget(\x)$.

The third description is that of \Def{time reversal}. Define the \Def{forward process} $\vec{Z}_t:=\cp_{\Time-t}$ to be a \Def{Poisson bridge} which starts at $\vec{Z}_0\sim\target$ and terminates at $\vec{Z}_{\Time}=0$. Then, the Poisson-F\"ollmer process  $\cp_t=:\cev{Z_t}$ is the \Def{reverse process} obtained by the time-reversal of $\vec{Z}_t$. Hence, from the perspective of the time-reversal approach in discrete diffusions \cite{lou2024discrete}, our goal should be learn the discrete score $\left[\frac{q_t(y)}{q_t(\x)}\right]_{y\neq \x}$, where $q_t$ is the distribution of $\vec{Z}_t$. In  \cref{sec:time_rev} we show that this task is equivalent to the task of learning the intensity $\ints$. 
\begin{remark}[Gaussian analogues: The F\"ollmer process]
The Poisson-F\"ollmer process has a continuous analogue, the \Def{F\"ollmer process}, which shares many of its properties, as we explain in \cref{sec:gaussian}. As the Poisson-F\"ollmer process, the F\"ollmer process starts at a deterministic point, in contrast to most diffusion models which begin at a Gaussian distribution.  
\end{remark}

\section{Numerical experiments}
\label{sec:exp}
In this section we present our results from training a denoiser given samples from a discrete target distribution $\target$, to generate new samples from $\target$.  
\cref{sec:synthetic} presents the results on synthetic datasets, while \cref{sec:CIFAR} evaluates our method on the CIFAR-10 dataset. In all of our experiments we learn the denoiser $\den_{\theta}$ 
by minimizing the weighted squared error loss 
\begin{equation}
\label{eq:denoiser_obj_quad_exp}
\min_{\theta}\int_0^{\Time}\int_{\Space} w(t) \left|\x_{\Time}-\den_{\theta}(t,\x_t)\right|^2 \flow_{t|\Time}(\x_t|\x_{\Time})\target(\x_{\Time})\diff t
\end{equation}
over an appropriate class of neural networks. 
After learning the denoiser, we define the rate function as  
$\ints_{\theta}(t,\x_t) = \frac{\den_{\theta}(\x_t,t) - \x_t}{\Time - t}.$
We use the rate to generate new samples using \cref{alg:sample} with either an Euler or $\tau-$leaping method as in Algorithms~\ref{alg:euler} or~\ref{alg:tau}, respectively. In the following experiments we choose $T = 1$.

\begin{algorithm}[tb]
  \caption{Euler sampler}
  \label{alg:euler}
  \begin{algorithmic}
    \STATE \textbf{Input:} Initial state $\x_0\in\N^{\Dim}$, rate $\ints:[0,\Time]\times \N^{\Dim}\to \R^{\Dim}$, time step $\Delta t$
    \STATE \textbf{Initialize:} $t \gets 0$, $\x \gets \x_0$
    \WHILE{$t < \Time$}
        \STATE Evaluate rates: $\lambda(x,t)$
        \STATE Build rates:\,$R^{\idx}(\x^{\idx}\!+\!1)\!\gets\!\ints^{\idx}{(x,t)}$,\,$R^{\idx}(\x^{\idx})\!\gets\!-\ints^{\idx}{(x,t)}$, $R^{\idx}(y^{\idx}) \gets 0$ for all $y^{\idx}\in\N\backslash\{\x^{\idx},\x^{\idx}+1\}$
        \STATE Define probabilities: $P^{\idx}(y^{\idx}) \gets \delta_{\x^{\idx}}(y^{\idx}) + \Delta t \, R^{\idx}(y^{\idx})$ 
        \STATE Clip probabilities so that $P^i(y^i) \geq 0$ and re-normalize
        \STATE Sample new state: $\x^{\idx} \sim \texttt{Cat}\!\left(P^{\idx}\right)$ for each $i$
        \STATE Update time: $t \gets t + \Delta t$   
    \ENDWHILE
  \end{algorithmic}
\end{algorithm}

\begin{algorithm}[tb]
  \caption{$\tau$-leaping sampler}
  \label{alg:tau}
  \begin{algorithmic}
      \STATE \textbf{Input:} Initial state $\x_0\in\N^{\Dim}$, rate $\ints:[0,\Time]\times \N^{\Dim}\to \R^{\Dim}$, time step $\Delta t$
      \STATE \textbf{Initialize:} $t \gets 0$, $x \gets x_0$
      \WHILE{$t < T$}
          \STATE Evaluate intensities: $\lambda(x,t)$
          \STATE Sample increments: $y \sim \Poisson_{\lambda(x,t) \Delta t}$
          \STATE Update state: $x \gets x + y$
          \STATE Update time: $t \gets t + \Delta t$
      \ENDWHILE
  \end{algorithmic}
\end{algorithm}

\subsection{Synthetic data} \label{sec:synthetic}

In this section we apply our approach to various $\Dim=1$-dimensional synthetic distributions for non-negative ordinal data that are considered in~\cite{DBLP:journals/corr/abs-2505-05082}. These distributions are selected to display characteristics such as bimodality and heavy tails. The definition of each distribution and details on the model architecture and training of the denoiser for this experiment can found in \cref{app:additional_numerics}. 

We learn the model parameters by minimizing the loss~\eqref{eq:denoiser_obj_quad_exp} with uniform time-sampling and  weight $w(t) = (1 - t)^{-0.5},$ which encourages the model to more accurately approximate the true denoiser near the data distribution at $t = 1$. We learn the denoiser using 50,000 i.i.d.\ samples $x_1$ from the data distribution and generate $10,000$ i.i.d.\ samples from the learned model using an Euler sampler (Algorithm~\ref{alg:euler}) with 1000 time-steps. 

True samples and generated samples from the predicted distributions are plotted in Figure~\ref{fig:synthetic_distributions}, showing that Binomial flows closely captures both bimodal features, heavy tails and distributions with varying support. 

For each distribution, we also evaluate the performance quantitatively. We use the likelihood identity in~\eqref{eq:loglike} to evaluate the negative log-likelihood (NLL) of the true samples under the predicted distribution. We use a Monte Carlo estimator with $1000$ samples to evaluate the NLL for each data sample $x_1 \sim \mu$ and compute the average NLL over $10,000$ data samples. Table~\ref{tab:performance_comparison_NLL} reports the average and standard error of the NLL over five training instances. For most examples, we observe close agreement to the true NLL evaluated by using the exact evaluations of the target distribution. Table~\ref{tab:performance_comparison} in \cref{app:additional_numerics} also compares the $W_1$ distance between the generated and true data samples to the results presented in~\citep{DBLP:journals/corr/abs-2505-05082}. For the ZIP, Zipf and Yule-Simon distributions we observe statistically significant closer results to the target with Binomial flows.

\begin{figure}[!ht]
    \centering
    \hspace{0.5cm} \textbf{Poisson Mixture} \hspace{0.9cm} \textbf{Zero-Inflated Poisson} \\
    \includegraphics[width=0.45\linewidth]{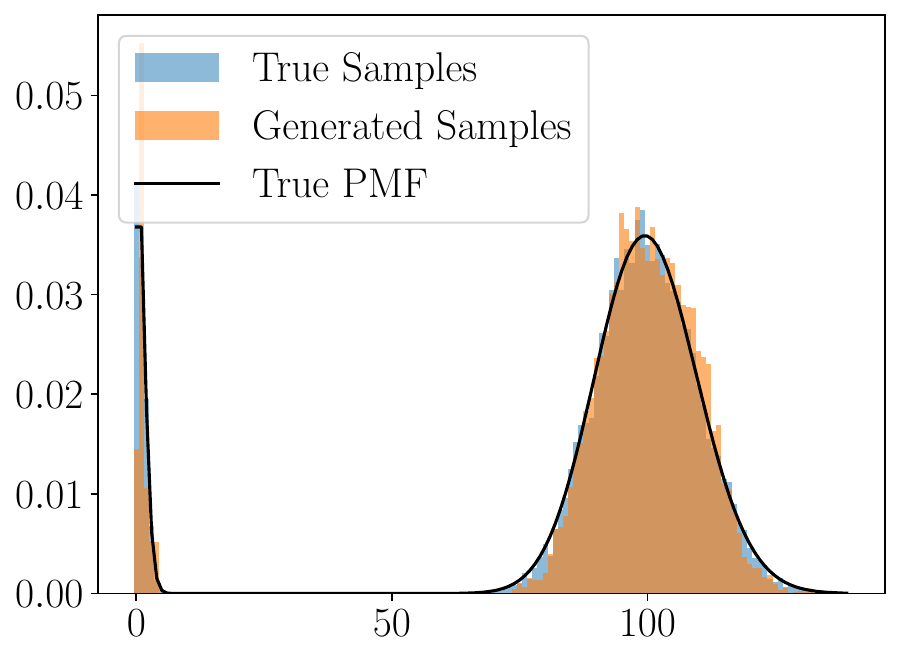}
    \includegraphics[width=0.45\linewidth]{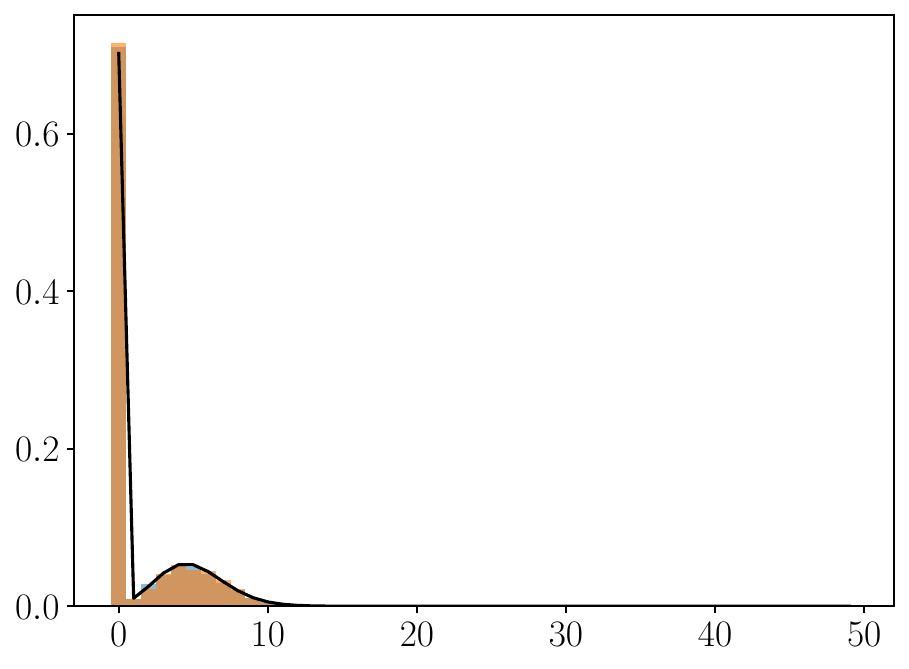}\\
    \hspace{0.6cm} \textbf{NegBinomial Mixture} \hspace{0.2cm} \textbf{Beta-Negative-Binomial} \\
    \includegraphics[width=0.45\linewidth]{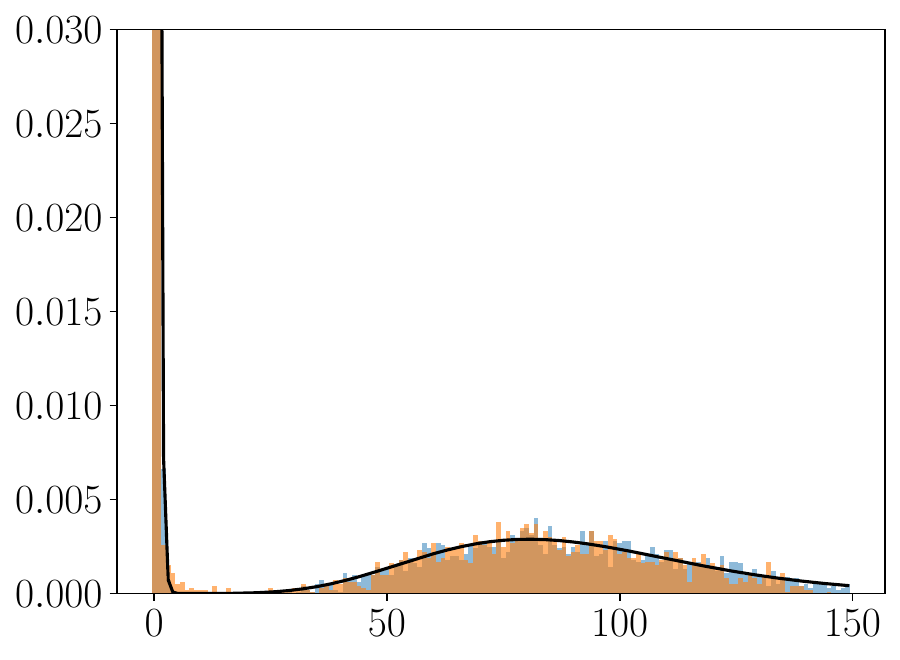}
    \includegraphics[width=0.45\linewidth]{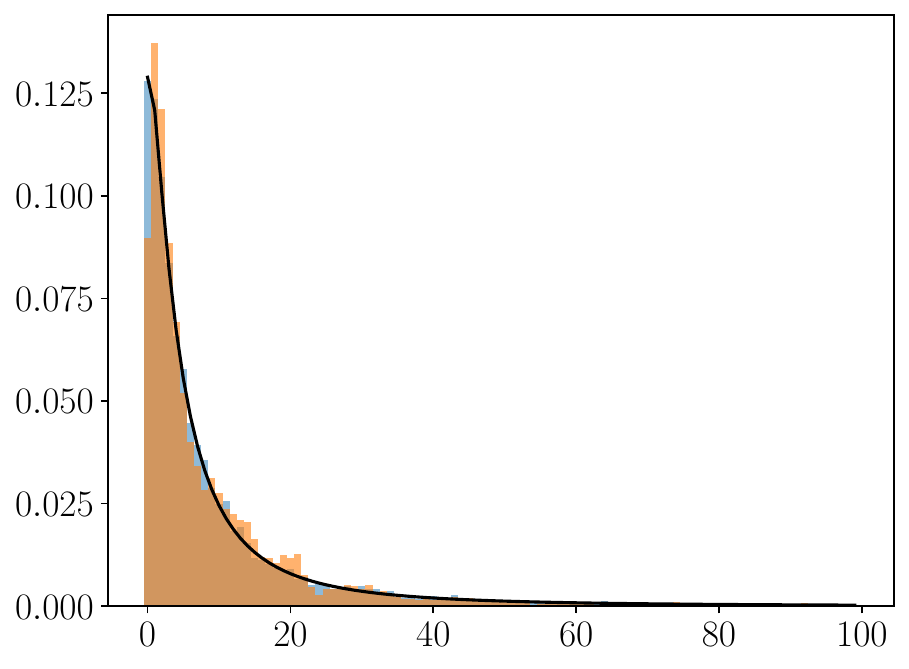}
    \caption{Target dataset, generated samples using Binomial Flows and the true target PMF from four synthetic datasets in Section~\ref{sec:synthetic} \label{fig:synthetic_distributions}}
\end{figure}

\begin{table}[!ht]
\centering
\caption{NLL evaluations for the target samples under the predicted and true distributions, showing close agreement.}
\label{tab:performance_comparison_NLL}
\begin{tabular}{lccc}
\midrule
\textbf{Problem} 
& \textbf{Binomial Flows} & \textbf{True NLL} \\  \\
\midrule
Poisson & $2.28 \pm 0.06$ & $2.21 \pm 10^{-4}$ \\ 
Poisson Mixture & $4.96 \pm 0.80$ & $3.80 \pm 10^{-4}$ \\
ZIP & $1.28 \pm 0.02$ & $1.25 \pm 10^{-4}$ \\ 
NBM & $5.34 \pm 2.21$ & $1.70 \pm 10^{-4}$ \\
BNB & $4.03 \pm 0.34$ & $3.25 \pm 10^{-4}$ \\ 
Zipf & $2.19 \pm 0.04$ & $1.96 \pm 10^{-4}$ \\ 
Yule-Simon & $1.33 \pm 0.07$ & $1.22 \pm 0.01$ \\ 
\end{tabular}
\end{table}

\subsection{CIFAR-10} \label{sec:CIFAR}

We evaluate our method on the CIFAR-10 dataset using the EDM framework~\citep{karras2022elucidating}.
We adopt the DDPM++ architecture~\citep{song2021scorebased} for the denoiser, data pipeline, training loop, and FID computation from the EDM codebase without modification. The following subsections provide some information on our training and sampling for CIFAR-10, and we refer to \cref{sec:experiments} for more information.

\subsubsection{Preconditioning and Training}
\label{subsubsec:precond_train}
We adapt EDM's preconditioning strategy to the Binomial flow setting. We emphasize that this is possible due to our ability to work with the quadratic loss \eqref{eq:denoiser_obj_quad_exp}. We parametrize the denoiser as
\begin{equation}
\label{eq:mf}
\den_{\theta}(\x_t,t)=c_{\textnormal{skip}}(t)\x_t+c_{\textnormal{out}}(t)F_{\theta}(c_{\text{in}}(t) x_t + s_{\text{in}},t),
\end{equation}
with a neural network $F_{\theta}$, where the parameters $c_{\textnormal{skip}}(t), c_{\textnormal{out}}(t), c_{\text{in}}(t), s_{\text{in}}$ are chosen in a way which stabilizes training.  We provide more details in \cref{subsec:preconditioning_appen} and summarize the parameters in \cref{tab:pf-edm}.

\subsubsection{Time Parameterization}
\label{subsubsec:time_repa}

We parameterize time using the noise level $\sigma = -\log(t + \varepsilon_\text{noise})$ for $\varepsilon_\text{noise} = 10^{-5}$, which is provided to the neural network as the time argument. 
This logarithmic parameterization is crucial for the Binomial flow, as it provides appropriate resolution near $t \approx 0$ where the signal-to-noise ratio changes rapidly.
\cref{fig:denoiser} illustrates the denoiser's behavior at $t=0.001$. 
Even though $x_t$ appears nearly indistinguishable from noise and $t \approx 0$, the denoiser already reconstructs the overall long-range structure of the image.
This should be compared with the case $t = 0$, where $\E[\cp_1|\cp_0] = \E[\cp_1]$.
Additional denoiser plots are shown in \cref{subsec:add_fig}.

\cref{fig:denoiser_evo} demonstrates that uniform discretization in noise space $\sigma$ yields a more natural progression of the generative process compared to uniform discretization in time $t$.
\begin{figure}
\centering
\includegraphics[width=0.9\linewidth]{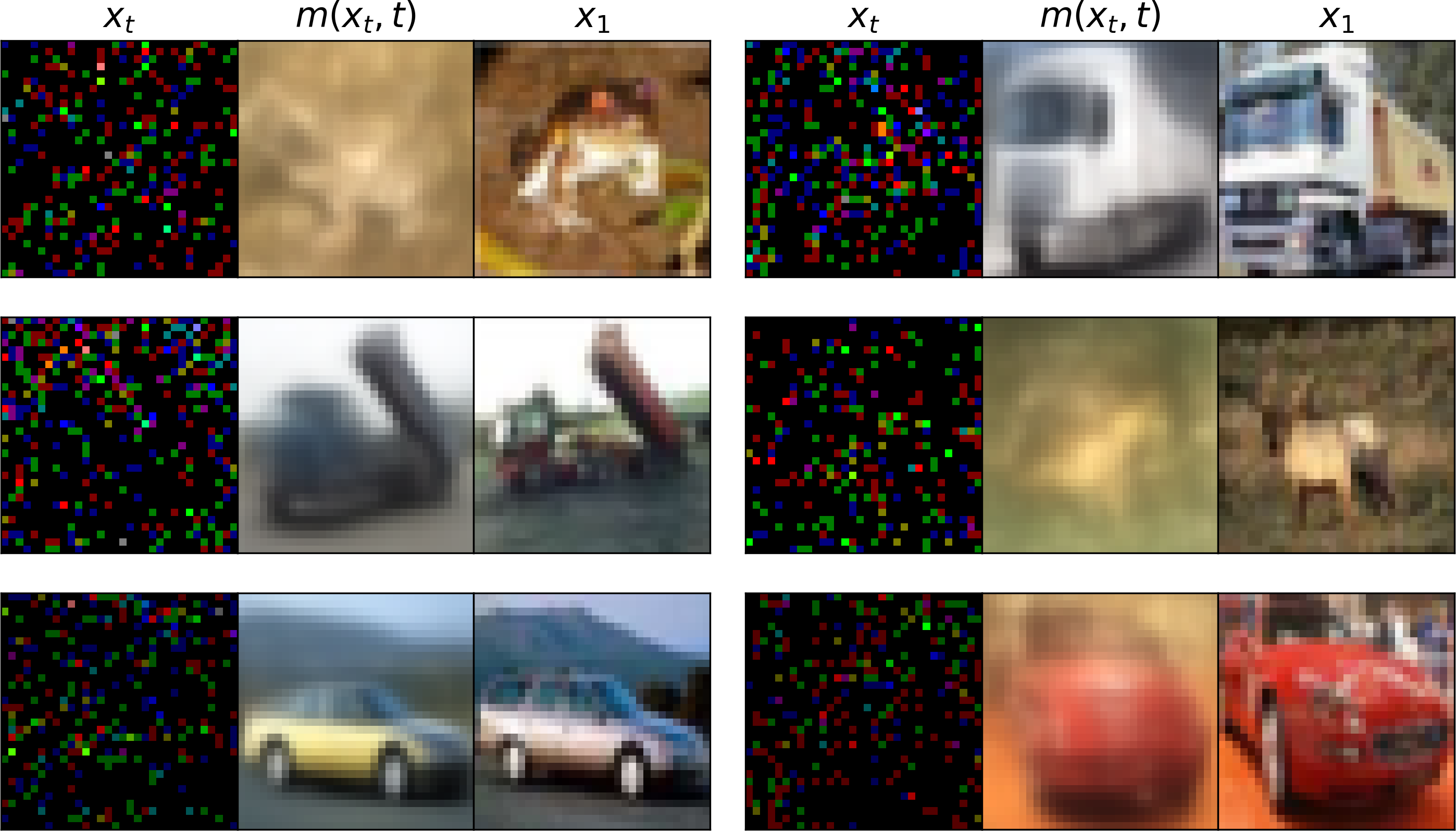}
\caption{Denoising at early time $t=0.001$. Left column: noisy observations $x_t \sim \text{Binomial}_{x_1, t}$ (normalized to fill the dynamic range). Middle column: denoiser predictions $m(x_t, t) \approx \mathbb{E}[X_1|X_t=x_t]$. Right column: clean images $x_1$ from CIFAR-10. Two groups, six samples.}\label{fig:denoiser}
\end{figure}

\begin{figure}[!htb]
\centering
\includegraphics[width=0.98\linewidth]{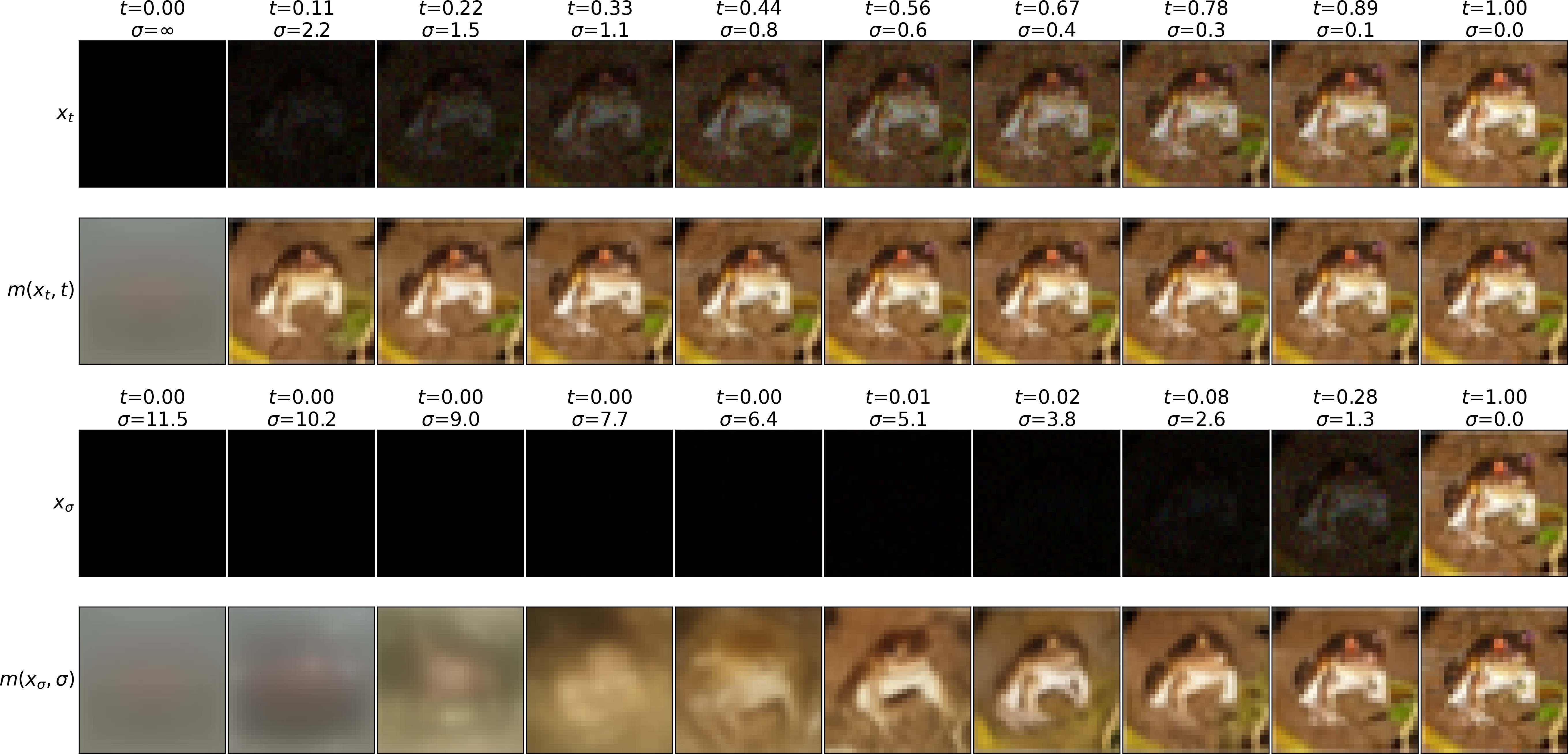}
\caption{Evolution of denoising across time. Comparison of uniform discretization in time $t$ (top) versus noise level $\sigma$ (bottom). The logarithmic parameterization provides better coverage of the denoising trajectory with a focus on the early stages. All images not normalized.}\label{fig:denoiser_evo}
\end{figure}

\subsubsection{Noise Sampling}
\label{subsubsec:noise_sample}
To design an effective noise sampling strategy for training, we analyze where the learned denoiser improves over a simple baseline.
We first derive the optimal affine baseline $\mathrm{b}(t) := b_{\text{skip}}(t) \cp_t + b_{\text{out}}(t) \mu_{\text{data}}$ that minimizes the expected squared error $\loss_t(\mathrm{b}) = \E[|\cp_1 - \mathrm{b}(t)|^2]$ at each time $t$; see Claim \ref{cl:opt_b_bin}.
We plot the time evolution of all scaling coefficients in \cref{subsec:add_fig}.

Given a trained denoiser $\den$ we compute the improvement $\loss_\sigma(\mathrm{b}) - \loss_\sigma(\den)$ as a function of the noise level $\sigma$.
This quantity measures how much the neural network improves over the baseline at each noise level and thus indicates which noise levels contribute most to learning.
As shown in \cref{fig:time}, the normalized improvement curve is well-approximated by a Gaussian distribution in $\sigma$-space.
This observation motivates sampling noise levels from a truncated Gaussian $\sigma \sim \mathcal{N}(\mu_\sigma,\gamma^2_\sigma)\mathbf{1}_{[0,-\log(\varepsilon_\text{noise})]}(\sigma)$ during training, with $\mu_\sigma$ and $\gamma_\sigma$ chosen to match the empirical improvement curve.
\cref{fig:time} compares this distribution to uniform sampling in time ($t \sim
\mathcal{U}[0,1]$), showing that the Gaussian strategy allocates more training budget to
noise levels where improvement is achievable.

\subsubsection{Results}

\begin{figure}[!htb]
\centering
\includegraphics[width=0.9\linewidth]{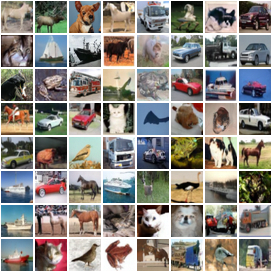}
\caption{Unconditional CIFAR-10 samples generated using the Poisson-Föllmer process with EDM preconditioning.}\label{fig:cifar10}
\end{figure}

\cref{fig:cifar10} shows unconditional samples from our model, demonstrating diverse and
realistic image generation.
We generate samples using the $\tau$-leaping sampler (\cref{alg:tau}) with 1024
uniformly-spaced time steps.
While this number of function evaluations (NFE) is higher than state-of-the-art continuous
diffusion models (e.g., EDM achieves lower FID with only 36 NFE), it is consistent with
discrete diffusion methods~\citep{gat2024discrete,chao2025mdmprime} and earlier continuous
models~\citep{song2021scorebased}.
The denoising trajectories in \cref{fig:denoiser_evo} suggest room for improved time
discretization schedules, which we leave to future work.
\cref{tab:cifar10_fid} summarizes quantitative comparisons against both continuous and
discrete diffusion baselines.

\robustify\bfseries

\begin{table}[t]
\centering
\caption{CIFAR-10 unconditional generation results. FID (lower is better) is based on 50K generated samples.
Methods are grouped by data representation: continuous (top) and discrete (bottom).}
\label{tab:cifar10_fid}
\begin{tabular}{l S[text-series-to-math, table-format=2.2]}
\toprule
Method & {FID $\downarrow$} \\
\midrule
\multicolumn{2}{l}{\textit{Continuous data models}} \\
EDM \citep{karras2022elucidating} & 1.97 \\
ScoreSDE \citep{song2021scorebased} & 2.20 \\
IDDPM \citep{pmlr-v139-nichol21a} & 2.90 \\
DDPM \citep{ho2020denoising} & 3.17 \\
\midrule
\multicolumn{2}{l}{\textit{Hybrid data models}} \\
CADD\textsuperscript{$\dagger$} \citep{zheng2025cadd} & 2.88 \\
\midrule
\multicolumn{2}{l}{\textit{Discrete (ordered) data models}} \\
\textbf{Binomial flow (Ours)} & \bfseries 2.94 \\
Blackout \citep{santos2023blackout} & 4.58 \\
LTJ \citep{chen2023learning} & 4.80 \\
ItDPDM \citep{DBLP:journals/corr/abs-2505-05082} & 4.84 \\
\midrule
\multicolumn{2}{l}{\textit{Discrete (categorical) data models}} \\
MDM-Prime \citep{chao2025mdmprime} & 3.26 \\
Discrete FM \citep{gat2024discrete} & 3.63 \\
$\tau$-LDR-10 \citep{DBLP:journals/corr/abs-2205-14987} & 3.74 \\
MDM \citep{chao2025mdmprime} & 4.66 \\
MDM-Mixture \citep{chao2025mdmprime} & 4.80 \\
D3PM (Gauss.) \citep{austin2021structured} & 7.34 \\
CTDD-DG \citep{nisonoff2025unlocking} & 7.86 \\
\bottomrule
\end{tabular}

\vspace{1mm}
{\footnotesize
\textit{Note\:}
CADD\textsuperscript{$\dagger$} is an hybrid discrete diffusion augmented with a paired continuous latent diffusion.}
\vspace{-0.2cm}
\end{table}

Our Binomial Flow approach achieves an FID of 2.94, demonstrating highly competitive performance. 
Notably, while continuous models operating on real-valued data achieve slightly better scores, our method outperforms the other discrete diffusion baselines.

\section{Related work}
\label{sec:related}

In the continuous setting, diffusion models were gradually developed over the recent decade  \cite{ho2020denoising, song2021scorebased}. Later, a flow-matching perspective was developed in the concurrent works \cite{lipman2023flow, albergo2023building, liu2023flow}. Discrete diffusions in discrete time were systematically considered in \cite{austin2021structured}, though their origin can be found in \cite{hoogeboom2021argmax} and \cite{song2021denoising}. The first paper to introduce discrete diffusions in continuous time as Continuous Time Markov Chains is \cite{DBLP:journals/corr/abs-2205-14987}. Mirroring the historical developments of diffusion models in continuous spaces, the early versions of discrete diffusions were trained using cross-entropy and evidence lower bound (ELBO) based objectives. The first paper to consider a (discrete) score based approach was \cite{lou2024discrete}, and a flow matching perspective was given in  \cite{campbell2024generative} and \cite{gat2024discrete}. 

The above approaches focus on learning the discrete score or the rate (analogous to velocity in the continuous setting), but not the denoiser (which is not a well-defined object in general discrete spaces). For binary data, e.g., $\Space=\{0,1\}^{\Dim}$, \cite{bach2025sampling, pham2025discrete} built a framework that learns both a denoiser and a discrete score. For nonnegative ordered data, as in our case, \cite{chen2023learning, DBLP:journals/corr/abs-2505-05082, montanari2023sampling} learn both a denoiser and a rate via a discrete Tweedie's formula, but a key difference is that the conditional distributions in these works are \emph{Poisson}, $\trans_{t|\Time}(\cdot|\x_{\Time})=\Poisson_{t\x_{\Time}}$, rather than \emph{Binomial} as in our work. These two types of perturbations are of different nature. For example, the Binomial perturbation recovers the clean data at $t=\Time$, while the Poisson perturbation only does so in limit $\Time\to\infty$. Additionally, the Binomial perturbation is one-sided since $\XX_t|\XX_{\Time}$ is guaranteed to be smaller than $\XX_{\Time}$, unlike the Poisson perturbation which can ``overshoot" the effective support of the data. Finally, the Binomial perturbation leads to a sampling process, namely the Poisson-F\"ollmer process, with desirable properties; cf. Section \ref{subsec:prop_PF}. We note that \cite{DBLP:journals/corr/abs-2505-05082} inspired us to find our exact likelihood formula~\eqref{eq:loglike}. Overall, it remains interesting to better understand the advantages and disadvantages of the Binomial vs. Poisson perturbations. Closest to approach is \citep{santos2023blackout} which uses Binomial thinning as we do, but lacks the discrete Tweedie's formula, and hence the connection to the Poisson-F\"ollmer process. The lack of the discrete Tweedie's formula leads \citep{santos2023blackout} to rely on a likelihood-base loss while our general framework allows for all Bregman divergences, which in turns lets us mirror the EDM framework~\citep{karras2022elucidating}.   

The origin of the Poisson-F\"ollmer process can be found in \cite{MR2841073}, but the explicit construction described in Section \ref{subsec:PF} was first given in dimension $\Dim=1$ in  \cite{MR3904394}. More generally, a Poisson processes perspective on discrete diffusion models can be found in \cite{ren2025how} and \cite{pham2025discrete}. Both the  Poisson-F\"ollmer, and its Gaussian analogue the F\"ollmer process, are special cases of Schr\"odinger bridges \cite{MR3121631}. In the context of generative modeling, the question of learning Schr\"odinger bridges in the continuous setting was taken by a number of works \cite{tzen2019theoretical, bortoli2021diffusion, JMLR:v24:23-0527, shi2023diffusion, chen2024probabilistic}, and in the discrete setting by \cite{kim2024discrete, ksenofontov2025categorical}.

\section*{Conclusions and outlook}

This work introduces Binomial flows as an analogue of Gaussian-based denoising for discrete diffusion models with non-negative ordinal data. The flows are defined by a denoiser, which can be expressed as the rate function for a Poisson-F\"ollmer process. Moreover, we show the denoiser can be learned by minimizing a mean-squared error loss given target data, and it provides exact likelihood estimation. While our framework obtains low FID values on the CIFAR-10 dataset, we observed higher sensitivity of FID to the classifier-free guidance (CFG) parameter for generation on the ImageNet dataset (with FID values around ten). In the future, it is of interest to study the interaction of our denoiser with CFG and identify optimal sampling parameters (e.g., time schedules for $\tau$-leaping) that minimize the computational cost %
of inference with our method. Finally, it will be interesting to investigate whether the martingale property of the denoiser $\E[\den(t,\cp_t)|\cp_s]=\den(s,\cp_s)$ can be used to improve traningin as in \cite{daras2023consistent}.

\bibliography{binomial}
\bibliographystyle{alpha}

\newpage
\appendix
\onecolumn

\subsection*{Content of Appendices} \cref{sec:poisson_semigroup} covers some preliminaries on discrete calculus and the Poisson semigroup.  \cref{sec:poisson_point} explains how to construct counting processes out of Poisson point processes, and \cref{sec:PF} builds on these ideas to construct the Poisson-F\"ollmer process. \cref{sec:kolmogorov} derives the forward and backward Kolmogorov equations for the Poisson-F\"ollmer process. In \cref{sec:proofs} we use this preliminary work to complete the proofs of Proposition \ref{prop:key_properties} and Proposition \ref{prop:loglike}. In \cref{sec:schrondinger} we explain the connection between the Poisson-F\"ollmer process and Schr\"odinger bridges, and in \cref{sec:time_rev} we explain the connection to time-reversal.  \cref{sec:gaussian} reviews the analogous construction of the Poisson-F\"ollmer process in the continuous setting.  \cref{app:additional_numerics} and \cref{sec:experiments} contain additional information about the experiments with synthetic and imaging data, respectively.

\section{The Poisson semigroup}
\label{sec:poisson_semigroup}

We begin in \cref{subsec:prelim} by establishing our notation and by covering some preliminaries in discrete calculus. We then define the Poisson semigroup in \cref{subsec:poisson_semigroup} and derive some of its basic properties.

\subsection{Discrete calculus}
\label{subsec:prelim}
We let $\Dim\in \N$ be the dimension and write $[\Dim]:=\{1,\ldots,\Dim\}$. For $x,y\in\Z^{\Dim}$ we write $x\le y$ if $x^{\idx}\le y^{\idx}$ for all $\idx\in [\Dim]$. We use the notation $\x=(x^{\idx},\x^{-\idx})=(\x^1,\ldots,\x^{\idx-1},\x^{\idx},\x^{\idx+1},\ldots,\x^{\Dim})$ so that $\x^{-\idx}$ stands for all the coordinates in $\x$ except for $\x^{\idx}$. We let $\{\basisi\}_{\idx=1}^{\Dim}$ be the one-hot vectors, that is, $\basisi$ is the vector of all zeros except for the $\idx$th entry which is equal to 1. The discrete partial derivative $\der_{\idx}$ of a function $\fn:\Z^{\Dim}\to \R$ is defined as, for $\idx\in [\Dim]$ and $\x\in\Z^{\Dim}$,
\begin{equation}
\label{eq:partial_der}
\der_{\idx}\fn(\x):=\fn(\x+\basisi)-\fn(\x).
\end{equation}
Analogous to the continuous setting there is an ``integration by parts" formula on $\Z^{\Dim}$. 
\begin{lemma}[Summation by parts]
Let $\fn,H:\Z^{\Dim}\to \R$. Then, for each $\idx\in[\Dim]$, 
\begin{equation}
\label{eq:sum_by_parts}
\sum_{\x\in \Z^{\Dim}}\fn(x)\der_{\idx}H(\x)=-\sum_{\x\in \Z^{\Dim}}H(\x)\der_{\idx}\fn(x-\basisi).
\end{equation}
\end{lemma}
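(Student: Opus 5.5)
The identity is a reindexing of the sum: substitute $\x\mapsto \x-\basisi$ in one of the two sums produced by expanding $\der_{\idx}H$. Since $\basisi$ translates $\Z^{\Dim}$ bijectively onto itself, this substitution is legitimate once the sums converge absolutely. I will therefore assume (implicitly, as in the statement) that $\sum_{\x}|\fn(\x)H(\x+\basisi)|$ and $\sum_{\x}|\fn(\x)H(\x)|$ are finite, for instance because one of $\fn,H$ has finite support, which is the case in the applications. This makes every rearrangement below justified.

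First expand the left-hand side:
\[
\sum_{\x}\fn(\x)\der_{\idx}H(\x)=\sum_{\x}\fn(\x)H(\x+\basisi)-\sum_{\x}\fn(\x)H(\x).
\]
In the first sum, substitute $\x=z-\basisi$ to obtain $\sum_{z}\fn(z-\basisi)H(z)$. Combining the two sums then gives
\[
\sum_{z}H(z)\bigl[\fn(z-\basisi)-\fn(z)\bigr]=-\sum_{z}H(z)\bigl[\fn(z)-\fn(z-\basisi)\bigr]=-\sum_{z}H(z)\,\der_{\idx}\fn(z-\basisi).
\]
The last equality holds because $\der_{\idx}\fn(z-\basisi)=\fn(z)-\fn(z-\basisi)$ by \eqref{eq:partial_der}. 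This is exactly \eqref{eq:sum_by_parts}.

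The only delicate point is summability: splitting the sum into two pieces and shifting the index each require absolute convergence. No other obstacle is expected.
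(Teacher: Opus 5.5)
Your proof is correct and is the same argument as the paper's: expand $\der_{\idx}H$, reindex the shifted sum via $\x\mapsto\x-\basisi$ to get $\sum_{\x}\fn(\x-\basisi)H(\x)$, and recombine. The absolute-summability hypothesis you add is a fair clarification that the paper leaves implicit, but your remark about the applications is slightly off: in the later uses neither function has finite support, and summability comes instead from the boundedness of $\reltarget$ (hence of $\h$, $\log\h$ and $\ints$) against summable Poisson-type weights.
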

\begin{proof}
By re-indexing,
\begin{align}
\label{eq:proof_int_by_part_1}
\begin{split}
\sum_{\x\in \Z^{\Dim}}\fn(x)H(\x+\basisi)&=\sum_{\x^{-\idx}\in \Z^{\Dim-1}}\sum_{\x^{\idx}\in\Z}\fn(\x^{\idx},\x^{-\idx})H(\x^{\idx}+1,\x^{-\idx})=\sum_{\x^{-\idx}\in \Z^{\Dim-1}}\sum_{\x^{\idx}\in\Z}\fn(\x^{\idx}-1,\x^{-\idx})H(\x^{\idx},\x^{-\idx})\\
&=\sum_{\x\in \Z^{\Dim}}\fn(x-\basisi)H(\x),
\end{split}
\end{align}
so
\begin{align}
\label{eq:proof_int_by_part_2}
\sum_{\x\in \Z^{\Dim}}\fn(x)\der_{\idx}H(\x)&=\sum_{\x\in \Z^{\Dim}}\fn(x)[H(\x+\basisi)-H(\x)]\overset{\eqref{eq:proof_int_by_part_1}}{=}\sum_{\x\in \Z^{\Dim}}\fn(x-\basisi)H(\x)-\sum_{\x\in \Z^{\Dim}}\fn(x)H(\x)\\
&=-\sum_{\x\in \Z^{\Dim}}H(\x)\der_{\idx}\fn(x-\basisi).
\end{align}
\end{proof}
\subsection{The Poisson semigroup}
\label{subsec:poisson_semigroup}
We begin  by recalling the definition of the \Def{Poisson distribution} $\pos_t$  with parameter $t\ge 0$, and its multi-dimensional extension,
\begin{equation}
\label{eq:appen_pos_def}
\pos_t(k)=e^{-t}\frac{t^k}{k!},\qquad k\in \N,\qquad \pos_t(x)=\prod_{\idx=1}^{\Dim}\pos_t(\x^{\idx}),\qquad x=(\x^1,\ldots,\x^{\Dim})\in \N^{\Dim}. 
\end{equation}
The Poisson distribution $\pos_t$  satisfies the analogue of the heat equation for the Gaussian distribution. 
\begin{lemma}[Poisson heat equation]
Let $\pos_t$ be as in \eqref{eq:appen_pos_def} and extend it $\mathbb Z^{\Dim}$ by setting $\pos_t(\x)=0$ for $x\in \mathbb Z^{\Dim}\backslash \N^{\Dim}$. Then, for $t\ge 0$ and $\x\in\N^{\Dim}$,
\begin{equation}
\label{eq:Poisson_pde}
\partial_t\pos_t(x)=-\sum_{\idx=1}^{\Dim}\der_{\idx}\pos_t(\x-\basisi).
\end{equation}
\end{lemma}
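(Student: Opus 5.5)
The plan is a direct computation: differentiate $\pos_t$ in $t$ and compare with the right-hand side, using the product structure \eqref{eq:appen_pos_def} to reduce to one dimension. First I treat the case $\Dim=1$. For $k\in\N$ with $k\ge 1$, differentiating $e^{-t}t^k/k!$ gives
\[
\partial_t\pos_t(k)=-e^{-t}\frac{t^k}{k!}+e^{-t}\frac{t^{k-1}}{(k-1)!}=-\bigl(\pos_t(k)-\pos_t(k-1)\bigr).
\]
For $k=0$ we get $\partial_t\pos_t(0)=-e^{-t}=-(\pos_t(0)-\pos_t(-1))$, because the extension sets $\pos_t(-1)=0$. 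By the definition \eqref{eq:partial_der}, $\pos_t(k)-\pos_t(k-1)=\der\pos_t(k-1)$, so in one dimension $\partial_t\pos_t(k)=-\der\pos_t(k-1)$ for every $k\in\N$.

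For general $\Dim$, I write $\pos_t(\x)=\prod_{j}\pos_t(\x^j)$ and apply the product rule in $t$:
\[
\partial_t\pos_t(\x)=\sum_{\idx=1}^{\Dim}\Bigl(\partial_t\pos_t(\x^{\idx})\Bigr)\prod_{j\neq \idx}\pos_t(\x^j).
\]
Into each summand I substitute the one-dimensional identity $\partial_t\pos_t(\x^{\idx})=-[\pos_t(\x^{\idx})-\pos_t(\x^{\idx}-1)]$. The factor $\prod_{j\ne\idx}\pos_t(\x^j)$ does not depend on the $\idx$th coordinate, so multiplying it back in gives
\[
-\bigl[\pos_t(\x)-\pos_t(\x-\basisi)\bigr]=-\der_{\idx}\pos_t(\x-\basisi).
\]
Here I use that the extension by zero is consistent with the product form: when $\x^{\idx}=0$, the point $\x-\basisi$ lies outside $\N^{\Dim}$ and both sides vanish. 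Summing over $\idx$ then yields \eqref{eq:Poisson_pde}.

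There is no real obstacle. The only point needing care is the boundary case $\x^{\idx}=0$, where the zero extension of $\pos_t$ to $\Z^{\Dim}$ must be invoked. At $t=0$ the derivative should be read as one-sided; the formula still holds because $\pos_t(k)$ is a polynomial times $e^{-t}$.
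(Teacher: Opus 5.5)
Your proposal is correct and follows essentially the same direct computation as the paper. The paper differentiates $\log\pos_t(\x)=\sum_{\idx}\log\pos_t(\x^{\idx})$ and then uses $\frac{\x^{\idx}}{t}\pos_t(\x)=\pos_t(\x-\basisi)$, whereas you apply the product rule directly; this avoids dividing by $t$ or taking the logarithm of zero, so your treatment of the boundary cases $t=0$ and $\x^{\idx}=0$ is explicit rather than resting on the paper's factorial convention.
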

\begin{proof}
We use the convention $k!=0$ for $k<0$. Then,
\begin{align}
\label{eq:Poisson_pde_proof_1}
\begin{split}
\frac{\partial_t\pos_t(\x)}{\pos_t(\x)}&=\partial_t\log\pos_t(\x)=\partial_t\left[\sum_{\idx=1}^{\Dim}\log \pos_t(\x^{\idx})\right]=\partial_t\left[\sum_{\idx=1}^{\Dim}\log \left(\frac{e^{-t}t^{\x^{\idx}}}{\x^{\idx}!}\right)\right]=\sum_{\idx=1}^{\Dim}\partial_t\left[-t+\x^{\idx}\log t-\log(\x^{\idx}!)\right]\\
&=\sum_{\idx=1}^{\Dim}\left[-1+\frac{\x^{\idx}}{t}\right],
\end{split}
\end{align}
so
\begin{align}
\label{eq:Poisson_pde_proof_2}
\begin{split}
\partial_t\pos_t(\x) &=-\Dim \,\pos_t(\x) + \sum_{\idx=1}^{\Dim}\frac{\x^{\idx}}{t}\frac{e^{-t}t^{\x^{\idx}}}{\x^{\idx}!}\prod_{l\neq \idx}\frac{e^{-t}t^{\x^{l}}}{\x^{l}!}=-\Dim \,\pos_t(\x) + \sum_{\idx=1}^{\Dim}\frac{e^{-t}t^{\x^{\idx-1}}}{(\x^{\idx}-1)!}\prod_{l\neq \idx}\frac{e^{-t}t^{\x^{l}}}{\x^{l}!}=-\Dim \,\pos_t(\x) + \sum_{\idx=1}^{\Dim}\pos_t(\x-\basisi)\\
&=\sum_{\idx=1}^{\Dim}\left[\pos_t(\x-\basisi) -\pos_t(\x) \right]=-\sum_{\idx=1}^{\Dim}\der_{\idx}\pos_t(\x-\basisi).
\end{split}
\end{align}
\end{proof}
In analogy with the heat semigroup defined by a convolution with a Gaussian distribution, one can define the \Def{Poisson semigroup} defined by a discrete convolution with a Poisson distribution. Let $\fn:\N^{\Dim}\to \R$ be such that $\sum_{\x\in \N^{\Dim}}|\fn(\x)|\pos_t(x)<+\infty$. The Poisson semigroup $(\semi_t)_{t\ge 0}$ acts on $\fn$ according to
\begin{equation}
\label{eq:appen_Poss_semi_def}
\semi_t\fn(x):=\sum_{y\in \N^{\Dim}}\fn(\x+y)\pos_t(y).
\end{equation}
The next result shows that the Poisson semigroup satisfies the Poisson heat equation. 
\begin{lemma}[Poisson semigroup equation]
\label{lem:poisson_semi_pde}
Let $\fn:\N^{\Dim}\to \R$ be such that $\sum_{\x\in \N^{\Dim}}|\fn(\x)|\pos_t(x)<+\infty$. Then,
\begin{equation}
\label{eq:Poisson_semi_pde}
\partial_t\semi_t\fn(x)=\sum_{\idx=1}^{\Dim}\der_{\idx}\semi_t\fn(x).
\end{equation}
\end{lemma}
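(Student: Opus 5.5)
We prove Lemma~\ref{lem:poisson_semi_pde} by differentiating under the sum in \eqref{eq:appen_Poss_semi_def} and invoking the Poisson heat equation \eqref{eq:Poisson_pde}, followed by summation by parts \eqref{eq:sum_by_parts}. Concretely, extend $\pos_t$ by zero to $\Z^{\Dim}$ as in the statement of \eqref{eq:Poisson_pde}. Set $H(y):=\fn(\x+y)$ for $y\in\N^{\Dim}$, extended arbitrarily (say by zero) outside $\N^{\Dim}$. Then we can write $\semi_t\fn(\x)=\sum_{y\in\Z^{\Dim}}H(y)\pos_t(y)$, and formally
\[
\partial_t\semi_t\fn(\x)=\sum_{y\in\Z^{\Dim}}H(y)\,\partial_t\pos_t(y)=-\sum_{\idx=1}^{\Dim}\sum_{y\in\Z^{\Dim}}H(y)\,\der_{\idx}\pos_t(y-\basisi).
\]
Note that \eqref{eq:Poisson_pde} holds for all $y\in\Z^{\Dim}$, not only on $\N^{\Dim}$. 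If some $y^{\idx}<0$, both sides vanish, because $\pos_t(y)$ and $\pos_t(y-\basisi)$ are then zero. Applying \eqref{eq:sum_by_parts} with $\fn\leftarrow\pos_t$ and $H\leftarrow H$ gives
\[
-\sum_{y}H(y)\der_{\idx}\pos_t(y-\basisi)=\sum_y\pos_t(y)\der_{\idx}H(y)=\sum_{y\in\N^{\Dim}}\pos_t(y)\bigl[\fn(\x+y+\basisi)-\fn(\x+y)\bigr]=\der_{\idx}\semi_t\fn(\x),
\]
which is exactly \eqref{eq:Poisson_semi_pde}. Since $\pos_t$ vanishes off $\N^{\Dim}$, the arbitrary extension of $H$ plays no role.

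The main obstacle is justifying two formal steps: the interchange of $\partial_t$ with the infinite sum, and the re-indexing in \eqref{eq:sum_by_parts}, which requires the relevant series to converge absolutely. For the re-indexing, it suffices that $\sum_y|H(y)|\pos_t(y)$ and $\sum_y|H(y+\basisi)|\pos_t(y)$ are finite. The latter equals $\sum_{z\in\N^{\Dim}}|\fn(\x+z)|\pos_t(z-\basisi)$, and $\pos_t(z-\basisi)=\frac{z^{\idx}}{t}\pos_t(z)$. So this step needs a mild moment-type integrability of $\fn$, slightly stronger than the literal hypothesis.

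For the interchange, we work on a compact interval $t\in[a,b]\subset(0,\infty)$. There $|\partial_t\pos_t(y)|\le\pos_t(y)\sum_{\idx}(1+y^{\idx}/t)$. This is dominated by $C\,\pos_b(y)(1+|y|)\,(b/a)^{|y|}$-type bounds, which are summable against $|H|$ whenever $\fn$ grows at most exponentially, so dominated convergence applies. In the intended application $\fn=\reltarget$ is bounded, so all sums converge trivially and both justifications are immediate. I would state the proof under the stated hypothesis, understood to include enough decay, and remark that boundedness of $\fn$, which is the case used later, makes every step rigorous.

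As an alternative route that avoids summation by parts, one can note that $\semi_t\fn(\x)=\E[\fn(\x+N_t)]$ with $N_t$ a vector of independent rate-one Poisson processes. A direct computation of $\partial_t$ coordinatewise, using $\partial_t\pos_t(k)=\pos_t(k-1)-\pos_t(k)$ in one dimension, then gives the same identity.
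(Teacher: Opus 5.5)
Your proof is correct and follows the paper's argument: differentiate $\semi_t\fn(\x)=\sum_{y}\fn(\x+y)\pos_t(y)$ term by term, substitute the Poisson heat equation \eqref{eq:Poisson_pde}, and use summation by parts \eqref{eq:sum_by_parts} to move $\der_{\idx}$ onto $\fn(\x+\cdot)$. The paper does this purely formally, so your remarks on justifying the interchange of $\partial_t$ with the sum, and your observation that slightly more than the literal $\pos_t$-integrability is needed (automatic for bounded $\reltarget$), add rigor without changing the route.
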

\begin{proof}
\begin{align}
\label{eq:Poisson_semi_pde_proof}
\partial_t\semi_t\fn(x)&=\partial_t\sum_{y\in\N^{\Dim}}\fn(x+y)\pos_t(y)\overset{\eqref{eq:Poisson_pde}}{=}-\sum_{y\in\N^{\Dim}}\fn(x+y)\sum_{\idx=1}^{\Dim}\der_{\idx}\pos_t(y-\basisi)\\
&\overset{\eqref{eq:sum_by_parts}}{=}\sum_{\idx=1}^{\Dim}\sum_{y\in\N^{\Dim}}\der_{\idx}\fn(x+y)\pos_t(y)=\sum_{\idx=1}^{\Dim}\der_{\idx}\semi_t\fn(x).
\end{align}
\end{proof}
\section{Controlled Poisson point processes}
\label{sec:poisson_point}
We begin this section by defining a particular case of \Def{Poisson point processes}. We then explain how such processes can be controlled via \Def{ stochastic intensities} to construct general counting processes on $\N^{\Dim}$. We conclude by reviewing some preliminaries of stochastic calculus in Poisson space.  

Let $\Time>0$, $\ub>0$, and let $\Sett:=[0,\Time]\times [0,\ub]$. Let $\Setsig$ be the sigma-algebra generated by the Borel sets of $\Sett$ endowed with the product topology, and let $\Leb$ be the Lebesgue measure on $\Setsig$. Define the \Def{Poisson space} $(\Omega^{\idx},\mathcal F^{\idx},\PP^{\idx})$ over $(\Sett,\Setsig,\Leb)$ as the space of atomic measures, each of which has countably many atoms,
\begin{equation}
\label{eq:Omega_def}
\Omega^{\idx}:=\left\{\omega^{\idx}:\omega^{\idx}=\sum_j\delta_{(t_j,z_j)},\quad (t_j,z_j)\in \Set\quad\textnormal{ (at most countable)} \right\},
\end{equation}
the sigma-algebra $\mathcal F^{\idx}$ is  generated by the set of functions $\omega^{\idx}\mapsto\omega^{\idx}(A)$, for any fixed $ A\in\Setsig$,
\begin{equation}
\label{eq:sig_alg_poisson_def}
\mathcal F^{\idx}:=\sigma\left(\Omega^{\idx}\ni\omega^{\idx}\mapsto\omega^{\idx}(A): A\in\Setsig\right),
\end{equation}
and the  probability measure $\PP^{\idx}$ is defined by the requirements
\begin{equation}
\label{eq:prob_poisson_def}
\begin{split}
&\forall~ A\in\Setsig,~\forall~k\in\N,\quad \PP^{\idx}[\{\omega^{\idx}\in\Omega^{\idx}:\omega^{\idx}(A)=k\}]=\pos_{\Leb(A)}(k),\\
&\forall~k\in \N,\quad \omega^{\idx}(A_1),\ldots,\omega^{\idx}(A_k)\textnormal{ are $\PP^{\idx}$-independent if $A_1,\ldots,A_k\in\Setsig$ are disjoint}.\\
\end{split}
\end{equation}
In words, under $\PP^{\idx}$, the number of points that fall in disjoint regions are independent, and the number of points that fall in  a region $A$ is distributed like a Poisson whose parameter is the Lebesgue volume of $A$. We now define the space $(\Omega,\mathcal F,\PP):=\bigotimes_{\idx=1}^{\Dim}(\Omega^{\idx},\mathcal F^{\idx},\PP^{\idx})$ to be the product space of the Poisson spaces $\{(\Omega^{\idx},\mathcal F^{\idx},\PP^{\idx})\}_{\idx=1}^{\Dim}$. Note that, by construction, 
\begin{equation}
\label{eq:ind_poss}
 \forall~ A_1,\ldots, A_{\Dim}\in\Setsig,\quad \omega^1(A_1),\ldots,\omega^{\Dim}(A_{\Dim})\textnormal{ are $\PP$-independent}.
\end{equation}
The space $(\Omega,\mathcal F,\PP)$ is the analogue of the Wiener space with Wiener measure under which continuous functions $[0,\Time]\to\R^{\Dim}$ are standard $\Dim$-dimensional Brownian motions. 

Given $t\in [0,\Time]$ let $\Setsig_t$ be the sigma-algebra generated by the Borel sets of $[0,t]\times [0,\ub]$, and let
\begin{equation}
\label{eq:filt}
\mathcal F_t:=\sigma\left(\Omega\ni\omega\mapsto\omega(A): A\in\Setsig_t\right).
\end{equation}
 In the continuous setting one works with stochastic differential equations which are generated by taking a Brownian motion and adding a drift and/or multiplying by diffusion matrix. In the discrete setting this role is taken by \Def{stochastic intensities}. These are $\Dim$-dimensional stochastic processes $\ints=(\ints^1_t\ldots,\ints^{\Dim}_t):\Omega\to\R^{\Dim}$ which are \Def{predictable}, i.e., the function $[0,\Time]\times\Omega\ni(t,\omega)\mapsto \ints_t(\omega)$ is measurable with respect to the sigma-algebra $\sigma\left(\{(s,t]\times A: s\le t\le \Time, A\in \Setsig_s\}\right)$.

We now show how to generate a counting process using an intensity $\ints$. Suppose $\ints$ is an intensity such that $\PP$-almost-surely $|\ints^{\idx}_t|\le \ub$ for all $t\in [0,\Time]$ and all $\idx\in [\Dim]$. We define the counting process $(\cp_t)_{t\in [0,\Time]}$ by 
\begin{equation}
\label{eq:count_process_def}
\cp_t^{\idx}(\omega):=\omega^{\idx}\left(\{(s,z)\in \Set:s<t, z<\ints_s^{\idx}(\omega)\}\right),
\end{equation}
where $\cp_t=(\cp_t^1,\ldots,\cp_t^{\Dim})\in \N^{\Dim}$. In words, the value of $\cp_t^{\idx}$ is the number of points in $\omega^{\idx}$ in $[0,t]\times [0,\ub]$ which fall under the curve $(\ints_s^{\idx})_{s\in [0,t]}$; see Figure \ref{fig:X}. 

By construction, each of the coordinates $(\cp_t^{\idx})_{t\in [0,\Time]}$ is a process adapted to the filtration $(\mathcal F_t)_{t\in [0,\Time]}$, non-decreasing, integer-valued, and left-continuous. Further, since, with probability 1, each $\omega^{\idx}$ has only finitely many atoms in $[0,t]\times [0,\ub]$, and no two atoms lie on the same vertical line $\{t\}\times [0,\ub]$, we get that $(\cp_t^{\idx})_{t\in [0,\Time]}$ has finitely many jumps, each of size 1. Note that the coordinates of $\cp_t$ need not be independent (and indeed in our case they will generally be dependent), since the value of each $\cp_t^{\idx}$ is determined by $(\ints_s^{\idx})_{s\in [0,t]}$, and the processes $\{(\ints_t^{\idx})_{t\in [0,\Time]}\}_{\idx=1}^{\Dim}$ can be dependent. (The analogy in the continuous setting is that although the coordinates of a standard Brownian motion are independent, the coordinates of a stochastic differential equation built on this Brownian motion will generally be dependent.) However, with probability 1, no two coordinates $(\cp_t^{\idx})_{t\in [0,\Time]}$ and $(\cp_t^j)_{t\in [0,\Time]}$ will jump at the same time $t$ since there is zero probability for atoms coming from $\omega^{\idx}$ and $\omega^j$ to lie on the same vertical line $\{t\}\times [0,\ub]$.

The next result is the discrete analogue of the It\^o formula for counting processes. This is a classical result (e.g, p. 16 in \cite{MR3904394}), so we will only sketch its proof. To explain the result we first need to explain the discrete analogue of a ``stochastic integral", which is in fact much simpler in the discrete setting and can be defined pathwise. The stochastic integral $\int_s^t \cdot \diff \cp_r^{\idx}(\omega)$ is the usual Riemann-Stieltjes integral given by, for a function $H:[0,\Time]\to\R$,
\begin{equation}
\label{eq:stoch_integ_def}
\int_s^t H_r\diff \cp_r^{\idx}(\omega):=\sum_{s\le r_j\le t} H_{r_j^{\idx}},
\end{equation}
where $\{r_j^{\idx}\}_j$ are the jump times of $(\cp_r^{\idx}(\omega))_{r\in [0,\Time]}$ between $s$ and $t$. 
\begin{lemma}[Discrete  It\^o formula]
\label{lem:ito}
Let $G:[0,\Time]\times\N^{\Dim}\to\R$ be a function which is differentiable in $t$. Then, $\PP$-almost-surely,
\begin{equation}
\label{eq:Ito}
\fn(t,\cp_t)=\fn(s,\cp_s)+\int_s^t\partial_r\fn(r,\cp_r)\diff r+\int_s^t\sum_{\idx=1}^{\Dim}\der_{\idx}\fn(r,\cp_r)\diff \cp_r^{\idx}.
\end{equation}
\end{lemma}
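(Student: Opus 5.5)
The plan is a pathwise telescoping argument between consecutive jump times; no probabilistic machinery is needed beyond the almost-sure structure of the jumps. The counting process $(\cp_t)_{t\in[0,\Time]}$ built in \eqref{eq:count_process_def} is a step process. I will fix an $\omega$ in the full-measure event on which the following hold: each $\omega^{\idx}$ has only finitely many atoms in $[0,\Time]\times[0,\ub]$, and no two atoms (from the same or different coordinates) share a time coordinate. On this event, every path $r\mapsto\cp_r(\omega)$ has finitely many jump times $s<\tau_1<\dots<\tau_n\le t$ in $[s,t]$. At each jump exactly one coordinate $\idx_k$ increases, by exactly $1$, and $\cp_r$ is constant between consecutive jumps.

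First I treat the intervals between jumps. I set $\tau_0:=s$ and $\tau_{n+1}:=t$. On each open interval $(\tau_k,\tau_{k+1})$ the state $\cp_r$ equals a constant $x_k\in\N^{\Dim}$. Since $\fn(\cdot,x_k)$ is differentiable in $r$, the fundamental theorem of calculus gives
\[
\fn(\tau_{k+1},x_k)-\fn(\tau_k,x_k)=\int_{\tau_k}^{\tau_{k+1}}\partial_r\fn(r,\cp_r)\diff r .
\]
Here I only need $\partial_r\fn(\cdot,x_k)$ to be integrable on a compact interval. I would assume it is continuous, or integrable, which is implicit in the statement. The endpoint values do not affect the integral, because they form a set of Lebesgue measure zero.

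Next I treat the jumps. At a jump time $\tau_k$ the state passes from $x_{k-1}$ to $x_k=x_{k-1}+e_{\idx_k}$. Because the paths are left-continuous, $\cp_{\tau_k}=x_{k-1}$. The increment of $\fn$ across the jump is therefore
\[
\fn(\tau_k,x_{k-1}+e_{\idx_k})-\fn(\tau_k,x_{k-1})=\der_{\idx_k}\fn(\tau_k,\cp_{\tau_k}),
\]
by the definition \eqref{eq:partial_der}. Summing over $k$ and over the coordinates gives exactly $\int_s^t\sum_{\idx}\der_{\idx}\fn(r,\cp_r)\diff\cp^{\idx}_r$ in the Riemann–Stieltjes sense \eqref{eq:stoch_integ_def}. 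The reason is that each $\cp^{\idx}$ jumps only at its own jump times, with jump size $1$.

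Finally, I telescope. I write $\fn(t,\cp_t)-\fn(s,\cp_s)$ as the sum of the continuous-in-time increments over the intervals $[\tau_k,\tau_{k+1}]$ and the jump increments at the $\tau_k$. Adding the two contributions gives \eqref{eq:Ito}.

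The main point needing care is the bookkeeping at the endpoints. One must check that evaluating $\der_{\idx}\fn$ at the pre-jump state matches the left-continuous convention in \eqref{eq:count_process_def}. One must also handle a possible jump exactly at $s$ or at $t$, consistently with the inclusive sum $s\le r_j\le t$. Since the process is left-continuous, $\cp_t$ does not yet include a jump occurring at time $t$. I will therefore verify, or adjust, the convention so that a jump at time $t$ is not double counted. Jumps at these endpoints occur with probability zero for fixed $s,t$, so the almost-sure statement is unaffected.
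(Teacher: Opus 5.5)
Your proposal is correct and is essentially the paper's argument. The paper says that the distributional derivative of $r\mapsto G(r,X_r)$ splits into an integrable part, which yields $\int_s^t\partial_r G(r,X_r)\diff r$, plus finitely many atoms $\partial_{i}G(r_j,X_{r_j})$ at the jump times; your fundamental-theorem-of-calculus-between-jumps plus jump-increment telescoping writes out that decomposition explicitly. Your endpoint remark is also right: with the left-continuous convention the jump sum should really run over $s\le r_j<t$. However, for fixed $s,t$ a jump exactly at time $t$ has probability zero, so the paper's inclusive convention $s\le r_j\le t$ does not affect the almost-sure statement.
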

\begin{proof}
By construction, almost-surely, $[s,\Time]\ni t\mapsto \fn(t,\cp_t)$ is a piecewise absolutely continuous function in $t$, so, almost-surely, its distributional derivative is a sum of an integrable function on $[s,\Time]$ and finitely many atoms at $s\le r_j\le t$. The integrable function part is $\int_s^t\partial_r\fn(r,\cp_r)\diff r$, while for the jump part we note that at each jump time $r_j$ the distributional derivative is is equal to $\der_{\idx}\fn(r_j,\cp_{r_j})$, where $\idx$ corresponds to the coordinate that jumped at time $r_j$. Thus, we can decompose the sum over jumps to a double sum, one over $\idx\in [\Dim]$ and one over jumps at which the $\idx$th coordinate jumped, and then use the definition \eqref{eq:stoch_integ_def} to conclude the result. 
\end{proof}
The next result tells us how to convert stochastic integrals to deterministic integrals. 
\begin{lemma}
\label{lem:stoch_to_determ}
Let $(H_t)_{t\in [0,\Time]}=(H_t^1,\ldots, H_t^{\Dim})_{t\in [0,\Time]}$ be a nonnegative $\Dim$-dimensional predictable process. Then, for any $0\le s\le t\le \Time$,
\begin{equation}
\label{eq:stoch_to_determ}
\E\left[\int_s^tH_r\cdot \diff \cp_r \right]=\E\left[\int_s^tH_r\cdot\ints_r \diff r\right].
\end{equation}
\end{lemma}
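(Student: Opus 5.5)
The plan is to prove \eqref{eq:stoch_to_determ} first for elementary predictable integrands and then extend by a monotone class argument. By linearity in $H$ and the fact that the dot product splits over coordinates, it suffices to treat a single coordinate $\idx$ and a scalar nonnegative predictable process $H$. We also use that $\E[\int_s^t H_r\,\diff\cp_r^{\idx}]=\E[\int_0^t]-\E[\int_0^s]$, and that both sides are linear in $H$.

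\textbf{Elementary case.} Take $H_r=\mathbf 1_{(a,b]}(r)\mathbf 1_A(\omega)$ with $A\in\mathcal F_a$, which generates the predictable $\sigma$-algebra. By the definition \eqref{eq:count_process_def}, the left-hand side is
\[
\E\Bigl[\mathbf 1_A\,\omega^{\idx}\bigl(\{(r,z):a<r\le b,\ z<\ints^{\idx}_r\}\bigr)\Bigr],
\]
up to null sets coming from the left/right-continuity conventions. The claim is that the conditional expectation given $\mathcal F_a$ of $\omega^{\idx}(\{(r,z): a<r\le b,\ z<\ints_r^{\idx}\})$ equals $\E[\int_a^b\ints_r^{\idx}\diff r\,|\,\mathcal F_a]$.

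To see this, approximate the random region by discretizing time. Partition $(a,b]$ into intervals $(r_k,r_{k+1}]$ of mesh $\delta$, and approximate $\ints^{\idx}_r$ on each interval by an $\mathcal F_{r_k}$-measurable value. For the frozen regions $(r_k,r_{k+1}]\times[0,c)$ with $c$ being $\mathcal F_{r_k}$-measurable, we use that $\omega$ restricted to $(r_k,\Time]\times[0,\ub]$ is independent of $\mathcal F_{r_k}$, by the independence over disjoint sets in \eqref{eq:prob_poisson_def}. Then $\E[\omega^{\idx}((r_k,r_{k+1}]\times[0,c))\,|\,\mathcal F_{r_k}]=c\,(r_{k+1}-r_k)$, since the Poisson mean is the Lebesgue measure. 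This first requires handling simple (finitely valued) $c$, and then general $c$ by monotone approximation. Summing over $k$, taking the tower property, and letting $\delta\to0$ gives the identity. The passage to the limit is justified by dominated convergence, using $0\le\ints\le\ub$, so that $\omega^{\idx}([0,\Time]\times[0,\ub])$ is an integrable dominating variable with mean $\Time\ub$.

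\textbf{Extension.} Both sides of \eqref{eq:stoch_to_determ} define measures on the predictable $\sigma$-algebra of $[0,\Time]\times\Omega$ when evaluated on indicators $H=\mathbf 1_E$. These measures are finite, with total mass at most $\Time\ub$, and they agree on the $\pi$-system of predictable rectangles. By Dynkin's $\pi$-$\lambda$ theorem they agree on all predictable sets. The statement for general nonnegative predictable $H$ then follows by approximating $H$ from below by simple predictable functions and applying monotone convergence to both sides. The value $+\infty$ is allowed, so no integrability assumption on $H$ is needed.

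\textbf{Main obstacle.} The delicate step is the elementary case. The region under the curve $\ints^{\idx}$ is random and depends on the very point process being counted, so one cannot simply take expectations of Poisson counts. The time-discretization with predictable (left-endpoint) freezing is what resolves this. Its approximation error must be controlled: the symmetric difference between the true region and the frozen one has expected count equal to the expected Lebesgue area between the curves. That area tends to zero whenever $\ints^{\idx}$ is left-continuous, and for general predictable $\ints$ I would first prove the result for left-continuous intensities and extend by the same monotone class argument. An alternative I would keep in reserve is to cite the standard compensator theorem, namely that $\cp_t^{\idx}-\int_0^t\ints^{\idx}_r\diff r$ is a martingale, as in \cite{MR3904394}, and argue from that.
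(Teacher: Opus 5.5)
Your reduction to a single coordinate and your $\pi$--$\lambda$/monotone-convergence extension in $H$ are fine. Your fallback, citing the compensator property from \cite{MR3904394}, is exactly the paper's proof: it applies Lemma~4.1 there to each coordinate and sums over $\idx$. The gap is in the elementary step that you yourself flag as the main obstacle.

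\textbf{The error control is circular.} You assert that the symmetric difference $D_\delta$ between the region $\{(r,z):a<r\le b,\ z<\ints^{\idx}_r\}$ and its frozen version has expected $\omega^{\idx}$-count equal to its expected Lebesgue area. But $D_\delta$ is itself a random region bounded by predictable curves and determined by the very points being counted. So this assertion is the lemma you are trying to prove.

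\textbf{Dominated convergence does not close the gap.} The variable $\omega^{\idx}([0,\Time]\times[0,\ub])$ does dominate the counts. But you also need the counts to converge almost surely, and they need not. On the event that some atom lies exactly on the graph, $z_j=\ints^{\idx}_{r_j}$, the strict convention $z<\ints$ excludes it from the true region, while the frozen region can keep capturing it as $\delta\to0$. That this event is null requires its own proof.

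\textbf{The extension in $\ints$ is a different argument.} Passing from left-continuous to general predictable $\ints$ is not ``the same monotone class argument.'' Your extension varies $H$ with $\ints$ fixed, whereas changing $\ints$ changes the counting region itself. The general case is genuinely needed, since Lemma~\ref{lem:pf_exists} applies this lemma to arbitrary predictable intensities.

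\textbf{The fix.} Move the random region into the integrand rather than into the elementary case.
\begin{itemize}
\item Work on $[0,\Time]\times\Omega\times[0,\ub]$ with the $\sigma$-algebra $\mathcal P\otimes\mathcal B([0,\ub])$, where $\mathcal P$ is the predictable $\sigma$-algebra. Compare the finite measures $E\mapsto\E\int\mathbf 1_E\,\diff\omega^{\idx}$ and $E\mapsto\E\int\mathbf 1_E\,\diff r\,\diff z$.
\item On sets $(a,b]\times A\times B$ with $A\in\mathcal F_a$ and $B$ Borel, both equal $\PP(A)\,(b-a)\,|B|$. This holds because $\omega^{\idx}((a,b]\times B)$ is independent of $\mathcal F_a$ and has mean $(b-a)|B|$.
\item These sets form a generating $\pi$-system containing the full space up to a null slice. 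Hence the two measures coincide, and monotone convergence gives $\E\int K\,\diff\omega^{\idx}=\E\int K\,\diff r\,\diff z$ for every nonnegative $\mathcal P\otimes\mathcal B([0,\ub])$-measurable $K$.
\item Take $K(r,\omega,z)=\mathbf 1_{(s,t]}(r)\,H^{\idx}_r(\omega)\,\mathbf 1_{\{z<\ints^{\idx}_r(\omega)\}}$, which is measurable precisely because $H$ and $\ints$ are predictable.
\end{itemize}
This yields the one-coordinate identity for every bounded predictable $\ints$, with no time discretization and no continuity assumption.
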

\begin{proof}
By summing over $\idx\in [\Dim]$ Equation \eqref{eq:stoch_to_determ} is an immediate consequence of Lemma 4.1 in  \cite{MR3904394}.
\end{proof} 
As a consequence of Lemma \ref{lem:ito} and Lemma \ref{lem:stoch_to_determ} we get the following result.
\begin{corollary}
\label{cor:stoch_integ}
Let $G:\N^{\Dim}\to\R_{\ge 0}$ be a nonnegative function. Then,
\begin{equation}
\label{eq:stoch_integ}
\E[\fn(\cp_t)|\cp_s=\x_s]-\fn(\x_s)=\sum_{\idx =1}^{\Dim}\int_s^t\E[\der_{\idx}\fn(\cp_r)\ints^{\idx}_r|\cp_s=\x_s]\diff r.
\end{equation}
\end{corollary}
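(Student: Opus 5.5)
We will derive the corollary by applying the discrete It\^o formula, Lemma \ref{lem:ito}, to the time-independent function $\fn$, and then converting the jump integral into a time integral with Lemma \ref{lem:stoch_to_determ}.

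First, since $\fn$ does not depend on $r$, the term $\int_s^t\partial_r\fn\,\diff r$ vanishes. Lemma \ref{lem:ito} therefore gives, $\PP$-almost surely,
\[
\fn(\cp_t)-\fn(\cp_s)=\int_s^t\sum_{\idx=1}^{\Dim}\der_{\idx}\fn(\cp_r)\diff\cp_r^{\idx}.
\]
Note that $\der_{\idx}\fn(\cp_r)$ is evaluated at the pre-jump value. This is legitimate because $\cp$ is left-continuous, so $r\mapsto \der_{\idx}\fn(\cp_r)$ is a predictable process.

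Second, we condition on $\{\cp_s=\x_s\}$. Lemma \ref{lem:stoch_to_determ} requires the integrand $H$ to be nonnegative, whereas $\der_{\idx}\fn$ may change sign. We handle this by splitting
\[
\der_{\idx}\fn(\cp_r)=\fn(\cp_r+\basisi)-\fn(\cp_r).
\]
Each piece is nonnegative and predictable, since $\fn\ge 0$.

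Third, we localize. Set
\[
H_r:=\mathbf 1_{\{\cp_s=\x_s\}}\mathbf 1_{\{r>s\}}\,\fn(\cp_r+\basisi)\basisi .
\]
This process is nonnegative and predictable, because $\{\cp_s=\x_s\}\in\mathcal F_s$ and $\cp$ is adapted. Applying Lemma \ref{lem:stoch_to_determ} to it, and likewise to the analogous process built with $\fn(\cp_r)$, gives
\[
\E\left[\mathbf 1_{\{\cp_s=\x_s\}}\int_s^t \fn(\cp_r+\basisi)\diff \cp_r^{\idx}\right]=\E\left[\mathbf 1_{\{\cp_s=\x_s\}}\int_s^t \fn(\cp_r+\basisi)\ints_r^{\idx}\diff r\right],
\]
together with the analogous identity for $\fn(\cp_r)$. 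We then divide by $\PP[\cp_s=\x_s]$, assuming it is positive; otherwise the conditional statement is vacuous.

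Finally, we subtract the two identities and sum over $\idx$, using Fubini/Tonelli to exchange expectation and $\diff r$. This yields \eqref{eq:stoch_integ}.

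The main obstacle is the subtraction step, which requires the expectations to be finite. Because $|\ints^{\idx}_r|\le \ub$ a.s., the counting process has $\E[\cp_t^{\idx}]\le \ub t$ and Poisson-type tails. Finiteness therefore follows if $\fn$ is bounded, which holds in our applications ($\fn$ built from the bounded $\reltarget$). For a general nonnegative $\fn$, we would implicitly assume integrability, or truncate $\fn\wedge n$ and pass to the limit by monotone convergence on each nonnegative piece.
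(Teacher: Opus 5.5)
Your proposal is correct and follows the paper's proof. You apply Lemma \ref{lem:ito} with the $\partial_r$ term vanishing, split $\der_{\idx}\fn(\cp_r)=\fn(\cp_r+\basisi)-\fn(\cp_r)$ into two nonnegative pieces, apply Lemma \ref{lem:stoch_to_determ} to each, and recombine. Your localization by $\mathbf 1_{\{\cp_s=\x_s\}}\mathbf 1_{\{r>s\}}$ and your finiteness caveat make explicit what the paper's phrase ``applying Lemma \ref{lem:stoch_to_determ} conditionally on $\cp_s=\x_s$'' leaves implicit. Truncating to $\fn\wedge n$ on its own cannot exclude an $\infty-\infty$ on the right-hand side, so some integrability of $\fn$ is genuinely needed; boundedness suffices, as for the indicator functions used later.
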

\begin{proof}
By Lemma \ref{lem:ito},
\begin{align*}
\fn(\cp_t)=\fn(\cp_s)+\int_s^t\sum_{\idx=1}^{\Dim}\der_{\idx}\fn(\cp_r)\diff \cp_r^{\idx},
\end{align*}
so applying Lemma \ref{lem:stoch_to_determ}  conditionally on $\cp_s=\x_s$ gives 
\begin{align*}
&\E[\fn(\cp_t)|\cp_s=\x_s]=\fn(\x_s)+\sum_{\idx=1}^{\Dim}\E\left[\int_s^t\fn(\cp_r+\basisi)\diff \cp_r^{\idx}\bigg|\cp_s=\x_s\right]-\sum_{\idx=1}^{\Dim}\E\left[\int_s^t\fn(\cp_r)\diff \cp_r^{\idx}\bigg|\cp_s=\x_s\right]\\
&=\fn(\x_s)+\sum_{\idx=1}^{\Dim}\E\left[\int_s^t\fn(\cp_r+\basisi)\ints_r^{\idx}\diff r\bigg|\cp_s=\x_s\right]-\sum_{\idx=1}^{\Dim}\E\left[\int_s^t\fn(\cp_r)\ints_r^{\idx}\diff r\bigg|\cp_s=\x_s\right]\\
&=\fn(\x_s)+\sum_{\idx=1}^{\Dim}\E\left[\int_s^t\der_{\idx}\fn(\cp_r)\ints_r^{\idx}\diff r\bigg|\cp_s=\x_s\right]=\fn(\x_s)+\sum_{\idx=1}^{\Dim}\int_s^t\E[\der_{\idx}\fn(\cp_r)\ints_r^{\idx}|\cp_s=\x_s]\diff r.
\end{align*}
\end{proof}

\section{The Poisson-F\"ollmer process}
\label{sec:PF}
In this section we construct the Poisson-F\"ollmer process using a simple extension of the one-dimensional construction due to  \cite{MR3904394}. In \cref{sec:poisson_point} we saw how standard Poisson processes can be modified using stochastic intensities to yield counting processes $(\cp_t)_{t\in [0,\Time]}$ as in \cref{eq:count_process_def}. In this section we will choose the intensity $(\ints_t)_{t\in [0,\Time]}$ to depend on the process $(\cp_t)_{t\in [0,\Time]}$, which the intensity defines. This is analogous to the continuous setting where the drift and/or diffusion matrix depend on the value of the solution to the stochastic differential equation. The construction of such process goes through a fixed point argument. 

Recall that we write the data distribution as
\[
\target=\reltarget\pos_{\Time},
\]
with the assumption that $\reltarget$ is bounded from above and below by positive constants. Let
\begin{equation}
\label{eq:h_def_lem_appen}
\h(t,\x)=\semi_{\Time-t}\reltarget(x),
\end{equation}
and define the intensity  $\ints=(\ints^1,\ldots,\ints^{\Dim}):[0,\Time]\times \N^{\Dim}\to \R^{\Dim}$ by 
\begin{equation}
\label{eq:stoch_int_appen}
\ints^{\idx}(t,\x):=\frac{\h(t,\x+\basisi)}{\h(t,\x)},\qquad \idx\in [\Dim]. 
\end{equation}
Let us derive some  properties $\h$ which will be useful later.
\begin{lemma}
Let $\h:\N^{\Dim}\to \R$  be as in \cref{eq:h_def_lem_appen}. Then,
\begin{equation}
\label{eq:h_pde}
\partial_t\h(t,\x)=-\sum_{\idx=1}^{\Dim}\der_{\idx}\h(t,\x),
\end{equation}
\begin{equation}
\label{eq:h_log_pde}
 \partial_t\log \h(t,\x)=\sum_{\idx=1}^{\Dim}\left[1-\frac{\h(t,\x+\basisi)}{\h(t,\x)}\right]=\sum_{\idx=1}^{\Dim}\left[1-\ints^{\idx}(t,\x)\right],
\end{equation}
and
\begin{equation}
\label{eq:h_bd}
 \h(\Time,\x)=\reltarget(x),\qquad \h(0,0)=1.
\end{equation}
\end{lemma}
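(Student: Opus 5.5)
The plan is to verify the three identities in turn, each by a direct computation from the definition $\h(t,\x)=\semi_{\Time-t}\reltarget(\x)$ combined with Lemma~\ref{lem:poisson_semi_pde}. The only analytic point is justifying differentiation under the sum defining $\semi_{\Time-t}$. This is harmless here: $\reltarget$ is bounded, so the series $\sum_y \reltarget(\x+y)\pos_{s}(y)$ and its termwise $s$-derivative converge uniformly for $s$ in compacts. The integrability hypothesis of Lemma~\ref{lem:poisson_semi_pde} therefore holds.

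For \eqref{eq:h_pde}, set $s=\Time-t$, so that $\h(t,\x)=\semi_s\reltarget(\x)$. The chain rule gives $\partial_t\h(t,\x)=-\partial_s\semi_s\reltarget(\x)\big|_{s=\Time-t}$. Lemma~\ref{lem:poisson_semi_pde} then yields $\partial_t\h(t,\x)=-\sum_{\idx}\der_{\idx}\semi_{\Time-t}\reltarget(\x)=-\sum_\idx\der_\idx\h(t,\x)$.

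For \eqref{eq:h_log_pde}, first note that $\h>0$, since $\reltarget$ is bounded below by a positive constant and $\pos_s$ is a probability measure. Hence $\log\h$ is well defined and
\[
\partial_t\log\h=\frac{\partial_t\h}{\h}=-\sum_\idx\frac{\h(t,\x+\basisi)-\h(t,\x)}{\h(t,\x)}=\sum_\idx\Bigl[1-\frac{\h(t,\x+\basisi)}{\h(t,\x)}\Bigr].
\]
By \eqref{eq:stoch_int_appen}, the last bracket equals $1-\ints^\idx(t,\x)$.

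For \eqref{eq:h_bd}, take $t=\Time$. Then $\pos_0=\delta_0$, so $\semi_0\reltarget=\reltarget$. At $t=0$,
\[
\h(0,0)=\sum_{y}\reltarget(y)\pos_\Time(y)=\sum_y\target(y)=1,
\]
because $\target=\reltarget\pos_\Time$ is a probability distribution.

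I do not expect a real obstacle. The only step needing care is the interchange of derivative and sum, which the boundedness of $\reltarget$ settles.
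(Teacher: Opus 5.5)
Your proposal is correct and follows essentially the same route as the paper. You get \eqref{eq:h_pde} from the Poisson semigroup equation with $s=\Time-t$, and \eqref{eq:h_log_pde} by dividing by $\h>0$ and using the definition of $\ints^{\idx}$. The boundary values \eqref{eq:h_bd} come from $\pos_0=\delta_0$ and $\sum_y\reltarget(y)\pos_\Time(y)=\sum_y\target(y)=1$, and your remarks on positivity of $\h$ and differentiating under the sum only spell out points the paper leaves implicit.
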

\begin{proof}
Equation in \eqref{eq:h_pde} follows immediately from \eqref{eq:Poisson_semi_pde}. Equation \eqref{eq:h_log_pde} follows from
\begin{align*}
\partial_t\log\h(t,\x)=\frac{\partial_t\h(t,\x)}{\h(t,\x)}\overset{\eqref{eq:h_pde}}{=}-\sum_{\idx=1}^{\Dim}\frac{\der_{\idx}\h(t,\x)}{\h(t,\x)}=-\sum_{\idx=1}^{\Dim}\left[\frac{\h(t,\x+\basisi)}{\h(t,\x)}-1\right]\overset{\eqref{eq:stoch_int_appen}}{=}\sum_{\idx=1}^{\Dim}\left[1-\ints^{\idx}(t,\x)\right].
\end{align*}
Equation \eqref{eq:h_bd} follows from
\[
\h(0,0)=\semi_{\Time}\reltarget(0)=\sum_{y\in\N^{\Dim}}\reltarget(y)\pos_{\Time}(y)=\sum_{y\in\N^{\Dim}}\target(y)=1.
\]
\end{proof}
We now turn to the existence of the Poisson-F\"ollmer process. 
\begin{lemma}[Existence of the Poisson-F\"ollmer process]
\label{lem:pf_exists}
There exists a counting process $(\cp_t)_{t\in [0,\Time]}$ defined via \cref{eq:count_process_def} with $\ints^{\idx}_t:=\ints^{\idx}(t,\cp_t)$ where $\ints^{\idx}(t,\x)$ is as in \cref{eq:stoch_int_appen} for $\idx\in [\Dim]$.
\end{lemma}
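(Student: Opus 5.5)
The plan is to build $\cp$ pathwise, jump by jump, rather than through an abstract fixed-point theorem. The first step is a uniform bound on the intensities, so that the construction \eqref{eq:count_process_def} applies for a fixed constant $\ub$. Let $0<c\le \reltarget\le C'$. Then $\h(t,\x)=\semi_{\Time-t}\reltarget(\x)\in[c,C']$ for all $t\in[0,\Time]$ and $\x\in\N^{\Dim}$, because $\semi_{\Time-t}$ is an average against the probability measure $\pos_{\Time-t}$. Consequently
\[
0< \frac{c}{C'}\le \ints^{\idx}(t,\x)\le \frac{C'}{c}
\]
for every $\idx$, $t$ and $\x$. We fix $\ub:=C'/c$. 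We also note that $t\mapsto \ints^{\idx}(t,\x)$ is continuous, indeed differentiable by \eqref{eq:h_pde}, since both numerator and denominator are.

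Next we construct the process inductively. Fix $\omega=(\omega^1,\ldots,\omega^{\Dim})$ in the full-measure set where each $\omega^{\idx}$ has finitely many atoms in $\Sett$ and no two atoms, in the same or in different coordinates, share a time coordinate. Set $\tau_0:=0$ and $\cp_t:=0$ near $0$. Given a jump time $\tau_k$ and the state $\x_k:=\cp_{\tau_k+}$, the intensity on the next interval is the deterministic curve $s\mapsto \ints(s,\x_k)$. We define
\[
\tau_{k+1}:=\inf\Big\{s>\tau_k:\ \exists\,\idx,\ \omega^{\idx}\text{ has an atom }(s,z)\text{ with } z<\ints^{\idx}(s,\x_k)\Big\},
\]
and let $\idx_{k+1}$ be the coordinate of that atom. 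We then set $\cp_t:=\x_k$ for $t\in(\tau_k,\tau_{k+1}]$, which matches the left-continuity convention of \eqref{eq:count_process_def}, and $\x_{k+1}:=\x_k+e_{\idx_{k+1}}$. The infimum is attained, and the coordinate $\idx_{k+1}$ is unique, because there are only finitely many atoms and their time coordinates are distinct. Since there are finitely many atoms in total, the procedure stops after finitely many steps, and we set $\cp_t:=\x_K$ on the final interval.

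We then verify that this $\cp$ satisfies \eqref{eq:count_process_def} with $\ints^{\idx}_s=\ints^{\idx}(s,\cp_s)$. On each interval $(\tau_k,\tau_{k+1}]$ the curve $\ints^{\idx}(s,\cp_s)$ coincides with $\ints^{\idx}(s,\x_k)$. The atoms lying under this curve on that interval are, by the definition of $\tau_{k+1}$, exactly the single atom at time $\tau_{k+1}$. Counting along the intervals therefore shows that $\omega^{\idx}$ of the region under the curve up to time $t$ equals $\cp^{\idx}_t$. Uniqueness of the fixed point follows the same induction: any solution must agree with $\cp$ on each successive interval.

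Finally we need measurability and predictability. The jump times $\tau_k$ are stopping times with respect to $(\mathcal F_t)$, since whether $\tau_{k+1}\le t$ depends only on the atoms in $[0,t]\times[0,\ub]$. The process $\cp$ is adapted and left-continuous, and $\ints(t,\x)$ is continuous in $t$, so $\ints(t,\cp_t)$ is left-continuous and adapted, hence predictable. This measurability bookkeeping is the main technical obstacle, though it is routine. To handle it I would write the event $\{\tau_{k+1}\le t\}$ in terms of countable unions of events $\{\omega^{\idx}(A)\ge1\}$ over rational rectangles $A$, using the continuity of $s\mapsto \ints^{\idx}(s,\x)$ together with the finiteness of the set of possible states $\x_k$ reachable after $k$ jumps.
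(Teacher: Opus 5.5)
Your argument is correct, and it takes a different route from the paper's. The paper follows the cited one-dimensional argument and obtains the process as a fixed point of the map $\mathcal H:\lambda\mapsto(\lambda(t,X_{t,\lambda}))_{t\in[0,T]}$ on predictable processes. It bounds $\mathbb{E}|X_{t,\lambda}-X_{t,\alpha}|$ by a multiple of $\mathbb{E}\int_0^t|\lambda_r-\alpha_r|\,\mathrm{d}r$. It then uses the boundedness of $\lambda(t,\cdot)$ and the integrality of $X$ to get $\mathbb{E}|\lambda(t,X_{t,\lambda})-\lambda(t,X_{t,\alpha})|\le c\,\mathbb{P}[X_{t,\lambda}\neq X_{t,\alpha}]\le c\,\mathbb{E}|X_{t,\lambda}-X_{t,\alpha}|$, and concludes with Banach's theorem for an exponentially weighted $L^1(\mathrm{d}t\otimes\mathrm{d}\mathbb{P})$ metric.

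You instead work on the full-measure set of configurations with finitely many atoms at distinct times. There the state is frozen between jumps, so the curve on $(\tau_k,\tau_{k+1}]$ is the deterministic function $\lambda(\cdot,x_k)$, and the next jump is simply the first atom below it. This gives you several things:
\begin{itemize}
\item an explicit construction, which is exactly a thinning sampler for the process;
\item pathwise existence and uniqueness, rather than uniqueness up to null sets;
\item predictability essentially for free from left-continuity;
\item no stability estimate at all.
\end{itemize}
In particular, you avoid the step where the paper applies Lemma~\ref{lem:stoch_to_determ} to the integrand $1_{\{X^i_{t,\lambda}\ge X^i_{t,\alpha}\}}$. That integrand is $\mathcal F_t$-measurable but not predictable on $[0,t]$. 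The resulting bound is nevertheless true, even with constant $1$, by counting the atoms of $\omega^i$ lying between the two curves.

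In exchange, the fixed-point route is the direct analogue of Picard iteration for SDEs. As a by-product it yields a coupling estimate between counting processes driven by the same Poisson noise with different intensities. That is the natural tool for quantifying the effect of sampling with a learned intensity $\lambda_\theta$ instead of $\lambda$.
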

\begin{proof}
We follow the proof of Lemma 4.3 in  \cite{MR3904394}. To avoid confusion set $\fn^{\idx}(t,\x):=\ints^{\idx}(t,\x)=\frac{\h(t,\x+\basisi)}{\h(t,\x)}$ and let $\fn:=(\fn^1,\ldots,\fn^{\Dim})$. Given a predictable intensity $(\ints_t)_{t\in [0,\Time]}$ let $(\cp_{t,\ints})_{t\in [0,\Time]}$ denote the process defined via \eqref{eq:count_process_def}. Let $\mathcal H:(\ints_t)_{t\in [0,\Time]}\mapsto (\fn(t,\cp_{t,\ints}))_{t\in [0,\Time]}$ be a function from the set of predictable nonnegative processes to itself. The proof of the lemma will be complete if $\mathcal H$ has a fixed point. To this end let $\ints,\alpha$ be two predictable nonnegative processes. Each of the coordinates of $\fn(t,x)$ is bounded from above and below by positive constants since $\reltarget$ is bounded from above and below by positive constants, and by the definition of $\h$. Hence,
\begin{equation}
\label{eq:lip0}
\E[|\fn(t,\cp_{t,\ints})-\fn(t,\cp_{t,\alpha}) |]\le c\,\PP[\cp_{t,\ints}\neq \cp_{t,\alpha}],
\end{equation}
for some constant $c>0$, where $|\cdot|$ is the $L^1$-norm. Since $\cp_{t,\ints},\cp_{t,\alpha}$ are both $\N^{\Dim}$-valued we have 
\begin{equation}
\label{eq:prob_bd_by_exp}
\PP[\cp_{t,\ints}\neq \cp_{t,\alpha}]\le \E[|\cp_{t,\ints}-\cp_{t,\alpha}|],
\end{equation}
because when $\cp_{t,\ints}\neq \cp_{t,\alpha}$ the value of $|\cp_{t,\ints}^{\idx}- \cp_{t,\alpha}^{\idx}|$, for at least one  coordinate, is at least 1. Fix $\idx\in [\Dim]$. By Lemma \ref{lem:stoch_to_determ},
\begin{align*}
&\E[1_{\{\cp_{t,\ints}^{\idx}\ge \cp_{t,\alpha}^{\idx}\}}(\cp_{t,\ints}^{\idx}-\cp_{t,\alpha}^{\idx})]=\E\left[\int_0^t 1_{\{\cp_{t,\ints}^{\idx}\ge \cp_{t,\alpha}^{\idx}\}}\diff\cp_{r,\ints}^{\idx}-\int_0^t 1_{\{\cp_{t,\ints}^{\idx}\ge \cp_{t,\alpha}^{\idx}\}}\diff\cp_{r,\alpha}^{\idx}\right]\\
&=\E\left[\int_0^t 1_{\{\cp_{t,\ints}^{\idx}\ge \cp_{t,\alpha}^{\idx}\}}\ints_r^{\idx}\diff r-\int_0^t 1_{\{\cp_{t,\ints}^{\idx}\ge \cp_{t,\alpha}^{\idx}\}}\alpha_r^{\idx}\diff r\right]\le \E\left[\int_0^t|\ints_r^{\idx}- \alpha_r^{\idx}|\diff r\right],
\end{align*}
and similarly, 
\begin{align*}
&\E[1_{\{\cp_{t,\ints}^{\idx}<\cp_{t,\alpha}^{\idx}\}}(\cp_{t,\ints}^{\idx}-\cp_{t,\alpha}^{\idx})]\le \E\left[\int_0^t|\ints_r^{\idx}- \alpha_r^{\idx}|\diff r\right].
\end{align*}
Hence,
\begin{align*}
\E[|\cp_{t,\ints}^{\idx}-\cp_{t,\alpha}^{\idx}|]&=\E\left[1_{\{\cp_{t,\ints}^{\idx}\ge \cp_{t,\alpha}^{\idx}\}}(\cp_{t,\ints}^{\idx}-\cp_{t,\alpha}^{\idx})\right]+\E\left[1_{\{\cp_{t,\ints}^{\idx}<\cp_{t,\alpha}^{\idx}\}}(\cp_{t,\ints}^{\idx}-\cp_{t,\alpha}^{\idx})\right]\le2\E\left[\int_0^t|\ints_r^{\idx}- \alpha_r^{\idx}|\diff r\right],
\end{align*}
so it follows that 
\begin{equation}
\label{eq:lip1}
\E[|\cp_{t,\ints}-\cp_{t,\alpha}|]\le 2\,\E\left[\int_0^t|\ints_r- \alpha_r|\diff r\right].
\end{equation}
Combining \eqref{eq:lip0}, \eqref{eq:prob_bd_by_exp}, and \eqref{eq:lip1} we conclude 
\begin{equation}
\label{eq:lip2}
\E[|\fn(t,\cp_{t,\ints})-\fn(t,\cp_{t,\alpha}) |]\le 2c\,\E\left[\int_0^t|\ints_r- \alpha_r|\diff r\right].
\end{equation}
Define the distance on the space of predictable nonnegative processes with bounded $L^1$ norm by
\begin{equation}
\label{eq:d_def}
d(\ints,\alpha):=\int_0^{\Time} e^{-2c t}\E[|\ints_t-\alpha_t|]\diff t.
\end{equation}
Then, by \eqref{eq:lip2} and integration by parts,
\begin{align*}
&d(\mathcal H(\ints),\mathcal H(\alpha))=\int_0^{\Time} e^{-4c t}\,\E[|\fn(t,\cp_{t,\ints})-\fn(t,\cp_{t,\alpha})|]\diff t\le  \int_0^{\Time}  2c \,e^{-4c t}\,\E\left[\int_0^t|\ints_r- \alpha_r|\diff r\right]\\
&=  \int_0^{\Time}  -\frac{1}{2}\frac{\diff}{\diff t} \,e^{-4c t}\,\E\left[\int_0^t|\ints_r- \alpha_r|\diff r\right]= \frac{1}{2}\int_0^{\Time}  e^{-4c t}\,\E\left[|\ints_t- \alpha_t|\diff t\right]=\frac{1}{2} d(\ints,\alpha),
\end{align*}
so $\mathcal H$ is $\frac{1}{2}$-Lipschitz with respect to $d$. Since the space of predictable nonnegative processes with bounded $L^1$ norm is complete with respect to $d$, it follows that $\mathcal H$ has a fixed point. 
\end{proof}

\section{Kolmogorov equations of the Poisson-F\"ollmer process}
\label{sec:kolmogorov}
The purpose of this section is to derive the forward and backward Kolmogorov equations of the Poisson-F\"ollmer process. These will have a number of important consequences for us. We begin with the forward equations. 

\begin{lemma}[Kolmogorov forward equations]
The translation probabilities $(\trans_{t|s})_{0\le s\le t\le \Time}$ of $(\cp_t)_{t\in [0,\Time]}$ satisfy, for all $\x_s\le\x_t\in\N^{\Dim}$,
\begin{equation}
\label{eq:Kolmogorov_forward_eq}
\begin{split}
\partial_t\trans_{t|s}(\x_t|\x_s)&=\sum_{\idx=1}^{\Dim}\left[\frac{\h(t,\x_t)}{h(t,\x_t-\basisi)}\trans_{t|s}(\x_t-\basisi|\x_s)-\frac{\h(t,\x_t+\basisi)}{h(t,\x_t)}\trans_{t|s}(\x_t|\x_s)\right]\\
&=\sum_{\idx=1}^{\Dim}\left[\ints^{\idx}(t,\x_t-\basisi)\trans_{t|s}(\x_t-\basisi|\x_s)-\ints^{\idx}(t,\x_t)\trans_{t|s}(\x_t|\x_s)\right],
\end{split}
\end{equation}
and the solution of \cref{eq:Kolmogorov_forward_eq} is
\begin{equation}
\label{eq:Kolmogorov_forward_sol}
\trans_{t|s}(\x_t|\x_s)=1_{\x_s\le \x_t}\frac{\h(t,\x_t)}{\h(s,\x_s)}\pos_{t-s}(\x_t-\x_s),\qquad \trans_{t=s|s}(\x_t|\x_s)=\delta_{\x_s}(\x_t),  \qquad \x_s\le\x_t\in\N^{\Dim}.
\end{equation}
\end{lemma}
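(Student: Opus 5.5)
We prove the two claims in order. First we derive the forward equation \eqref{eq:Kolmogorov_forward_eq} from Corollary \ref{cor:stoch_integ}. Then we check by direct computation that the explicit formula \eqref{eq:Kolmogorov_forward_sol} solves it with the right initial condition.

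\emph{Step 1: deriving \eqref{eq:Kolmogorov_forward_eq}.} Fix $\x_s$ and a target point $\x_t$, and take $\fn=1_{\{\x_t\}}$, which is nonnegative, in Corollary \ref{cor:stoch_integ}. Since $\ints^{\idx}_r=\ints^{\idx}(r,\cp_r)$ by Lemma \ref{lem:pf_exists}, this gives
\[
\trans_{t|s}(\x_t|\x_s)-\delta_{\x_s}(\x_t)=\sum_{\idx=1}^{\Dim}\int_s^t\sum_{y}\left[1_{\{\x_t\}}(y+\basisi)-1_{\{\x_t\}}(y)\right]\ints^{\idx}(r,y)\trans_{r|s}(y|\x_s)\diff r .
\]
The inner sum equals $\ints^{\idx}(r,\x_t-\basisi)\trans_{r|s}(\x_t-\basisi|\x_s)-\ints^{\idx}(r,\x_t)\trans_{r|s}(\x_t|\x_s)$. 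The first term is set to zero when $\x_t-\basisi\notin\N^{\Dim}$, which is consistent because $\trans$ vanishes off $\{y\ge \x_s\}$: the process is non-decreasing. The integrand is bounded, since $\ints$ is bounded above and below by positive constants (as noted in the proof of Lemma \ref{lem:pf_exists}), and it is continuous in $r$ because $r\mapsto \trans_{r|s}$ is continuous. The fundamental theorem of calculus then gives \eqref{eq:Kolmogorov_forward_eq}. The second form of the equation is just the definition \eqref{eq:stoch_int_appen} of $\ints$. The step that needs the most care is justifying the continuity of $r\mapsto\trans_{r|s}(y|\x_s)$. This follows from the integral identity above, whose right-hand side is absolutely continuous in $t$.

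\emph{Step 2: checking the candidate solution.} Define $q_t(\x_t):=1_{\x_s\le\x_t}\frac{\h(t,\x_t)}{\h(s,\x_s)}\pos_{t-s}(\x_t-\x_s)$. At $t=s$ we have $\pos_0=\delta_0$, so $q_s=\delta_{\x_s}$. Differentiating the product in $t$ involves two ingredients:
\begin{itemize}
\item By \eqref{eq:h_log_pde}, $\partial_t\log\h(t,\x_t)=\sum_{\idx}[1-\ints^{\idx}(t,\x_t)]$.
\item By \eqref{eq:Poisson_pde}, $\partial_t\pos_{t-s}(z)=\sum_{\idx}[\pos_{t-s}(z-\basisi)-\pos_{t-s}(z)]$ with $z=\x_t-\x_s$.
\end{itemize}
The terms $-\Dim\, q_t$ and $+\Dim\, q_t$ cancel, which leaves
\[
\partial_t q_t(\x_t)=\sum_{\idx}\left[\frac{\h(t,\x_t)}{\h(s,\x_s)}\pos_{t-s}(\x_t-\basisi-\x_s)-\ints^{\idx}(t,\x_t)q_t(\x_t)\right].
\]
Next rewrite $\h(t,\x_t)=\ints^{\idx}(t,\x_t-\basisi)\,\h(t,\x_t-\basisi)$. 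The first term becomes $\ints^{\idx}(t,\x_t-\basisi)q_t(\x_t-\basisi)$, where the indicator $1_{\x_s\le \x_t-\basisi}$ is supplied automatically by the convention that $\pos_{t-s}$ vanishes at negative arguments. So $q$ solves \eqref{eq:Kolmogorov_forward_eq}.

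\emph{Step 3: uniqueness.} It remains to show that $q_t=\trans_{t|s}$. Order the states $\x_t\ge\x_s$ by $|\x_t-\x_s|_1$. For fixed $\x_t$, \eqref{eq:Kolmogorov_forward_eq} is a linear scalar ODE in $t$ whose forcing term involves only states of strictly smaller level. Induction on the level therefore shows that the solution with initial data $\delta_{\x_s}$ is unique: at level $0$ the ODE reads $\partial_t p=-\sum_{\idx}\ints^{\idx}(t,\x_s)p$ with $p(s)=1$, and at each higher level the forcing term is already determined by the previous levels. Hence $q_t=\trans_{t|s}$, which proves \eqref{eq:Kolmogorov_forward_sol}.
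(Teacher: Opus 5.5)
Your proposal is correct and follows the paper's proof. You obtain \eqref{eq:Kolmogorov_forward_eq} by plugging the indicator $1_{\{\x_t\}}$ into Corollary \ref{cor:stoch_integ} and differentiating in $t$, and you verify \eqref{eq:Kolmogorov_forward_sol} by differentiating the candidate with \eqref{eq:Poisson_pde} and the equation for $\partial_t \h$, exactly as the paper does. Your continuity justification and your Step 3 uniqueness argument (induction on $|\x_t-\x_s|_1$, where each level is a scalar linear ODE forced only by the previous level) supply steps the paper leaves implicit when it identifies the verified solution with $\trans_{t|s}$.
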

\begin{proof}
Fix any $L^1(\pos_{\Time})$-integrable function $\fn:\N^{\Dim}\to \R$. The identity \eqref{eq:stoch_integ} can be written as 
\begin{equation}
\label{eq:stoch_integ_equiv}
\begin{split}
&\sum_{\x_t\in\N^{\Dim}}\fn(\x_t)\trans_{t|s}(\x_t|\x_s)-\fn(\x_s)=\\
&\sum_{\idx =1}^{\Dim}\int_s^t \left[\sum_{\x_r\in\N^{\Dim}}\fn(\x_r+e^{\idx})\frac{\h(r,\x_r+e^{\idx})}{\h(r,\x_r)}\trans_{r|s}(\x_r|\x_s)-\sum_{\x_r\in\N^{\Dim}}\fn(\x_r)\frac{\h(r,\x_r+e^{\idx})}{\h(r,\x_r)}\trans_{r|s}(\x_r|\x_s)\right]\diff r,
\end{split}
\end{equation}
so taking $G(x)=1_{x=y}$, for a fixed $y\in \N^{\Dim}$, gives 
\begin{equation}
\label{eq:stoch_integ_G_indicator}
\begin{split}
&\trans_{t|s}(y|\x_s)-1_{y=\x_s}\\
&=\sum_{\idx =1}^{\Dim}\int_s^t \left[\sum_{\x_r\in\N^{\Dim}}\frac{\h(r,y)}{\h(r,y-e^{\idx})}\trans_{r|s}(y-e^{\idx}|\x_s)-\sum_{\x_r\in\N^{\Dim}}\frac{\h(r,y+e^{\idx})}{\h(r,y)}\trans_{r|s}(y|\x_s)\right]\diff r.
\end{split}
\end{equation}
Differentiating \eqref{eq:stoch_integ_G_indicator} with respect to $t$ establishes \eqref{eq:Kolmogorov_forward_eq}, where we used \eqref{eq:stoch_int_appen}.

To verify \eqref{eq:Kolmogorov_forward_sol} fix $s,\x_s$, and for $0\le t\le \Time$ let
\[
g(t,\x):=1_{\x_s\le\x}\frac{\h(t,\x)}{\h(s,\x_s)}\pos_{t-s}(\x-\x_s).
\]
By \eqref{eq:Poisson_pde} and \eqref{eq:h_pde},
\begin{align*}
\partial_tg(t,\x)&=1_{\x_s\le\x}\frac{1}{\h(s,\x_s)}\sum_{\idx=1}^{\Dim}\left[-\h(t,\x)\der_{\idx}\pos_{t-s}(\x-\x_s-e^{\idx})-\der_{\idx}\h(t,\x)\pos_{t-s}(\x-\x_s)\right]\\
&=1_{\x_s\le\x}\frac{1}{\h(s,\x_s)}\sum_{\idx=1}^{\Dim}\left[-\h(t,\x)\pos_{t-s}(\x-\x_s)+\h(t,\x)\pos_{t-s}(\x-\x_s-e^{\idx})\right]\\
&+1_{\x_s\le\x}\frac{1}{\h(s,\x_s)}\sum_{\idx=1}^{\Dim}\left[-\h(t,\x+e^{\idx})\pos_{t-s}(\x-\x_s)+\h(t,\x)\pos_{t-s}(\x-\x_s)\right]\\
&=1_{\x_s\le\x}\frac{1}{\h(s,\x_s)}\sum_{\idx=1}^{\Dim}\left[\h(t,\x)\pos_{t-s}(\x-\x_s-e^{\idx})-\h(t,\x+e^{\idx})\pos_{t-s}(\x-\x_s)\right]\\
&=1_{\x_s\le\x}\frac{1}{\h(s,\x_s)}\sum_{\idx=1}^{\Dim}\left[\frac{\h(t,\x)}{\h(t,\x-e^{\idx})}\h(t,\x-e^{\idx})\pos_{t-s}(\x-\x_s-e^{\idx})-\frac{\h(t,\x+e^{\idx})}{\h(t,\x)}\h(t,\x)\pos_{t-s}(\x-\x_s)\right]\\
&=\sum_{\idx=1}^{\Dim}\left[\frac{\h(t,\x)}{\h(t,\x-e^{\idx})}g(t,\x-e^{\idx})-\frac{\h(t,\x+e^{\idx})}{\h(t,\x)}g(t,\x)\right],
\end{align*}
which is precisely \eqref{eq:Kolmogorov_forward_eq}. 
\end{proof}
A corollary of the Kolmogorov forward equation is that we can find an explicit form for the time marginals of the Poisson-F\"ollmer process.  In particular, note that the next result shows that at time $\Time$ the Poisson-F\"ollmer process follows the data distribution $\target$. 
\begin{corollary}[Time marginals]
\label{cor:time_marg}
The time marginals $(\flow_t)_{t\in [0,\Time]}$ of $(\cp_t)_{t\in [0,\Time]}$ satisfy the equation
\begin{equation}
\label{eq:time_marg_eq}
\begin{split}
\partial_t\flow_t(\x_t)=-\sum_{\idx=1}^{\Dim}\der_{\idx}[\ints^{\idx}(t,\x_t-\basisi)\flow_t(\x_t-\basisi)],
\end{split}
\end{equation}
and the solution to \cref{eq:time_marg_eq} is
\begin{equation}
\label{eq:time_marg}
\flow_t(\x_t)=\h(t,\x_t)\pos_t(\x_t),  \qquad \x_t\in\N^{\Dim}.
\end{equation}
\end{corollary}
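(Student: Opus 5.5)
The plan is to get both statements directly from the transition kernel \eqref{eq:Kolmogorov_forward_sol}, using that the process starts at $\cp_0=0$.

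First, the explicit formula \eqref{eq:time_marg}. Since $\cp_0=0$ almost surely, we have $\flow_t(\x_t)=\trans_{t|0}(\x_t|0)$. Taking $s=0$ and $\x_s=0$ in \eqref{eq:Kolmogorov_forward_sol} gives
\[
\flow_t(\x_t)=\frac{\h(t,\x_t)}{\h(0,0)}\pos_t(\x_t).
\]
Here the indicator $1_{0\le \x_t}$ is automatic on $\N^{\Dim}$. By \eqref{eq:h_bd} we have $\h(0,0)=1$, so $\flow_t(\x_t)=\h(t,\x_t)\pos_t(\x_t)$. At $t=\Time$, \eqref{eq:h_bd} gives $\h(\Time,\cdot)=\reltarget$, hence $\flow_{\Time}=\reltarget\pos_{\Time}=\target$.

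Second, the evolution equation \eqref{eq:time_marg_eq}. Specializing the forward equation \eqref{eq:Kolmogorov_forward_eq} to $s=0$, $\x_s=0$ gives
\[
\partial_t\flow_t(\x_t)=\sum_{\idx=1}^{\Dim}\left[\ints^{\idx}(t,\x_t-\basisi)\flow_t(\x_t-\basisi)-\ints^{\idx}(t,\x_t)\flow_t(\x_t)\right],
\]
where $\flow_t(\x-\basisi)$ is taken to be $0$ whenever $\x-\basisi\notin\N^{\Dim}$. With $\der_{\idx}$ as in \eqref{eq:partial_der}, the bracket equals $-\der_{\idx}[\ints^{\idx}(t,\cdot-\basisi)\flow_t(\cdot-\basisi)](\x_t)$, which is \eqref{eq:time_marg_eq}.

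As an independent check that does not rely on uniqueness for the forward equation, one can verify directly that $\h\pos_t$ solves \eqref{eq:time_marg_eq}, by repeating the computation done for $g$ with $s=0$. The product rule combines \eqref{eq:h_pde} with \eqref{eq:Poisson_pde}. The identity $\ints^{\idx}(t,\x-\basisi)\h(t,\x-\basisi)=\h(t,\x)$ then turns the drift term into the required form. The boundary convention $\pos_t(\x)=0$ for $\x\notin\N^{\Dim}$ handles coordinates equal to zero.

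The main obstacle is bookkeeping at the boundary $\x^{\idx}=0$, where the terms involving $\x-\basisi$ must vanish. This follows from the zero-extension of $\pos_t$ and of $\flow_t$. Everything else is a direct specialization of the preceding lemma.
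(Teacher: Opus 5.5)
Your proposal is correct and matches the paper's proof: set $s=0$, $x_s=0$ in \eqref{eq:Kolmogorov_forward_eq}--\eqref{eq:Kolmogorov_forward_sol}, use $p_t=p_{t|0}(\cdot|0)$, and use $h(0,0)=1$. One quibble: your ``independent check'' only shows that $h\pi_t$ is \emph{a} solution, so it identifies $p_t$ only through uniqueness, and the phrase ``does not rely on uniqueness'' is therefore backwards. Uniqueness (which the paper's lemma also uses implicitly) is immediate by induction on $\sum_i x^i$: jumps are only upward, so each $p_t(x)$ solves a scalar linear ODE driven by lower levels, with $p_0=\delta_0$.
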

\begin{proof}
Since $\cp_0=0$ we have $\flow_t(\x_t)=\trans_{t|0}(\x_t|0)$. Thus, \eqref{eq:time_marg_eq}-\eqref{eq:time_marg} follow by plugging $s=0$ into \eqref{eq:Kolmogorov_forward_eq}-\eqref{eq:Kolmogorov_forward_sol}, respectively, and using \eqref{eq:h_bd}. 
\end{proof}
Next we turn to the Kolmogorov backward equations.
\begin{lemma}[Kolmogorov backward equations]
\label{lem:Kolmogorov_backward_eq}
Fix $t\in [0,\Time]$ and let $\fn:\N^{\Dim}\to \R$ be an $L^1(\pos_{\Time})$-integrable function. For $s\in [0,t]$ and $\x_s\in\N^{\Dim}$ set
\[
\fn(s,\x_s):=\E[\fn(\cp_t)|\cp_s=\x_s], \qquad\qquad \fn(t,\x_t)=\fn(\x_t).
\]
Then,
\begin{equation}
\label{eq:Kolmogorov_backward_eq}
\partial_s\fn(s,\x_s)=-\sum_{\idx =1}^{\Dim}\frac{\h(s,\x_s+e^{\idx})}{\h(s,\x_s)}\der_{\idx}\fn(s,\x_s)=-\sum_{\idx=1}^{\Dim}\ints^{\idx}(s,\x_s)\der_{\idx}\fn(s,\x_s),
\end{equation}
and the solution of \cref{eq:Kolmogorov_backward_eq} is 
\begin{equation}
\label{eq:Kolmogorov_backward_sol}
\fn(s,\x_s)=\frac{\semi_{\Time-s}(\fn\h(t,\cdot))(\x_s)}{\h(s,\x_s)}.
\end{equation}
In particular,
\begin{equation}
\label{eq:Kolmogorov_backward_eq_trans}
\partial_s\trans_{t|s}(\x_t|\x_s)=-\sum_{\idx=1}^{\Dim}\frac{\h(s,\x_s+e^{\idx})}{\h(s,\x_s)}\der_{\idx}\trans_{t|s}(\x_t|\x_s)=-\sum_{\idx=1}^{\Dim}\ints^{\idx}(s,\x_s)\der_{\idx}\trans_{t|s}(\x_t|\x_s).
\end{equation}
\end{lemma}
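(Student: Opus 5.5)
The plan is to get the explicit formula \eqref{eq:Kolmogorov_backward_sol} first, directly from the transition probabilities in \eqref{eq:Kolmogorov_forward_sol}, and then obtain the PDE \eqref{eq:Kolmogorov_backward_eq} by differentiating that formula. This reverses the logical order of the statement.

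\emph{Step 1 (closed form).} By the Markov property and \eqref{eq:Kolmogorov_forward_sol},
\[
\fn(s,\x_s)=\sum_{\x_t\in\N^{\Dim}}\fn(\x_t)\trans_{t|s}(\x_t|\x_s)=\frac{1}{\h(s,\x_s)}\sum_{y\in\N^{\Dim}}\fn(\x_s+y)\,\h(t,\x_s+y)\,\pos_{t-s}(y).
\]
The last sum is $\semi_{t-s}(\fn\h(t,\cdot))(\x_s)$. The semigroup index must be $t-s$ so that the formula reduces to $\fn(t,\x_t)=\fn(\x_t)$ at $s=t$; I read the $\Time-s$ in the statement as this.

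Integrability needs a word. Since $\reltarget$ is bounded above and below by positive constants, so is $\h(t,\cdot)=\semi_{\Time-t}\reltarget$. Hence $|\fn|\h(t,\cdot)$ is summable against $\pos_{t-s}(\cdot-\x_s)$ exactly when $\E[|\fn(\cp_t)|\,|\,\cp_s=\x_s]<\infty$. I will take this finiteness as part of the $L^1$ hypothesis on $\fn$. It holds, for instance, for indicators and for bounded $\fn$, which are the cases used later.

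\emph{Step 2 (PDE).} Set $u(s,\x):=\semi_{t-s}(\fn\h(t,\cdot))(\x)$. Lemma \ref{lem:poisson_semi_pde}, together with the chain rule in $s$, gives $\partial_s u=-\sum_{\idx}\der_{\idx}u$, and \eqref{eq:h_pde} gives $\partial_s\h=-\sum_{\idx}\der_{\idx}\h$. The quotient rule then yields
\[
\partial_s\frac{u}{\h}(s,\x)=-\sum_{\idx=1}^{\Dim}\left[\frac{u(s,\x+\basisi)}{\h(s,\x)}-\frac{u(s,\x)\,\h(s,\x+\basisi)}{\h(s,\x)^2}\right]
=-\sum_{\idx=1}^{\Dim}\frac{\h(s,\x+\basisi)}{\h(s,\x)}\left[\frac{u(s,\x+\basisi)}{\h(s,\x+\basisi)}-\frac{u(s,\x)}{\h(s,\x)}\right].
\]
Here the terms $\pm u(s,\x)/\h(s,\x)$ coming from the two derivatives cancel. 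The right-hand side is exactly $-\sum_{\idx}\ints^{\idx}(s,\x)\der_{\idx}\fn(s,\x)$, using \eqref{eq:stoch_int_appen}, which proves \eqref{eq:Kolmogorov_backward_eq}.

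\emph{Step 3 (transition probabilities).} Applying Step 2 to $\fn=1_{\{\cdot=\x_t\}}$ gives \eqref{eq:Kolmogorov_backward_eq_trans}, since then $\fn(s,\x_s)=\trans_{t|s}(\x_t|\x_s)$.

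The only delicate point is justifying the termwise differentiation of the series defining $u$, i.e.\ applying Lemma \ref{lem:poisson_semi_pde} to the function $\fn\h(t,\cdot)$. Boundedness of $\h$ lets this be dominated by the integrability of $\fn$. Indicators are trivial since the sum is then finite.
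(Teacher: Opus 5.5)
Your proof is correct, and it takes a different route from the paper's. The paper's route has two steps. First comes a verification step: if $\tilde\fn$ solves \eqref{eq:Kolmogorov_backward_eq} with terminal value $\fn$, the discrete It\^o formula (Lemma \ref{lem:ito}) makes $s\mapsto\tilde\fn(s,\cp_s)$ a martingale, so $\tilde\fn(s,\cp_s)=\E[\fn(\cp_t)|\cp_s]$. Second, it checks that the closed form solves the PDE, using exactly the quotient-rule computation of your Step 2; this supplies the solution that the martingale step presupposes. You instead read the closed form directly off the forward kernel \eqref{eq:Kolmogorov_forward_sol} and differentiate it, so the martingale step disappears. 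What your route buys is brevity. It also avoids having to know a priori that a solution exists, and avoids justifying that the compensated jump integral is a true martingale rather than a local one. The price is that it rests entirely on the explicit kernel \eqref{eq:Kolmogorov_forward_sol}, whose own proof only checks that the formula solves the forward system. The paper's argument is the structural, generator-based one: it identifies conditional expectations with solutions of the backward equation without needing to know the transition kernel.

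Your correction of the semigroup index is right. With $\semi_{\Time-s}$, the formula at $s=t$ gives $\semi_{\Time-t}(\fn\h(t,\cdot))/\h(t,\cdot)$, which differs from $\fn$ unless $t=\Time$. That case is the only one the paper actually uses: Lemma \ref{lem:denoiser_rate} has terminal time $\Time$ and $\h(\Time,\cdot)=\reltarget$.

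Two small remarks on integrability. First, you need not add finiteness as an extra hypothesis. For $s>0$ one has $t-s<\Time$, and then $\sup_{z\ge\x}\pos_{t-s}(z-\x)/\pos_{\Time}(z)<\infty$; in one dimension this ratio is $e^{\Time-r}r^{-a}(r/\Time)^z\,z!/(z-a)!$ with $r=t-s$, $a=\x$, which is bounded for $r<\Time$. Hence $\fn\in L^1(\pos_{\Time})$ already makes $\semi_{t-s}(|\fn|\h(t,\cdot))(\x)$ finite for every $\x$; at $s=0$ only $\x=0$ matters. Second, the later use is not only to indicators or bounded functions. Lemma \ref{lem:denoiser_rate} applies the formula with the unbounded $\fn(\x)=\x^{\idx}$, which is of course covered by the previous remark.
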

\begin{proof}
Let $\tilde{\fn}$ be a solution of \eqref{eq:Kolmogorov_backward_eq}.  Then, by \eqref{eq:Ito}, 
\begin{align*}
&\tilde{\fn}(s,\cp_s)=\fn(0,\cp_0)+\int_0^s\partial_r\tilde{\fn}(r,\cp_r)\diff r+\int_0^s\sum_{\idx=1}^{\Dim}\der_{\idx}\tilde{\fn}(r,\cp_r)\cp^{\idx}(\diff r)\\
&=\fn(0,\cp_0)+\int_0^s\left[\partial_r\tilde{\fn}(r,\cp_r)+\sum_{\idx=1}^{\Dim}\der_{\idx}\tilde{\fn}(r,\cp_r)\ints^{\idx}(r,\cp_r)\right]\diff r+\int_0^s\sum_{\idx=1}^{\Dim}\der_{\idx}\tilde{\fn}(r,\cp_r)[\cp^{\idx}(\diff r)-\ints^{\idx}(r,\cp_r)\diff r]\\
&=\fn(0,\cp_0)+0+\int_0^s\sum_{\idx=1}^{\Dim}\der_{\idx}\tilde{\fn}(r,\cp_r)[\cp^{\idx}(\diff r)-\ints^{\idx}(r,\cp_r)\diff r],
\end{align*}
which shows that $s\mapsto \tilde{\fn}(s,\cp_s)$ is a martingale with respect to the filtration generated by $\cp$, satisfying $\tilde{\fn}(t,\cp_t)=\fn(\cp_t)$. In particular,
\[
\tilde{\fn}(s,\cp_s)=\E[\tilde{\fn}(t,\cp_t)|\cp_s]=\E[\fn(\cp_t)|\cp_s]=\fn(s,\cp_s),
\]
which proves that $\fn(s,\x_s)$ satisfies \eqref{eq:Kolmogorov_backward_eq}.

To verify \eqref{eq:Kolmogorov_backward_sol} let $g(s,\x):=\frac{\semi_{\Time-s}(\fn\h(t,\cdot))(\x)}{\h(s,\x)}$ and compute, by \eqref{eq:Poisson_semi_pde} and \eqref{eq:h_pde},
\begin{align*}
\partial_sg(s,\x)&=\frac{\partial_s\semi_{\Time-s}(\fn\h(t,\cdot))(\x)}{\h(s,\x)}-\frac{\semi_{\Time-s}(\fn\h(t,\cdot))(\x)}{\h(s,\x)^2}\partial_s\h(s,\x)\\
&=\sum_{\idx=1}^{\Dim}\left[\frac{-\der_{\idx}\semi_{\Time-s}(\fn\h(t,\cdot))(\x)}{\h(s,\x)}+\frac{\semi_{\Time-s}(\fn\h(t,\cdot))(\x)}{\h(s,\x)^2}\der_{\idx}\h(s,\x)\right]\\
&=\sum_{\idx=1}^{\Dim}\frac{\semi_{\Time-s}(\fn\h(t,\cdot))(\x)-\semi_{\Time-s}(\fn\h(t,\cdot))(\x+\basisi)}{\h(s,\x)}+\sum_{\idx=1}^{\Dim}\frac{\semi_{\Time-s}(\fn\h(t,\cdot))(\x)[\h(s,\x+\basisi)-\h(s,x)]}{\h(s,\x)^2}\\
&=\sum_{\idx=1}^{\Dim}\left\{\frac{\semi_{\Time-s}(\fn\h(t,\cdot))(\x)\h(s,\x+\basisi)}{\h(s,\x)^2}\frac{-\semi_{\Time-s}(\fn\h(t,\cdot))(\x+\basisi)}{\h(s,\x)}\right\}. 
\end{align*}
On the other hand, for  each $\idx=1,\ldots,\Dim$, 
\begin{align*}
\frac{\h(s,\x+e^{\idx})}{\h(s,\x)}\der_{\idx}g(s,\x)&=\frac{\h(s,\x+e^{\idx})}{\h(s,\x)}\left\{\frac{\semi_{\Time-s}(\fn\h(t,\cdot))(\x+e^{\idx})}{\h(s,\x+e^{\idx})}-\frac{\semi_{\Time-s}(\fn\h(t,\cdot))(\x)}{\h(s,\x)}\right\}\\
&=\frac{\semi_{\Time-s}(\fn\h(t,\cdot))(\x+e^{\idx})}{\h(s,\x)}-\frac{\semi_{\Time-s}(\fn\h(t,\cdot))(\x)\h(s,\x+e^{\idx})}{\h(s,\x)^2},
\end{align*}
which completes the proof of \eqref{eq:Kolmogorov_backward_sol}. 

Equation \eqref{eq:Kolmogorov_backward_eq_trans} follows from \eqref{eq:Kolmogorov_backward_eq} by noting that 
\[
\trans_{t|s}(\x_t|\x_s)=\E[1_{\cp_t=\x_t}|\cp_s=\x_s].
\]
\end{proof}

\section{Proofs}
\label{sec:proofs}
In this section we prove Proposition \ref{prop:key_properties} and Proposition \ref{prop:loglike}.

\subsection{Proof of Proposition \ref{prop:key_properties}}
\label{subsec:proof_key_prop}
Property (a) follows from Corollary \ref{cor:time_marg} with $t=\Time$. The next lemma establishes property (b). 
\begin{lemma}[Bridges of Poisson-F\"ollmer process]
\label{lem:bridge}
Let $(\cp_t)_{t\in [0,\Time]}$ be the Poisson-F\"ollmer process. Then,
$\cp_t|\cp_{\Time}\sim \B_{\cp_{\Time},\frac{t}{\Time}}$.
\end{lemma}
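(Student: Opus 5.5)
The plan is to compute the joint law of $(\cp_t,\cp_{\Time})$ explicitly from the transition probabilities \eqref{eq:Kolmogorov_forward_sol} and the marginals of Corollary \ref{cor:time_marg}, and then apply Bayes' rule. By the Markov property of the Poisson-F\"ollmer process (it is built from a state-dependent intensity, and its transitions solve \eqref{eq:Kolmogorov_forward_eq}), for $0<t<\Time$ and $\x_t\le\x_{\Time}$ we have
\[
\PP[\cp_t=\x_t,\cp_{\Time}=\x_{\Time}]=\flow_t(\x_t)\,\trans_{\Time|t}(\x_{\Time}|\x_t).
\]
Corollary \ref{cor:time_marg} gives $\flow_t(\x_t)=\h(t,\x_t)\pos_t(\x_t)$. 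Formula \eqref{eq:Kolmogorov_forward_sol} with $s=t$ and terminal time $\Time$ gives
\[
\trans_{\Time|t}(\x_{\Time}|\x_t)=1_{\x_t\le\x_{\Time}}\frac{\h(\Time,\x_{\Time})}{\h(t,\x_t)}\pos_{\Time-t}(\x_{\Time}-\x_t).
\]
The factors $\h(t,\x_t)$ cancel. Using $\h(\Time,\cdot)=\reltarget$ from \eqref{eq:h_bd}, the joint law becomes
\[
1_{\x_t\le\x_{\Time}}\,\reltarget(\x_{\Time})\,\pos_t(\x_t)\,\pos_{\Time-t}(\x_{\Time}-\x_t).
\]

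Next, divide by $\flow_{\Time}(\x_{\Time})=\target(\x_{\Time})=\reltarget(\x_{\Time})\pos_{\Time}(\x_{\Time})$, which is positive by the lower bound on $\reltarget$. The conditional law is then
\[
\PP[\cp_t=\x_t\mid\cp_{\Time}=\x_{\Time}]=1_{\x_t\le\x_{\Time}}\frac{\pos_t(\x_t)\pos_{\Time-t}(\x_{\Time}-\x_t)}{\pos_{\Time}(\x_{\Time})}.
\]
This is independent of $\reltarget$: it is exactly the bridge of a standard Poisson process. Since $\pos$ factorizes over coordinates by \eqref{eq:pos_multi_def}, it suffices to check in one dimension, with $n=\x_{\Time}^{\idx}$ and $k=\x_t^{\idx}$, that
\[
\frac{e^{-t}t^k/k!\cdot e^{-(\Time-t)}(\Time-t)^{n-k}/(n-k)!}{e^{-\Time}\Time^n/n!}={n\choose k}\Big(\frac{t}{\Time}\Big)^k\Big(1-\frac{t}{\Time}\Big)^{n-k}=\B_{n,\frac t\Time}(k).
\]
Taking the product over $\idx$ gives $\B_{\x_{\Time},\frac t\Time}(\x_t)$. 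The endpoint cases $t=0$ and $t=\Time$ are immediate, since $\cp_0=0$ and the transition at $t=s$ is the identity.

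The main obstacle is justifying the factorization of the joint law, i.e.\ the Markov property of $\cp$. This means that $\PP[\cp_{\Time}=y\mid\mathcal F_t]$ depends only on $\cp_t$ and equals $\trans_{\Time|t}(y|\cp_t)$. I would obtain this from the martingale argument in Lemma \ref{lem:Kolmogorov_backward_eq}. Apply it with $\fn=1_{\{\cdot=y\}}$, run from time $t$ with respect to the full filtration $\mathcal F_t$. The function $g(s,\x)=\semi_{\Time-s}(\fn\h(\Time,\cdot))(\x)/\h(s,\x)$ solves the backward equation \eqref{eq:Kolmogorov_backward_eq}, so by \eqref{eq:Ito} the process $g(s,\cp_s)$ is an $\mathcal F_s$-martingale. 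Boundedness of $\reltarget$ makes the intensities bounded, so the compensated integrals are true martingales. Hence
\[
\E[1_{\cp_{\Time}=y}\mid\mathcal F_t]=g(t,\cp_t).
\]
A direct evaluation gives $g(t,\x)=\reltarget(y)\pos_{\Time-t}(y-\x)/\h(t,\x)$, which matches \eqref{eq:Kolmogorov_forward_sol}. This gives the Markov property and the joint law above at once.
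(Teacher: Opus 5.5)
Your proof is correct and is essentially the paper's argument: Bayes' rule combining $\flow_t=\h(t,\cdot)\pos_t$ from Corollary \ref{cor:time_marg} with the transition formula \eqref{eq:Kolmogorov_forward_sol}, cancelling $\h(t,\x)$ and $\reltarget(y)$, and recognizing the Poisson bridge $\pos_t(\x)\pos_{\Time-t}(y-\x)/\pos_{\Time}(y)$ as $\B_{y,\frac{t}{\Time}}(\x)$. You also normalize correctly by $\flow_{\Time}(y)=\reltarget(y)\pos_{\Time}(y)$, where the paper's first Bayes display has the typo $\pos_{\Time}(y)$. Your martingale justification of the Markov property is sound but more than needed, since Bayes' rule for the pair $(\cp_t,\cp_{\Time})$ only uses $\trans_{\Time|t}(y|\x)=\PP[\cp_{\Time}=y\mid\cp_t=\x]$, and the forward equation is derived via Corollary \ref{cor:stoch_integ} by conditioning only on the event $\{\cp_s=\x_s\}$, which is why the paper takes this identification for granted.
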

\begin{proof}
Fix $x,y\in\N^{\Dim}$ with $y\ge x$. We have
\begin{align*}
&\trans_{\Time|t}(y|\x)\overset{\eqref{eq:Kolmogorov_forward_sol}}{=}\frac{\h(\Time,y)}{\h(t,\x)}\pos_{\Time-t}(y-\x)\overset{\eqref{eq:h_bd}}{=}\frac{\reltarget(y)}{\h(t,\x)}\pos_{\Time-t}(y-\x),\\
&\flow_t(\x)\overset{\eqref{eq:time_marg}}{=}\h(t,\x)\pos_t(\x),\qquad \flow_{\Time}(y)\overset{\eqref{eq:time_marg}-\eqref{eq:h_bd}}{=}\reltarget(y)\pos_{\Time}(y),
\end{align*} 
so by Bayes' rule
\begin{align*}
\trans_{t|\Time}(\x|y)&=\trans_{\Time|t}(y|\x)\frac{\flow_t(\x)}{\pos_{\Time}(y)}=\frac{\reltarget(y)}{\h(t,\x)}\pos_{\Time-t}(y-\x)\frac{h(t,\x)\pos_t(\x)}{\reltarget(y)\pos_{\Time}(y)}=\frac{\pos_{\Time-t}(y-\x)\pos_t(\x)}{\pos_{\Time}(y)}\\
&=\prod_{\idx=1}^{\Dim} {y^{\idx}\choose \x^{\idx}}\left(\frac{t}{\Time}\right)^{\x^{\idx}}\left(\frac{\Time -t}{\Time}\right)^{y^{\idx}-\x^{\idx}}=\B_{y,\frac{t}{\Time}}(\x).
\end{align*}
\end{proof}
The next lemma establishes property (c). 
\begin{lemma}[Relation between denoiser and rate]
\label{lem:denoiser_rate}
Let $(\cp_t)_{t\in [0,\Time]}$ be the Poisson-F\"ollmer process. Then,
\begin{equation}
\label{eq:denoiser_rate}
\ints(t,\cp_t)=\frac{\E[\cp_{\Time}|\cp_t]-\cp_t}{\Time -t}. 
\end{equation}
\end{lemma}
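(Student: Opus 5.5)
The plan is to compute $\E[\cp_{\Time}|\cp_t=\x]$ explicitly from the transition probabilities in \eqref{eq:Kolmogorov_forward_sol} and compare it with the intensity \eqref{eq:stoch_int_appen}. Fix $t\in[0,\Time)$ and $\x\in\N^{\Dim}$. By \eqref{eq:Kolmogorov_forward_sol} together with \eqref{eq:h_bd},
\[
\trans_{\Time|t}(y|\x)=1_{\x\le y}\frac{\reltarget(y)}{\h(t,\x)}\pos_{\Time-t}(y-\x),
\]
so writing $y=\x+z$ with $z\in\N^{\Dim}$ gives
\[
\E[\cp_{\Time}-\cp_t\,|\,\cp_t=\x]=\frac{1}{\h(t,\x)}\sum_{z\in\N^{\Dim}} z\,\reltarget(\x+z)\pos_{\Time-t}(z).
\]
Since $\reltarget$ is bounded, all the sums converge absolutely.

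The key step is a Poisson ``Stein identity'' applied coordinatewise: $z^{\idx}\pos_{s}(z)=s\,\pos_s(z-\basisi)$, with the convention $\pos_s=0$ off $\N^{\Dim}$. This is exactly the computation already used in the proof of \eqref{eq:Poisson_pde}. With $s=\Time-t$, we substitute $z=w+\basisi$ to obtain
\[
\sum_{z} z^{\idx}\reltarget(\x+z)\pos_{\Time-t}(z)=(\Time-t)\sum_{w\in\N^{\Dim}}\reltarget(\x+\basisi+w)\pos_{\Time-t}(w)=(\Time-t)\,\semi_{\Time-t}\reltarget(\x+\basisi)=(\Time-t)\,\h(t,\x+\basisi).
\]
Dividing by $\h(t,\x)$, the $\idx$th coordinate of $\E[\cp_{\Time}|\cp_t=\x]-\x$ equals $(\Time-t)\ints^{\idx}(t,\x)$. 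Rearranging this and evaluating at $\x=\cp_t$ gives \eqref{eq:denoiser_rate}.

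Finally, we identify $\E[\cp_{\Time}|\cp_t]$ with the denoiser $\den(t,\cp_t)$ from \eqref{eq:denoiser_def}. By Lemma \ref{lem:bridge} and property (a), the joint law of $(\cp_t,\cp_{\Time})$ is $\B_{y,t/\Time}(\x)\target(y)$. This is precisely the joint law used to define $\trans_{\Time|t}$ in \eqref{eq:denoiser_def}, so the two conditional expectations coincide. This step requires $\flow_t(\x)>0$, which holds because $\h>0$. An alternative route is the backward formula \eqref{eq:Kolmogorov_backward_sol} with $\fn(y)=y^{\idx}$, but the direct computation above is simpler.

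I expect no real obstacle; the only points needing care are the index shift in the Poisson identity, the boundary convention at $z^{\idx}=0$, and the endpoint $t=\Time$, which is excluded since the formula only makes sense for $t<\Time$.
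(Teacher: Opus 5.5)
Your proposal is correct and is essentially the paper's argument. The paper writes $\E[\cp_{\Time}^{\idx}|\cp_t]=\frac{1}{\h(t,\cp_t)}\sum_{y}\reltarget(\cp_t+y)(\cp_t^{\idx}+y^{\idx})\pos_{\Time-t}(y)$ via the backward formula \eqref{eq:Kolmogorov_backward_sol}, which amounts to summing against the kernel from \eqref{eq:Kolmogorov_forward_sol} as you do, and then uses the same shift $y^{\idx}\pos_{\Time-t}(y)=(\Time-t)\pos_{\Time-t}(y-\basisi)$ to obtain $(\Time-t)\h(t,\cp_t+\basisi)$. Your extra identification of $\E[\cp_{\Time}|\cp_t]$ with $\den(t,\cp_t)$ via Lemma \ref{lem:bridge} and property (a) is correct and makes explicit what the paper leaves implicit; positivity of $\flow_t$ is only needed on the support of $\cp_t$, which matters only at $t=0$, where $\flow_0=\delta_0$.
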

\begin{proof}
Fix $\idx\in [\Dim]$. Applying \eqref{eq:Kolmogorov_backward_sol} with $\fn(s,\x_s):=\x_s^{\idx}$ we have 
\begin{align*}
\E[\cp_{\Time}^{\idx}|\cp_t]=\frac{1}{\h(t,\cp_T)}\sum_{y\in \N^{\Dim}} \reltarget(\cp_t+y)(\cp_t^{\idx}+y^{\idx})\pos_{\Time-t}(y)=\cp_t^{\idx}+\frac{1}{\h(t,\cp_T)}\sum_{y\in \N^{\Dim}} \reltarget(\cp_t+y)y^{\idx}\pos_{\Time-t}(y),
\end{align*}
where the last equality used the definition of $\h$. Define $\reltarget_{\x^{-\idx},y^{-\idx}}:\N\to\R$ by
\[
\reltarget_{\x^{-\idx},y^{-\idx}}(k):=\reltarget(\x^{-\idx}+y^{-\idx},k),
\]
and compute
\begin{align*}
&\sum_{y\in \N^{\Dim}} \reltarget(\cp_t+y)y^{\idx}\pos_{\Time-t}(y)=\sum_{y^{-\idx}\in\N^{\Dim-1}}\left(\prod_{j\neq \idx}\pos_{\Time-t}(y^j)\right)\sum_{y^{\idx}\in\N}\reltarget_{\x^{-\idx},y^{-\idx}}(\cp_t^{\idx}+y^{\idx})y^{\idx}e^{-(\Time -t)}\frac{(\Time -t)^{y^{\idx}}}{y^{\idx}!}\\
&=(\Time-t)\sum_{y^{-\idx}\in\N^{\Dim-1}}\left(\prod_{j\neq \idx}\pos_{\Time-t}(y^j)\right)\sum_{y^{\idx}\in\N}\reltarget_{\x^{-\idx},y^{-\idx}}((\cp_t^{\idx}+1)+(y^{\idx}-1))e^{-(\Time -t)}\frac{(\Time -t)^{y^{\idx}-1}}{(y^{\idx}-1)!}\\
&=(\Time-t)\sum_{y^{-\idx}\in\N^{\Dim-1}}\left(\prod_{j\neq \idx}\pos_{\Time-t}(y^j)\right)\semi_{\Time-t}\reltarget_{\x^{-\idx},y^{-\idx}}(\cp_t^{\idx}+1)\\
&=(\Time-t)\sum_{y^{-\idx}\in\N^{\Dim-1}}\left(\prod_{j\neq \idx}\pos_{\Time-t}(y^j)\right)\sum_{y^{\idx}\in\N}\reltarget((\x^{-\idx}+y^{-\idx},\cp_t^{\idx}+y^{\idx}+1))\pos_{\Time-t}(y^{\idx})\\
&=(\Time -t)\semi_{\Time-t}\reltarget(\cp_t+\basisi)=(\Time -t)\h(t,\cp_t+\basisi).
\end{align*}
Thus, 
\[
\E[\cp_{\Time}^{\idx}|\cp_t]=\cp_t^{\idx}+\frac{(\Time -t)\h(t,\cp_t+\basisi)}{\h(t,\cp_t)},
\]
and the result follows from \eqref{eq:stoch_int_appen}. 
\end{proof}

\subsection{Proof of Proposition \ref{prop:loglike}}
\label{subsec:proof_log_likelihood}
To simplify the notation we will prove the  proposition for $\Time=1$, but the proof for the general case is similar. We recall that for any probability measures $\target,\nu$ on $\N^{\Dim}$ we have
\begin{equation}
\label{eq:KL_Def_appen}
\ent(\nu|\target):=\sum_{\x\in \N^{\Dim}}\log\left(\frac{\nu(x)}{\target(x)}\right)\nu(x).
\end{equation}
The proof of Proposition \ref{prop:loglike} will follow from the following lemma.
\begin{lemma}
\label{lem:time_der_rel_ent}
Let
\begin{equation}
\label{eq:cvx_loglike_append}
\cvx(a):=\sum_{\idx=1}^{\Dim}a^{\idx}\log a^{\idx},\qquad \nabla \cvx(a):=(1+\log a^1,\ldots, 1+\log a^{\Dim}),\qquad a\in \N^{\Dim},
\end{equation}
and
\begin{equation}
\label{eq:beg_div_def_append}
\begin{split}
\beg(a,b)&:=\cvx(a)-\cvx(b)-\nabla \cvx(b)\cdot(a-b)\\
&=\sum_{\idx=1}^{\Dim}\left[a^{\idx}\log a^{\idx} - a^{\idx} \log b^{\idx}-a^{\idx}+b^{\idx}\right],\qquad a,b\in \N^{\Dim}.
\end{split}
\end{equation}
Then, for $\x\in \N^{\Dim}$,
\begin{equation}
\label{eq:dert_log_kl_ell_lem}
\begin{split}
&\partial_t\ent(\trans_{t|1}(\cdot|\x)|\flow_t)=\sum_{y\in\N^{\Dim}}\beg\left(\frac{\x-y}{1-t},\ints(t,y)\right)\B_{\x,t}(y).
\end{split}
\end{equation}
\end{lemma}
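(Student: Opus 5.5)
We will compute both KL terms explicitly and then differentiate in $t$, with $\Time=1$ throughout. By Lemma \ref{lem:bridge}, $\trans_{t|1}(\cdot|\x)=\B_{\x,t}$, and by Corollary \ref{cor:time_marg}, $\flow_t(y)=\h(t,y)\pos_t(y)$. Hence
\[
\ent(\trans_{t|1}(\cdot|\x)|\flow_t)=\sum_{y\le \x}\B_{\x,t}(y)\Big[\log\B_{\x,t}(y)-\log\pos_t(y)-\log\h(t,y)\Big].
\]
The first two terms are explicit. Since $\B_{\x,t}(y)/\pos_t(y)=\prod_\idx\frac{\x^\idx!}{(\x^\idx-y^\idx)!}(1-t)^{\x^\idx-y^\idx}e^{t}$, their combined contribution is
\[
\sum_\idx \E_{\B_{\x,t}}\Big[\log\frac{\x^\idx!}{(\x^\idx-y^\idx)!}+(\x^\idx-y^\idx)\log(1-t)\Big]+\Dim t .
\]

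The key device for differentiating is the forward equation for the Binomial bridge: $\partial_t\B_{\x,t}(y)=\sum_\idx\big[\beta^\idx(t,y-\basisi)\B_{\x,t}(y-\basisi)-\beta^\idx(t,y)\B_{\x,t}(y)\big]$ with bridge rate $\beta^\idx(t,y)=\frac{\x^\idx-y^\idx}{1-t}$. This is checked directly, or as the one-dimensional pure-birth chain from $0$ to $\x^\idx$. Summing by parts as in \eqref{eq:sum_by_parts}, for any $F(t,y)$ we get
\[
\frac{d}{dt}\E_{\B_{\x,t}}[F(t,y)]=\E_{\B_{\x,t}}\Big[\partial_tF+\sum_\idx\beta^\idx\,\der_\idx F\Big].
\]

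Apply this to each piece.
\begin{itemize}
\item For $F=-\log\h(t,y)$, use \eqref{eq:h_log_pde} for $\partial_t\log\h$ and note $\der_\idx\log\h=\log\ints^\idx$. This gives $\sum_\idx[\ints^\idx(t,y)-1-\beta^\idx\log\ints^\idx(t,y)]$.
\item For $F=\log\frac{\x^\idx!}{(\x^\idx-y^\idx)!}$, we have $\der_\idx F=\log(\x^\idx-y^\idx)$, contributing $\beta^\idx\log(\x^\idx-y^\idx)$.
\item For $F=(\x^\idx-y^\idx)\log(1-t)$, the time derivative gives $-\beta^\idx$ and $\der_\idx F=-\log(1-t)$, contributing $-\beta^\idx-\beta^\idx\log(1-t)$.
\item The $\Dim t$ term contributes $\Dim$, which cancels the $-1$'s from the first item.
\end{itemize}
Collecting terms, the expectation of
\[
\sum_\idx\big[\beta^\idx\log\beta^\idx-\beta^\idx\log\ints^\idx-\beta^\idx+\ints^\idx\big]
\]
remains, and this equals $\beg(\beta,\ints(t,y))$ with $\beta=\frac{\x-y}{1-t}$. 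This is the claim \eqref{eq:dert_log_kl_ell_lem}.

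The main obstacle is justifying the interchange of derivative and sum. Here it is harmless because $y$ ranges over the finite set $\{y\le\x\}$. We also need $\h>0$, which holds by the bounds on $\reltarget$. Finally, the terms with $y^\idx=\x^\idx$ need the convention $0\log0=0$, and they are consistent because $\beta^\idx=0$ there. The bookkeeping of signs in the summation by parts should be checked against the Poisson heat-equation computation \eqref{eq:Poisson_pde_proof_2}.
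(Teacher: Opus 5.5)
Your proof is correct and follows essentially the paper's argument. Your binomial-bridge forward equation is exactly \eqref{eq:dert_B} combined with \eqref{eq:iden_B}, and applying your generator identity $\frac{d}{dt}\E_{\B_{\x,t}}[F]=\E_{\B_{\x,t}}[\partial_tF+\sum_\idx\beta^\idx\der_\idx F]$ to $F=\log(\Bin_{\x,t}/\flow_t)$ is precisely the paper's split \eqref{eq:dert_log_kl} into Claims \ref{cl:term1} and \ref{cl:term2}. The only difference is cosmetic: you cancel $\pos_t$ against the binomial before differentiating, which avoids the $y^\idx/t$ terms, whereas the paper computes the discrete derivative of $\log(\Bin_{\x,t}/\flow_t)$ through the ratios \eqref{eq:ratio_den}.
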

Let us prove Proposition \ref{prop:loglike} assuming 
 Lemma \ref{lem:time_der_rel_ent}.
\begin{proof}[Proof of Proposition \ref{prop:loglike}]
Since
\begin{equation}
\label{eq:bd_flowtrans}
\flow_0=\delta_0,\qquad \flow_1=\target, \qquad \trans_{0|1}(\cdot|\x)=\B_{\x,0}=\delta_0,\qquad \trans_{1|1}(\cdot|\x)=\B_{\x,1}=\delta_{\x},
\end{equation}
we get 
\begin{align*}
&-\log\target(x)=\ent(\delta_{\x}|\target)\overset{\eqref{eq:bd_flowtrans}}{=}\ent(\trans_{1|1}(\cdot|\x)|\flow_1)-\ent(\trans_{0|1}(\cdot|\x)|\flow_0)=\int_0^1\partial_t\ent(\trans_{t|1}(\cdot|\x)|\flow_t)\diff t\\
&\overset{\eqref{eq:dert_log_kl_ell_lem}}{=}\int_0^1\sum_{y\in\N^{\Dim}}\beg\left(\frac{\x-y}{1-t},\ints(t,y)\right)\B_{\x,t}(y)\diff t.
\end{align*}
\end{proof}
The remainder of the section is dedicated to the proof of Lemma \ref{lem:time_der_rel_ent}. We begin with some useful identities for the Binomial distribution, $\Bin_{\x,t}:=\B_{\x,t}$.
\begin{claim}
Fix $x\in \N^{\Dim}$ and $t\in [0,1]$. Then, for any $y\le x$, 
\begin{equation}
\label{eq:dert_log_B}
\partial_t\log \Bin_{\x,t}(y)=\sum_{\idx=1}^{\Dim}\left\{\frac{y^{\idx}}{t} -\frac{\x^{\idx}-y^{\idx}}{1-t}\right\},
\end{equation}
\begin{equation}
\label{eq:iden_B}
\frac{y^{\idx}}{t}\Bin_{\x,t}(y)=\x^{\idx}\Bin_{\x-\basisi,t}(y-\basisi),\qquad \frac{\x^{\idx}-y^{\idx}}{1-t}\Bin_{\x,t}(y)=\x^{\idx}\Bin_{\x-\basisi,t}(y), 
\end{equation}
and
\begin{equation}
\label{eq:dert_B}
\partial_t \Bin_{\x,t}(y)=\sum_{\idx=1}^{\Dim}\x^{\idx}\left\{\Bin_{\x-\basisi,t}(y-\basisi)-\Bin_{\x-\basisi,t}(y)\right\}.
\end{equation}
\end{claim}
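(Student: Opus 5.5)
Since $\Bin_{\x,t}$ is a product measure, $\Bin_{\x,t}(y)=\prod_{\idx=1}^{\Dim}{\x^{\idx}\choose y^{\idx}}t^{y^{\idx}}(1-t)^{\x^{\idx}-y^{\idx}}$, all three identities reduce to one-dimensional computations for each factor. The plan is to prove \eqref{eq:dert_log_B} and \eqref{eq:iden_B} directly from this explicit formula, and then obtain \eqref{eq:dert_B} by combining them.

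For \eqref{eq:dert_log_B}, take logarithms of the product. This gives $\log \Bin_{\x,t}(y)=\sum_{\idx}\bigl[\log{\x^{\idx}\choose y^{\idx}}+y^{\idx}\log t+(\x^{\idx}-y^{\idx})\log(1-t)\bigr]$. Differentiating in $t$ yields $\sum_{\idx}\bigl[\frac{y^{\idx}}{t}-\frac{\x^{\idx}-y^{\idx}}{1-t}\bigr]$. This holds for $t\in(0,1)$ with $y\le\x$; the endpoints are understood by continuity, or one simply restricts to the interior.

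For \eqref{eq:iden_B}, only the $\idx$th factor changes, so it suffices to verify the one-dimensional identities
\[
\tfrac{k}{t}{n\choose k}t^k(1-t)^{n-k}=n{n-1\choose k-1}t^{k-1}(1-t)^{n-k},
\qquad
\tfrac{n-k}{1-t}{n\choose k}t^k(1-t)^{n-k}=n{n-1\choose k}t^{k}(1-t)^{n-1-k}.
\]
These follow from $k{n\choose k}=n{n-1\choose k-1}$ and $(n-k){n\choose k}=n{n-1\choose k}$. The other factors (indices $j\neq\idx$) are identical on both sides. At the edge cases ($y^{\idx}=0$ in the first identity, $y^{\idx}=\x^{\idx}$ in the second) both sides vanish, using the conventions ${m\choose -1}=0$, ${m\choose m+1}=0$ and $\Bin_{\x-\basisi,t}(z)=0$ whenever $z\not\le \x-\basisi$ or $z$ has a negative entry. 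Checking that these conventions are consistent is the only real bookkeeping point.

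Finally, for \eqref{eq:dert_B}, write $\partial_t\Bin_{\x,t}(y)=\Bin_{\x,t}(y)\,\partial_t\log\Bin_{\x,t}(y)$ and substitute \eqref{eq:dert_log_B}. Each term $\frac{y^{\idx}}{t}\Bin_{\x,t}(y)$ and $\frac{\x^{\idx}-y^{\idx}}{1-t}\Bin_{\x,t}(y)$ is then replaced using \eqref{eq:iden_B}. For $t\in(0,1)$ this directly gives $\sum_{\idx}\x^{\idx}\{\Bin_{\x-\basisi,t}(y-\basisi)-\Bin_{\x-\basisi,t}(y)\}$. Since both sides are polynomials in $t$, the identity extends to $t\in\{0,1\}$, which sidesteps the division by $t$ or $1-t$. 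I expect no step to be a genuine obstacle; the main care needed is handling the boundary cases and the endpoint values of $t$ cleanly.
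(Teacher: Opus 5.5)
Your proposal is correct and takes the same route as the paper. It log-differentiates the product formula for the first identity, reduces the second to the one-dimensional relations $k\binom{n}{k}=n\binom{n-1}{k-1}$ and $(n-k)\binom{n}{k}=n\binom{n-1}{k}$ on the $i$th factor, and combines the two for the third; your extra care with the boundary conventions ($y^{i}=0$, $y^{i}=x^{i}$) and with extending the third identity to $t\in\{0,1\}$ by polynomiality tightens details the paper leaves implicit.
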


\begin{proof}
Equation \eqref{eq:dert_log_B} follows from
\begin{align*}
\partial_t\log \Bin_{\x,t}(y)&=\sum_{\idx=1}^{\Dim}\partial_t\log\left\{{\x^{\idx} \choose y^{\idx}}+y^{\idx}\log t+(\x^{\idx}-y^{\idx})\log(1-t)\right\}=\sum_{\idx=1}^{\Dim}\left\{\frac{y^{\idx}}{t} -\frac{\x^{\idx}-y^{\idx}}{1-t}\right\}.
\end{align*}
The first equation in \eqref{eq:iden_B} follows from 
\begin{align*}
&\frac{y^{\idx}}{t}\Bin_{\x,t}(y)=\left(\prod_{j\neq \idx}\Bin_{\x^j,t}(y^j)\right)\frac{y^{\idx}}{t}\left\{\frac{\x^{\idx}!}{y^{\idx}!(\x^{\idx}-y^{\idx})!}t^{y^{\idx}}(1-t)^{\x^{\idx}-y^{\idx}}\right\}\\
&=\left(\prod_{j\neq \idx}\Bin_{\x^j,t}(y^j)\right)\x^{\idx}\left\{\frac{(\x^{\idx}-1)!}{(y^{\idx}-1)!(\x^{\idx}-1-(y^{\idx}-1))!}t^{y^{\idx}-1}(1-t)^{\x^{\idx}-1-(y^{\idx}-1)}\right\}\\
&=\left(\prod_{j\neq \idx}\Bin_{\x^j,t}(y^j)\right)\x^{\idx}\Bin_{\x^{\idx}-1,t}(y^{\idx}-1)=\x^{\idx}\Bin_{\x-\basisi,t}(y-\basisi),
\end{align*}
and the second equation in \eqref{eq:iden_B} follows as
\begin{align*}
&\frac{\x^{\idx}-y^{\idx}}{1-t}\Bin_{\x,t}(y)=\left(\prod_{j\neq \idx}\Bin_{\x^j,t}(y^j)\right)\left\{\frac{\x^{\idx}-y^{\idx}}{1-t}\frac{\x^{\idx}!}{y^{\idx}!(\x^{\idx}-y^{\idx})!}t^{y^{\idx}}(1-t)^{\x^{\idx}-y^{\idx}}\right\}\\
&=\left(\prod_{j\neq \idx}\Bin_{\x^j,t}(y^j)\right)\x^{\idx}\left\{\frac{(\x^{\idx}-1)!}{y^{\idx}!(\x^{\idx}-1-y^{\idx})!}t^{y^{\idx}}(1-t)^{\x^{\idx}-1-y^{\idx}}\right\}=\left(\prod_{j\neq \idx}\Bin_{\x^j,t}(y^j)\right)\x^{\idx}\Bin_{\x^{\idx}-1,t}(y^{\idx})\\
&=\x^{\idx}\Bin_{\x-\basisi,t}(y).
\end{align*}
Equation \eqref{eq:dert_B} follows from the combination of \eqref{eq:dert_log_B} and \eqref{eq:iden_B},
\begin{align*}
\partial_t \Bin_{\x,t}(y)&=\sum_{\idx=1}^{\Dim}\x^{\idx}\left\{\Bin_{\x-\basisi,t}(y-\basisi)-\Bin_{\x-\basisi,t}(y)\right\}.
\end{align*}
\end{proof}
We now turn to the proof of Lemma \ref{lem:time_der_rel_ent}. By definition, 
\begin{align*}
&\ent(\trans_{t|\Time}(\cdot|\x)|\flow_t)=\sum_{y\in\N^{\Dim}}\left[\log\Bin_{\x,t}(y)-\log\flow_t(y)\right]\Bin_{\x,t}(y),
\end{align*}
so 
\begin{equation}
\label{eq:dert_log_kl}
\partial_t\ent(\trans_{t|\Time}(\cdot|\x)|\flow_t)=\sum_{y\in\N^{\Dim}}\log\left(\frac{\Bin_{\x,t}(y)}{\flow_t(y)}\right)\partial_t\Bin_{\x,t}(y)+\sum_{y\in\N^{\Dim}}\left[\partial_t\log\Bin_{\x,t}(y)-\partial_t\log\flow_t(y)\right]\Bin_{\x,t}(y).
\end{equation}
We will analyze the two terms in \eqref{eq:dert_log_kl} in the following two claims. 
\begin{claim}
\label{cl:term1}
\begin{equation}
\label{eq:term_A}
\sum_{y\in\N^{\Dim}}\log\left(\frac{\Bin_{\x,t}(y)}{\flow_t(y)}\right)\partial_t\Bin_{\x,t}(y)=\sum_{\idx=1}^{\Dim}\sum_{y\in\N^{\Dim}}\left\{\log\left(\frac{\x^{\idx}-y^{\idx}}{1-t}\right)\frac{\x^{\idx}-y^{\idx}}{1-t}-\frac{\x^{\idx}-y^{\idx}}{1-t}\log\ints^{\idx}(t,y)\right\}\Bin_{\x,t}(y).
\end{equation}
\end{claim}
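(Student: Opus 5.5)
The plan is to expand $\partial_t\Bin_{\x,t}(y)$ using \eqref{eq:dert_B}, shift the summation index in one of the two resulting terms, and then simplify with \eqref{eq:iden_B} and the explicit form \eqref{eq:time_marg} of the marginals. Fix $t\in(0,1)$ and set $F(y):=\log\big(\Bin_{\x,t}(y)/\flow_t(y)\big)$ for $y\le\x$. Substituting \eqref{eq:dert_B}, the left-hand side of \eqref{eq:term_A} becomes
\[
\sum_{\idx=1}^{\Dim}\x^{\idx}\sum_{y\in\N^{\Dim}}F(y)\left\{\Bin_{\x-\basisi,t}(y-\basisi)-\Bin_{\x-\basisi,t}(y)\right\}.
\]

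In the first term we re-index $y\mapsto y+\basisi$, exactly as in the proof of the summation by parts formula \eqref{eq:sum_by_parts}. Since $\Bin_{\x-\basisi,t}(y-\basisi)$ vanishes when $y^{\idx}=0$, no boundary term appears, and the sum becomes
\[
\sum_{\idx=1}^{\Dim}\sum_{y}\x^{\idx}\Bin_{\x-\basisi,t}(y)\,\der_{\idx}F(y).
\]
By the second identity in \eqref{eq:iden_B}, $\x^{\idx}\Bin_{\x-\basisi,t}(y)=\frac{\x^{\idx}-y^{\idx}}{1-t}\Bin_{\x,t}(y)$. Hence the left-hand side equals
\[
\sum_{\idx}\sum_y \frac{\x^{\idx}-y^{\idx}}{1-t}\,\der_{\idx}F(y)\,\Bin_{\x,t}(y).
\]

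It remains to compute $\der_{\idx}F(y)$ for $y$ with $y^{\idx}<\x^{\idx}$. From the binomial formula,
\[
\frac{\Bin_{\x,t}(y+\basisi)}{\Bin_{\x,t}(y)}=\frac{\x^{\idx}-y^{\idx}}{y^{\idx}+1}\cdot\frac{t}{1-t}.
\]
From \eqref{eq:time_marg}, $\flow_t=\h(t,\cdot)\pos_t$, so by \eqref{eq:stoch_int_appen}
\[
\frac{\flow_t(y+\basisi)}{\flow_t(y)}=\ints^{\idx}(t,y)\,\frac{t}{y^{\idx}+1}.
\]
Dividing, the factors $t/(y^{\idx}+1)$ cancel, which gives
\[
\der_{\idx}F(y)=\log\Big(\frac{\x^{\idx}-y^{\idx}}{1-t}\Big)-\log\ints^{\idx}(t,y).
\]
Substituting this back yields exactly the right-hand side of \eqref{eq:term_A}.

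The main point needing care is the boundary bookkeeping. First, the terms with $y^{\idx}=\x^{\idx}$ carry the prefactor $\x^{\idx}-y^{\idx}=0$, while both $F(y+\basisi)$ and $\log\frac{\x^{\idx}-y^{\idx}}{1-t}$ are undefined there. These terms are therefore discarded on both sides with the convention $0\log0=0$, which agrees with $\Bin_{\x-\basisi,t}(y)=0$ when $y^{\idx}=\x^{\idx}$. Second, the re-indexing must not create spurious contributions at $y^{\idx}=0$, which holds because $\Bin_{\x-\basisi,t}(y-\basisi)=0$ there. Third, $\flow_t(y)>0$ for $t>0$ because $\reltarget>0$, so $F$ is finite on the support of $\Bin_{\x,t}$. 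Finally, all sums are finite since $y\le\x$, so the interchanges of summation are harmless. The endpoints $t\in\{0,1\}$ are excluded because only the integral in $t$ is needed later.
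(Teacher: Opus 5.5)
Your proof is correct and follows the paper's argument essentially step for step: substitute \eqref{eq:dert_B}, shift $y\mapsto y+\basisi$ in the first term, convert $\x^{\idx}\Bin_{\x-\basisi,t}(y)$ via the second identity in \eqref{eq:iden_B}, and evaluate the combined log-ratio using the binomial ratio and $\flow_t=\h(t,\cdot)\pos_t$ (the paper's \eqref{eq:ratio_den}). Your explicit boundary bookkeeping (no boundary term at $y^{\idx}=0$, zero prefactor at $y^{\idx}=\x^{\idx}$, positivity of $\flow_t$ for $t>0$) is a careful addition that the paper leaves implicit.
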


\begin{proof}
Re-indexing we get,
\begin{align*}
&\sum_{y\in\N^{\Dim}}\log\left(\frac{\Bin_{\x,t}(y)}{\flow_t(y)}\right)\partial_t\Bin_{\x,t}(y)\overset{\eqref{eq:dert_B}}{=}\sum_{\idx=1}^{\Dim}\sum_{y\in\N^{\Dim}}\log\left(\frac{\Bin_{\x,t}(y)}{\flow_t(y)}\right)\x^{\idx}\Bin_{\x-\basisi,t}(y-\basisi)-\sum_{\idx=1}^{\Dim}\sum_{y\in\N^{\Dim}}\log\left(\frac{\Bin_{\x,t}(y)}{\flow_t(y)}\right)\x^{\idx}\Bin_{\x-\basisi,t}(y)\\
&=\sum_{\idx=1}^{\Dim}\sum_{y\in\N^{\Dim}}\log\left(\frac{\Bin_{\x,t}(y+\basisi)}{\Bin_{\x,t}(y)}\frac{\flow_t(y)}{\flow_t(y+\basisi)}
\right)\x^{\idx}\Bin_{\x-\basisi,t}(y)\overset{\eqref{eq:iden_B}}{=}\sum_{\idx=1}^{\Dim}\sum_{y\in\N^{\Dim}}\log\left(\frac{\Bin_{\x,t}(y+\basisi)}{\Bin_{\x,t}(y)}\frac{\flow_t(y)}{\flow_t(y+\basisi)}
\right)\frac{\x^{\idx}-y^{\idx}}{1-t}\Bin_{\x,t}(y).
\end{align*}
Using
\begin{equation}
\label{eq:ratio_den}
\frac{\Bin_{\x,t}(y+\basisi)}{\Bin_{\x,t}(y)}=\frac{\x^{\idx}-y^{\idx}}{y^{\idx}+1}\frac{t}{1-t},\qquad \qquad
\frac{\flow_t(y)}{\flow_t(y+\basisi)}\overset{\eqref{eq:time_marg}}{=}\frac{\h(t,y)\pos_t(y)}{\h(t,y+\basisi)\pos_t(y+\basisi)}\overset{\eqref{eq:stoch_int_appen}}{=}\frac{1}{\ints^{\idx}(t,y)}\frac{y^{\idx}+1}{t},
\end{equation}
we get 
\[
\frac{\Bin_{\x,t}(y+\basisi)}{\Bin_{\x,t}(y)}\frac{\flow_t(y)}{\flow_t(y+\basisi)}\overset{\eqref{eq:ratio_den}}{=}\frac{1}{\ints^{\idx}(t,y)}\frac{\x^{\idx}-y^{\idx}}{1-t},
\]
and thus 
\begin{align*}
&\sum_{y\in\N^{\Dim}}\log\left(\frac{\Bin_{\x,t}(y)}{\flow_t(y)}\right)\partial_t\Bin_{\x,t}(y)=\sum_{\idx=1}^{\Dim}\sum_{y\in\N^{\Dim}}\left\{\log\left(\frac{\x^{\idx}-y^{\idx}}{1-t}\right)\frac{\x^{\idx}-y^{\idx}}{1-t}-\frac{\x^{\idx}-y^{\idx}}{1-t}\log\ints^{\idx}(t,y)\right\}\Bin_{\x,t}(y).
\end{align*}
\end{proof}
We now turn to the second term in \eqref{eq:dert_log_kl}. 
\begin{claim}
\label{cl:term2}
\begin{equation}
\label{eq:term_B}
\sum_{y\in\N^{\Dim}}\left[\partial_t\log\Bin_{\x,t}(y)-\partial_t\log\flow_t(y)\right]\Bin_{\x,t}(y)=\sum_{\idx=1}^{\Dim}\sum_{y\in\N^{\Dim}}\left[\frac{\x^{\idx}-y^{\idx}}{1-t}+\ints^{\idx}(t,y)\right]\Bin_{\x,t}(y).
\end{equation}
\end{claim}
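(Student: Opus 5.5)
We split the left-hand side of \eqref{eq:term_B} into the contribution of $\partial_t\log\Bin_{\x,t}$ and that of $\partial_t\log\flow_t$, and evaluate each against the weight $\Bin_{\x,t}(y)$. Throughout we take $\Time=1$.

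\textbf{The binomial part.} Since $\Bin_{\x,t}$ is a probability measure for every $t$, we have
\[
\sum_{y}\partial_t\log\Bin_{\x,t}(y)\,\Bin_{\x,t}(y)=\sum_y\partial_t\Bin_{\x,t}(y)=\partial_t 1=0.
\]
Alternatively, by \eqref{eq:dert_log_B} the summand is $\sum_{\idx}\{y^{\idx}/t-(\x^{\idx}-y^{\idx})/(1-t)\}$. Its mean under $\Bin_{\x,t}$ is $\sum_{\idx}\{\x^{\idx}-\x^{\idx}\}=0$, because $\E[y^{\idx}]=t\x^{\idx}$. We record the second form, since it lets us write
\[
0=\sum_y\sum_{\idx}\left[\frac{\x^{\idx}-y^{\idx}}{1-t}-\frac{y^{\idx}}{t}\right]\Bin_{\x,t}(y),
\]
which will be used to trade the term $y^{\idx}/t$ for $(\x^{\idx}-y^{\idx})/(1-t)$.

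\textbf{The marginal part.} By \eqref{eq:time_marg}, $\flow_t=\h(t,\cdot)\pos_t$, so
\[
\partial_t\log\flow_t(y)=\partial_t\log\h(t,y)+\partial_t\log\pos_t(y).
\]
From \eqref{eq:Poisson_pde_proof_1} we get $\partial_t\log\pos_t(y)=\sum_{\idx}[-1+y^{\idx}/t]$. From \eqref{eq:h_log_pde} we get $\partial_t\log\h(t,y)=\sum_{\idx}[1-\ints^{\idx}(t,y)]$. The constants $\pm1$ cancel, leaving
\[
\partial_t\log\flow_t(y)=\sum_{\idx}\left[\frac{y^{\idx}}{t}-\ints^{\idx}(t,y)\right].
\]

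\textbf{Combining.} Substituting both pieces, the left-hand side of \eqref{eq:term_B} equals
\[
\sum_y\sum_{\idx}\left[\ints^{\idx}(t,y)-\frac{y^{\idx}}{t}\right]\Bin_{\x,t}(y).
\]
Adding the zero identity from the binomial part replaces $-y^{\idx}/t$ by $(\x^{\idx}-y^{\idx})/(1-t)$, which gives the right-hand side of \eqref{eq:term_B}. More directly, \eqref{eq:iden_B} shows that
\[
\sum_y\frac{y^{\idx}}{t}\Bin_{\x,t}(y)=\x^{\idx}=\sum_y\frac{\x^{\idx}-y^{\idx}}{1-t}\Bin_{\x,t}(y).
\]

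\textbf{Technical points.} The only care needed is at the endpoints $t\in\{0,1\}$, where $y/t$ or $1/(1-t)$ is singular, and at points where $\flow_t$ or $\pos_t$ vanishes. We work for $t\in(0,1)$, where $\flow_t(y)>0$ for all $y$ by positivity of $\reltarget$, and the sums are finite because $y\le\x$. Interchanging $\partial_t$ with the sum is then trivial, and the main (mild) obstacle is just bookkeeping these cancellations correctly.
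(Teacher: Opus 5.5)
Your computation is correct up to the intermediate expression. Using $\sum_{y}\partial_t\log\Bin_{\x,t}(y)\,\Bin_{\x,t}(y)=0$ and $\partial_t\log\flow_t(y)=\sum_{\idx}\bigl[y^{\idx}/t-\ints^{\idx}(t,y)\bigr]$, the left-hand side of \eqref{eq:term_B} equals $\sum_{y}\sum_{\idx}\bigl[\ints^{\idx}(t,y)-y^{\idx}/t\bigr]\Bin_{\x,t}(y)$.

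The last step has a sign error. Because
\[
\sum_{y}\frac{y^{\idx}}{t}\Bin_{\x,t}(y)=\x^{\idx}=\sum_{y}\frac{\x^{\idx}-y^{\idx}}{1-t}\Bin_{\x,t}(y),
\]
the term $-y^{\idx}/t$ is traded for $-\frac{\x^{\idx}-y^{\idx}}{1-t}$, not $+\frac{\x^{\idx}-y^{\idx}}{1-t}$. Equivalently, you must subtract your zero identity, not add it; adding it gives $-2y^{\idx}/t+\frac{\x^{\idx}-y^{\idx}}{1-t}$, which replaces nothing. What your argument actually proves is
\[
\sum_{y}\bigl[\partial_t\log\Bin_{\x,t}(y)-\partial_t\log\flow_t(y)\bigr]\Bin_{\x,t}(y)=\sum_{\idx=1}^{\Dim}\sum_{y}\Bigl[\ints^{\idx}(t,y)-\frac{\x^{\idx}-y^{\idx}}{1-t}\Bigr]\Bin_{\x,t}(y).
\]
This equals $\sum_{\idx}\sum_{y}\ints^{\idx}(t,y)\Bin_{\x,t}(y)-\sum_{\idx}\x^{\idx}$. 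The displayed right-hand side of \eqref{eq:term_B} is instead that same sum plus $\sum_{\idx}\x^{\idx}$.

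So the two sides of the claim as printed differ by $2\sum_{\idx}\x^{\idx}$, and the printed statement is false for every $\x\neq 0$: it contains a sign typo. The paper's own proof ends with $\ints^{\idx}(t,y)-\frac{\x^{\idx}-y^{\idx}}{1-t}$. That minus sign is exactly what \eqref{eq:dert_log_kl_ell} needs to produce the $-a^{\idx}+b^{\idx}$ part of $\beg(a,b)$, with $a=\frac{\x-y}{1-t}$ and $b=\ints(t,y)$. A correct write-up should prove the minus-sign identity and flag the typo, rather than assert the printed version.

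On the route: the paper subtracts the two logarithmic derivatives pointwise. The $y^{\idx}/t$ terms from $\partial_t\log\Bin_{\x,t}$ and from $\partial_t\log\pos_t$ cancel inside the sum, and the minus-sign expression appears immediately. It needs neither the zero-mean property of the binomial score nor \eqref{eq:iden_B}. Your detour through those two facts is legitimate, but it is exactly where the sign slipped.
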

\begin{proof}
Since
\begin{equation}
\label{eq:der_t:log_pt}
\partial_t\log\flow_t(y)\overset{\eqref{eq:time_marg}}{=}\partial_t\log\h(t,y)+\partial_t\log\pos_t(y)\overset{\eqref{eq:h_pde}}{=}\sum_{\idx=1}^{\Dim}\left[1-\frac{\h(t,t+\basisi)}{\h(t,y)}\right]+\sum_{\idx=1}^{\Dim}\left\{-1+\frac{y^{\idx}}{t}\right\}\overset{\eqref{eq:stoch_int_appen}}{=}\sum_{\idx=1}^{\Dim}\left[-\ints^{\idx}(t,y)+\frac{y^{\idx}}{t}\right],
\end{equation}
we get
\begin{align*}
&\sum_{y\in\N^{\Dim}}\left[\partial_t\log\Bin_{\x,t}(y)-\partial_t\log\flow_t(y)\right]\Bin_{\x,t}(y)\overset{\eqref{eq:dert_log_B},\eqref{eq:der_t:log_pt}}{=}\sum_{y\in\N^{\Dim}}\sum_{\idx=1}^{\Dim}\left[\left\{\frac{y^{\idx}}{t} -\frac{\x^{\idx}-y^{\idx}}{1-t}\right\}-\left\{-\ints^{\idx}(t,y)+\frac{y^{\idx}}{t}\right\}\right]\Bin_{\x,t}(y)\\
&=\sum_{y\in\N^{\Dim}}\sum_{\idx=1}^{\Dim}\left[\ints^{\idx}(t,y)-\frac{\x^{\idx}-y^{\idx}}{1-t}\right]\Bin_{\x,t}(y).
\end{align*}
\end{proof}
With Claim \ref{cl:term1} and Claim \ref{cl:term2} in hand Equation \eqref{eq:dert_log_kl} reads
\begin{equation}
\label{eq:dert_log_kl_ell}
\begin{split}
&\partial_t\ent(\trans_{t|1}(\cdot|\x)|\flow_t)=\sum_{\idx=1}^{\Dim}\sum_{y\in\N^{\Dim}}\left\{
\log\left(\frac{1}{\ints^{\idx}(t,y)}\frac{\x^{\idx}-y^{\idx}}{1-t}\right)\frac{\x^{\idx}-y^{\idx}}{1-t}-\frac{\x^{\idx}-y^{\idx}}{1-t}+\ints^{\idx}(t,y)\right\}\Bin_{\x,t}(y)\\
&=\sum_{\idx=1}^{\Dim}\sum_{y\in\N^{\Dim}}\left\{
\log\left(\frac{\x^{\idx}-y^{\idx}}{1-t}\right)\frac{\x^{\idx}-y^{\idx}}{1-t}-\frac{\x^{\idx}-y^{\idx}}{1-t}\log\ints^{\idx}(t,y)-\frac{\x^{\idx}-y^{\idx}}{1-t}+\ints^{\idx}(t,y)\right\}\Bin_{\x,t}(y)\\
&=\sum_{y\in\N^{\Dim}}\beg\left(\frac{\x-y}{1-t},\ints(t,y)\right)\Bin_{\x,t}(y),
\end{split}
\end{equation}
where  $\cvx$ is as in \eqref{eq:cvx_loglike_append} and $\beg$ as in \eqref{eq:beg_div_def_append}.

\section{Schr\"odinger bridges}
\label{sec:schrondinger}
In this section we explain how the Poisson-F\"ollmer process is a special case of a \Def{Schr\"odinger bridge}. Let $\paths$ be the space of functions from $[0,\Time]$ to $\N^{\Dim}$, and let $\Rp$ be the law of the standard $\Dim$-dimensional Poisson process on $\N^{\Dim}$. Given a probably measure $\Qp\ll\Rp$ on $\paths$ we denote the Kullback-Leibler (KL) divergence between $\Qp$ and $\Rp$ as
\begin{equation}
\label{eq:KL_def}
\ent(\Qp|\Rp):=\int_{\paths} \log \left(\frac{\diff \Qp}{\diff \Rp}\right)\diff \Qp.
\end{equation}
For $t\in [0,\Time]$ we let $\Rp_t,\Qp_t$ be the marginals of the coordinate process at time $t$ with respect to $\Rp,\Qp$, respectively.
\begin{definition}[Schr\"odinger problem for Dirac source]
\label{def:schrdoinger_problem}
Let $\Rp$  be the Poisson measure over $\paths$ and let $\target$ be a distribution over $\N^{\Dim}$. Solve
\begin{equation}
\label{eq:schrdoinger_argmin}
\min_{\Qp}\, \ent(\Qp|\Rp)\quad\textnormal{such that} \quad \Qp_0\sim \delta_0, \quad \Qp_{\Time}\sim\target.
\end{equation}
\end{definition}
The problem introduces in Definition \ref{def:schrdoinger_problem} is a special case of the \Def{Schr\"odinger problem}. In the general problem the source measure need not be a point mass, but for our purpose of designing a simple recipe for denoising and sampling, the problem \eqref{eq:schrdoinger_argmin} is better suited. Our goal in this section is to show that the  Poisson-F\"ollmer process $(\cp_t)_{t\in [0,\Time]}$ is an optimal solution to the Schr\"odinger problem for Dirac source. The next result gives a sufficient condition for a measure $\Qp$ to be an optimal solution to \eqref{eq:schrdoinger_argmin}. 
\begin{lemma}
If $\Qp$ satisfies
\begin{equation}
\label{eq:schrdoinger_argmin_opt}
\ent(\Qp|\Rp)=\ent(\target|\pos_{\Time}),
\end{equation}
then  $\Qp$ is an optimal solution to \eqref{eq:schrdoinger_argmin}. 
\end{lemma}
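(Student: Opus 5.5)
The plan is to show that $\ent(\target|\pos_{\Time})$ is a lower bound for the objective in \eqref{eq:schrdoinger_argmin} over all admissible $\Qp$. Once this holds, any admissible $\Qp$ attaining the bound is optimal. The tool is the \emph{data processing inequality} (equivalently, the chain rule) for relative entropy under the time-$\Time$ marginal map $\paths\ni\omega\mapsto\omega_{\Time}\in\N^{\Dim}$.

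First I would note what the reference measure looks like at the endpoints. Under $\Rp$, the law of the standard $\Dim$-dimensional Poisson process started at $0$, we have $\Rp_0=\delta_0$ and $\Rp_{\Time}=\pos_{\Time}$, by the product form \eqref{eq:pos_multi_def} and independence of the coordinates. Now take any $\Qp\ll\Rp$ with $\Qp_0=\delta_0$ and $\Qp_{\Time}=\target$. Disintegrate both path measures with respect to the endpoint, $\Qp=\int \Qp^{y}\,\target(\diff y)$ and $\Rp=\int \Rp^{y}\,\pos_{\Time}(\diff y)$, where $\Qp^y,\Rp^y$ are the bridges conditioned on $\omega_{\Time}=y$. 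The chain rule for KL then gives
\[
\ent(\Qp|\Rp)=\ent(\Qp_{\Time}|\Rp_{\Time})+\sum_{y\in\N^{\Dim}}\ent(\Qp^{y}|\Rp^{y})\,\target(y)\ \ge\ \ent(\target|\pos_{\Time}),
\]
because each conditional term is nonnegative.

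Hence every admissible $\Qp$ satisfies $\ent(\Qp|\Rp)\ge\ent(\target|\pos_{\Time})$. Suppose a $\Qp$ satisfies \eqref{eq:schrdoinger_argmin_opt}; implicitly this $\Qp$ is admissible, i.e.\ $\Qp_0=\delta_0$ and $\Qp_{\Time}=\target$, which is how the lemma will be applied to the Poisson--F\"ollmer process. Then it attains the lower bound and is therefore a minimizer. If $\ent(\target|\pos_{\Time})=+\infty$, every admissible $\Qp$ has infinite cost and the statement is trivial, so I may assume finiteness.

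The only step needing care is justifying the chain rule on path space. Since $\N^{\Dim}$ is countable, I would avoid regular conditional probabilities altogether. Write $\frac{\diff\Qp}{\diff\Rp}(\omega)=\frac{\target(\omega_{\Time})}{\pos_{\Time}(\omega_{\Time})}\cdot g(\omega)$, where $g$ is the density of $\Qp(\cdot\,|\,\omega_{\Time}=y)$ with respect to $\Rp(\cdot\,|\,\omega_{\Time}=y)$ on each event $\{\omega_{\Time}=y\}$; these events have positive $\Rp$-probability, so the conditioning is elementary. Taking $\log$ and integrating against $\Qp$ splits the KL into the two terms above. This is the main (mild) obstacle, since one must make sure the integrals are well defined. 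The negative part of $\log g$ is integrable by the standard inequality $x\log x\ge x-1$, applied on each event, so the splitting is legitimate.
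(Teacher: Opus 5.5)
Your proposal is correct and follows the same route as the paper. Both arguments split $\ent(\Qp|\Rp)$ by the chain rule with respect to the time-$\Time$ marginal, use $\Rp_{\Time}=\pos_{\Time}$ and the nonnegativity of the bridge terms to get the lower bound $\ent(\target|\pos_{\Time})$ over all admissible $\Qp$, and conclude that any admissible $\Qp$ with $\ent(\Qp|\Rp)=\ent(\target|\pos_{\Time})$ attains it. Your extra steps fill in details the paper leaves tacit: you justify the chain rule by elementary conditioning on the events $\{\omega_{\Time}=y\}$, which have positive $\Rp$-probability, and you note that $\Qp$ must implicitly satisfy the marginal constraints.
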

\begin{proof}
Since the $\Time$-marginal under $\Rp$ is $\pos_{\Time}$, the chain rule  for entropy gives
\begin{equation}
\label{eq:ent_chain_rule}
\ent(\Qp|\Rp)=\ent(\target|\pos_{\Time})+\sum_{\x\in\N^{\Dim}}\ent(\Qp^{\x}|\Rp^{\x})\diff\target(\x),
\end{equation}
where  $\Rp^{\x}$ (res. $\Qp^{\x}$) is the bridge of $\Rp$ (res. $\Qp$) starting at $0$ and terminating at $\x$. The term $\ent(\target|\pos_{\Time})$ is the same for any $\Qp$ satisfying $\Qp_0\sim \delta_0,  \Qp_{\Time}\sim\target$, which completes the argument.  
\end{proof}
Our next result shows that if $\Qp$ is the law of the Poisson-F\"ollmer process then it  satisfies \eqref{eq:schrdoinger_argmin_opt}, and hence it is an optimal solution to the Schr\"odinger problem. 
\begin{lemma}
The  law $\Qp$ of the Poisson-F\"ollmer process satisfies
\begin{equation}
\label{eq:ent_min}
\ent(\Qp|\Rp)=\sum_{\idx=1}^{\Dim}\int_0^{\Time}\E\left[\ints^{\idx}(t,\cp_t)\log\ints^{\idx}(t,\cp_t)-\ints^{\idx}(t,\cp_t)+1\right]\diff t=\ent(\target|\pos_{\Time}).
\end{equation}
\end{lemma}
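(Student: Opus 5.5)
The proof has two equalities, and I would prove each one separately.

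\emph{First equality.} This is the Girsanov formula for counting processes. Under $\Rp$ the coordinate process has constant intensity $1$ in each coordinate. Under $\Qp$ the intensity is $\ints^{\idx}(t,\cp_t)$, which is bounded above and below by positive constants, so $\Qp\ll\Rp$. The density is
\[
\log\frac{\diff \Qp}{\diff \Rp}=\sum_{\idx=1}^{\Dim}\left[\int_0^{\Time}\log\ints^{\idx}(t,\cp_t)\diff\cp_t^{\idx}-\int_0^{\Time}\left(\ints^{\idx}(t,\cp_t)-1\right)\diff t\right].
\]
I would not quote Girsanov as a black box. Instead I would obtain the density directly, which is cleaner here. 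I would apply the discrete It\^o formula (Lemma \ref{lem:ito}) to $\log\h(t,\cp_t)$. By \eqref{eq:h_log_pde}, $\partial_t\log\h=\sum_\idx(1-\ints^\idx)$, and the jump increments are $\der_\idx\log\h(t,\x)=\log\ints^\idx(t,\x)$. This gives
\[
\log\reltarget(\cp_{\Time})-\log\h(0,0)=\sum_{\idx}\left[\int_0^{\Time}\log\ints^\idx\diff\cp^\idx_t+\int_0^{\Time}(1-\ints^\idx)\diff t\right].
\]
Since $\h(0,0)=1$ by \eqref{eq:h_bd}, the right-hand side is exactly the Girsanov log-density. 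So the density is simply $\reltarget(\cp_{\Time})$, the Doob $\h$-transform with terminal function $\reltarget$. To confirm this is the density of $\Qp$ w.r.t.\ $\Rp$, I would check that the path law $\reltarget(\omega_{\Time})\Rp$ has the Markov transition probabilities \eqref{eq:Kolmogorov_forward_sol}. Indeed, by the Markov property of $\Rp$, its transitions are $\h(t,\x_t)\pos_{t-s}(\x_t-\x_s)/\h(s,\x_s)$, and both processes start at $0$. Taking the $\Qp$-expectation of the log-density and converting the $\diff\cp^\idx$ integral into a $\diff t$ integral via Lemma \ref{lem:stoch_to_determ} (with $H=\log\ints$) yields the first expression in \eqref{eq:ent_min}.

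\emph{Second equality.} Since $\frac{\diff\Qp}{\diff\Rp}=\reltarget(\cp_{\Time})$, we get
\[
\ent(\Qp|\Rp)=\E_\Qp[\log\reltarget(\cp_{\Time})]=\sum_\x\target(\x)\log\frac{\target(\x)}{\pos_{\Time}(\x)}=\ent(\target|\pos_{\Time}),
\]
using property (a) of Proposition \ref{prop:key_properties}.

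\emph{Main obstacle.} The main technical issue is applying Lemma \ref{lem:stoch_to_determ}, which is stated for nonnegative predictable $H$, while $\log\ints^\idx$ may change sign. I would handle this by splitting $\log\ints^\idx$ into positive and negative parts. Both parts are bounded because $\ints^\idx$ is bounded away from $0$ and $\infty$, so each part is integrable and the identity holds for the difference. The other point needing care is identifying the path density rigorously. I would argue via finite-dimensional distributions: both $\Qp$ and $\reltarget(\omega_{\Time})\Rp$ are Markov laws with the same transitions and the same initial law, so they coincide on cylinder sets and hence on $\paths$.
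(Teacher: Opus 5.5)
Your proof is correct, but it takes a different route from the paper.

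\emph{The paper's route.} It gets the first equality by citing Girsanov's theorem for jump processes together with Lemma~\ref{lem:stoch_to_determ}. It gets the second by a time-local computation: using $\flow_t=\h(t,\cdot)\pos_t$, the marginal equation \eqref{eq:time_marg_eq}, \eqref{eq:h_log_pde} and summation by parts, it shows $\partial_t\ent(\flow_t|\pos_t)=\sum_{\idx}\E[\ints^{\idx}\log\ints^{\idx}-\ints^{\idx}+1]$. It then integrates from $\flow_0=\pos_0=\delta_0$ to $\flow_{\Time}=\target$. It never identifies $\frac{\diff\Qp}{\diff\Rp}$ explicitly.

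\emph{Your route.} You identify the density as $\reltarget(\omega_{\Time})$ by matching the Markov transitions of the $\h$-transform of $\Rp$ with \eqref{eq:Kolmogorov_forward_sol}. You then use the pathwise It\^o formula for $\log\h(t,\cp_t)$ to show this coincides with the Girsanov exponent, so Girsanov is derived rather than cited. The second equality is then a one-line consequence of $\cp_{\Time}\sim\target$. The first follows from the compensator identity; splitting the bounded $\log\ints^{\idx}$ into positive and negative parts correctly handles the sign restriction in Lemma~\ref{lem:stoch_to_determ}.

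\emph{What each buys.} Your argument is more self-contained. It is also more informative: a density depending only on the endpoint means the bridges of $\Qp$ and $\Rp$ coincide. That makes the vanishing of the bridge term in \eqref{eq:ent_chain_rule}, and hence the Schr\"odinger optimality, transparent. The cost is that you need the Markov property of $\cp$ and the full transition kernel \eqref{eq:Kolmogorov_forward_sol} to pin down the path law. The paper only uses the one-time marginals and the marginal evolution equation, at the price of invoking Girsanov as a black box. The paper's computation gives the entropy identity at every time $t$. Your It\^o identity run up to time $t$, combined with $\E[\log\h(t,\cp_t)]=\ent(\flow_t|\pos_t)$, recovers that as well.
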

\begin{proof}
The first equality in \eqref{eq:ent_min} is a consequence of Girsanov's theorem for jump processes, e.g. Equation (14) in \cite{MR3646434}, and Equation \eqref{eq:stoch_to_determ}. To establish the second equality in \eqref{eq:ent_min} we will show that
\begin{equation}
\label{eq:dertKL_marg_ref}
\partial_t\ent(\flow_t|\pos_t)=\sum_{\idx=1}^{\Dim}\E[\ints^{\idx}(t,\cp_t)\log \ints^{\idx}(t,\cp_t)-\ints^{\idx}(t,\cp_t)+1].
\end{equation}
Once \eqref{eq:dertKL_marg_ref} is established we can deduce the result since $\pos_0=\flow_0=\delta_0$ and $\flow_{\Time}=\target$. To prove \eqref{eq:dertKL_marg_ref} we compute 
\begin{align*}
&\partial_t\ent(\flow_t|\pos_t)\overset{\eqref{eq:time_marg}}{=}\partial_t\sum_{y\in \N^{\Dim}}[\log \h(t,y) ] \flow_t(y)\\
&\overset{\eqref{eq:h_log_pde},\eqref{eq:time_marg_eq}}{=}\sum_{y\in \N^{\Dim}}\sum_{\idx=1}^{\Dim}\left[1-\ints^{\idx}(t,y)\right] \flow_t(y)-\sum_{\idx=1}^{\Dim}\sum_{y\in \N^{\Dim}}\log \h(t,y) \der_{\idx}[\ints^{\idx}(t,y-\basisi)\flow_t(y-\basisi)]\\
&\overset{\eqref{eq:sum_by_parts}}{=}\sum_{y\in \N^{\Dim}}\sum_{\idx=1}^{\Dim}\left\{1-\ints^{\idx}(t,y) +\der_{\idx}\log \h(t,y) \ints^{\idx}(t,y)\right\}\flow_t(y)\overset{\eqref{eq:stoch_int_appen}}{=}\sum_{y\in \N^{\Dim}}\sum_{\idx=1}^{\Dim}\left\{1-\ints^{\idx}(t,y) +[\log \ints^{\idx}(t,y)]\ints^{\idx}(t,y)\right\}\flow_t(y).
\end{align*}
\end{proof}

\section{Time reversal}
\label{sec:time_rev}
The purpose of this section is to explain how our constructions is related to time-reversal of discrete diffusion models. Define the \Def{forward process} $\vec{Z}_t:=\cp_{\Time-t}$ to be a \Def{Poisson bridge} which starts at $\vec{Z}_0\sim\target$ and terminates at $\vec{Z}_{\Time}=0$. Then the Poisson-F\"ollmer process  $\cp_t=:\cev{Z_t}$ is the \Def{reverse process} obtained by the time-reversal  of $\vec{Z}_t$. Hence, from the perspective of the time-reversal approach in discrete diffusions \cite{lou2024discrete} our goal should be learn the discrete score $\left[\frac{q_t(y)}{q_t(\x)}\right]_{y\neq \x}$, where $q_t$ is the distribution of $\vec{Z}_t$. Since the forward process $(\vec{Z}_t)$ decreases in increments of 1 in each coordinate, its rate matrices $(\vec{Q}_t)$ take the form
\begin{equation}
\label{eq:forward_diff_rate_mat}
\vec{Q}_t(\x,y)=1_{y\in \{\x-\basis_1,\ldots,\x-\basis_{\Dim}\}}\, \frac{1}{\Dim},\qquad\qquad y\neq x,
\end{equation}
so the rate matrices $(\cev{Q}_t)$ of the reverse process $(\cev{Z_t})$ are
\begin{equation}
\label{eq:reverse_diff_rate_mat}
\vec{Q}_t(\x,x+\basisi)= \frac{1}{\Dim}\frac{q_t(\x+\basisi)}{q_t(x)},\qquad\qquad \vec{Q}_t(\x,y)=0\textnormal{ for }y\notin \{\x,\x+\basis_1,\ldots,\x+\basis_{\Dim}\}.
\end{equation}
Since $q_t=\flow_{\Time-t}$, Corollary \ref{cor:time_marg} shows that
\begin{equation}
\label{eq:ints_time_reversal}
\frac{q_t(\x+\basisi)}{q_t(x)}=\frac{\h(\Time -t,\x+\basisi)\pos_{\Time -t}(\x+\basisi)}{\h(\Time -t,\x)\pos_{\Time -t}(\x)}=\frac{\Time-t}{\x^i+1}\ints^i(\Time-t,\x).
\end{equation}
Thus, learning the intensity $\ints$ of the Poisson-F\"ollmer process is equivalent to learning the time-reversal of the forward process $(\vec{Z}_t)$.

\section{Gaussian analogues: The F\"ollmer process}
\label{sec:gaussian}
The continuous analogue of the Poisson-F\"ollmer process is the \Def{F\"ollmer process}. To describe this process we write the data distribution as 
\begin{equation}
\label{eq:data_gauss}
\target=\reltarget\gauss_{0,\Time},
\end{equation}
where $\gauss_{x,t}$ is the Gaussian with mean $x\in\R^{\Dim}$ and covariance $t\id_{\Dim}$ on $\R^{\Dim}$. The \Def{heat semigroup} is acting on $L^1(\gauss_{\Time})$-integrable functions $G:\R^{\Dim}\to\R$ by 
\begin{equation}
\label{eq:heat}
\heat_t\fn(\x):=\int_{\R^{\Dim}}\fn(\x+\sqrt{t}y)\diff\gauss_{0,1}(y),
\end{equation}
and the analogue of the Doob $h$-transform in the continuous setting is 
\begin{equation}
\label{eq:doob_follmer}
\h(t,\x):=\heat_{\Time-t}\reltarget(\x).
\end{equation}
The F\"ollmer process is the solution to the stochastic differential equation over $t\in [0,\Time]$,
\begin{equation}
\label{eq:follmer}
\diff \fp_t=\nabla\log \heat_{\Time-t}\reltarget(\fp_t)\diff t+\diff\bm_t,
\end{equation}
where $(B_t)$ is a standard Brownian motion in $\R^{\Dim}$. Analogous to Corollary \ref{cor:time_marg}, the time marginals $(\flow_t)_{t\in [0,\Time]}$ of $(\fp_t)_{t\in [0,\Time]}$ satisfy 
\begin{equation}
\label{eq:time_marginal_follmer}
\flow_t(\x)=\h(t,\x)\gauss_{0,t}(\x),
\end{equation}
so, in particular, $\fp_{\Time}\sim\target$. The drift $\nabla\log \heat_{\Time-t}\reltarget$ is the continuous analogue of the intensity $\ints$,  and satisfies a Tweedie's formula
\begin{equation}
\label{eq:tweedie_follmer}
\nabla\log \heat_{\Time-t}\reltarget(\x)=\frac{\den(t,x)-\x}{\Time -t},
\end{equation}
where
\[
\den(t,x):=\E[\fp_{\Time}|\fp_t=\x].
\]
Thus, the drift $\nabla\log \heat_{\Time-t}\reltarget$ can be learned via objectives such as \eqref{eq:denoiser_obj_intro}. 
To verify \eqref{eq:tweedie_follmer} we use the fact (Lemma 4.1 in \cite{MR4797372}) that the distribution of $\fp_{\Time}|\fp_t $ is $y\mapsto \frac{\reltarget(y)\gauss_{\fp_t,\Time -t}(y)}{\heat_{\Time-t}\reltarget(\fp_t)}$, so
\begin{align*}
\E[\fp_{\Time}|\fp_t]=\int_{\R^{\Dim}}y \frac{\reltarget(y)\gauss_{\fp_t,\Time -t}(y)}{\heat_{\Time-t}\reltarget(\fp_t)}\diff y,
\end{align*}
while on the other hand, by integration by parts,
\begin{align*}
&\nabla\log \heat_{\Time-t}\reltarget(x)=\frac{1}{\heat_{\Time-t}\reltarget(x)}\int_{\R^{\Dim}}\frac{1}{\sqrt{\Time -t}}\nabla_y\reltarget(\x+\sqrt{\Time -t}y)\diff\gauss_{0,1}(y)\\
&=\frac{1}{\heat_{\Time-t}\reltarget(x)}\int_{\R^{\Dim}}y\frac{1}{\sqrt{\Time -t}}\reltarget(\x+\sqrt{\Time -t}y)\diff\gauss_{0,1}(y)=\frac{1}{\Time -t}\int_{\R^{\Dim}}(y-x)\frac{\reltarget(y)\diff\gauss_{x,\Time -t}(y)}{\heat_{\Time-t}\reltarget(x)}.
\end{align*}
Note that, by \eqref{eq:time_marginal_follmer} and \eqref{eq:doob_follmer},
\begin{equation}
\label{eq:tweedie_follmer_marg}
\nabla\log\flow_t(\x)=\nabla\log \heat_{\Time-t}\reltarget(\x)-\frac{\x}{t}=\frac{1}{\Time-t}\den(t,x)-\frac{\Time}{t(\Time-t)}x.
\end{equation}

Just like the Poisson-F\"ollmer process, the F\"ollmer process is a special case of a Schr\"odinger bridge, and the drift $\nabla\log \heat_{\Time-t}\reltarget$ is, up to a time change, the score one needs to learn in the time reversal of the Ornstein–Uhlenbeck process. 

\section{Experiments with synthetic data} \label{app:additional_numerics}

In this section we define the one-dimensional synthetic distributions considered in Section~\ref{sec:exp}, and provide additional details for training the denoiser with data from these problems. The synthetic experiments consist of the following distributions taken from~\cite{DBLP:journals/corr/abs-2505-05082}, which all have a truncated support of integers $\Space \subset \mathbb{N}_0^1$, where the support size $|\Space|$ is reported below for each problem.
\begin{enumerate} \itemsep0pt
    \item \textbf{Poisson}: $\mu(x) = \Poisson_{\lambda}(x)$ is a Poisson distribution with rate parameter $\lambda$. We take $\lambda = 5.0$ and $|\Space| = 40$. 
    \item \textbf{Poisson Mixture}: $\mu(x) = \sum_{i=1}^m w_i \Poisson_{\lambda_i}(x)$. We take $m = 2$ components with $w_1 = 0.1, w_2 = 0.9$, $\lambda_1 = 1, \lambda_2 = 100$ and $|\Space| = 140$.
    \item \textbf{Zero-Inflated-Poisson (ZIP)}: $\mu(x) = w_0 + (1 - w_0) e^{-\lambda}$ for $x = 0$ and $\mu(x) = (1 - w_0) \Poisson_{\lambda}(x)$ for $x > 0$. We take $w_0 = 0.7$, $\lambda = 5$, and $|\Space| = 50$.
    \item \textbf{Negative-Binomial-Mixture (NBM)}: $\mu(x) = \sum_{i=1}^m w_i \NB_{r_i,p_i}(x)$ where $r_i$ are the number of successes and $p_i$ are the success probabilities. We take $m = 2$ components with $r_1 = 1, r_2 = 10$, $p_1 = 0.9, p_2 = 0.1$, and $|\Space| = 150$.
    \item \textbf{Beta-Negative-Binomial (BNB)}: $\mu(x) = \int_0^1 \NB_{1,t}(x)\text{Beta}_{a,b}(t) \diff t$ where $\text{Beta}_{a,b}(t)$ is a Beta distribution with parameters $a,b$. We take $a = 1.5, b = 1.5$ and $r = 5$, and $|\Space| = 100$.
    \item \textbf{Zipf}: $\mu(x) = x^{-\alpha}/\zeta(\alpha)$ where $\zeta(\alpha)$ is Riemann zeta function. We take $\alpha = 1.7$ and $|\Space| = 50$.
    \item \textbf{Yule-Simon}: $\mu(x) = \rho \Gamma(x)\Gamma(\rho + 1)/\Gamma(x + \rho + 1),$ where $\Gamma$ is the Gamma function and $\rho \in \R_+$ is a positive constant. We take $\rho = 2.0$ and $|\Space| = 50$.
\end{enumerate}

For these experiments we use a residual MLP architecture for the denoiser with sinusoidal time-embeddings. The model has 3 hidden layers with 256 hidden dimensions, and 128 dimensional embeddings for time. We learn the network parameters using the Adam optimizer by training for $300$ epochs with a learning rate of $10^{-3}$, a batch size of $128$, weight decay of magnitude $10^{-5}$, gradient clipping, and taking an exponential moving average of the parameters for the reported results. 

\begin{table}[!ht]
\centering
\caption{Evaluation of the $W_1$ metric (lower is better) between the target distribution and generated samples from Binomial Flows in comparison to itDPDM~\cite{DBLP:journals/corr/abs-2505-05082}}
\label{tab:performance_comparison}
\begin{tabular}{lcc}
\midrule
\textbf{Problem} 
& \textbf{Binomial Flows} & \textbf{itDPDM} \\
\midrule
Poisson & $0.08 \pm 0.02$ & - \\
Poisson Mixture & $1.52 \pm 0.14$ & $\mathbf{0.99 \pm 0.15}$ \\
ZIP & $\mathbf{0.09 \pm 0.02}$ & $0.56 \pm 0.43$ \\
NBM & $1.74 \pm 0.30$ & $\mathbf{1.39 \pm 0.37}$ \\
BNB & $0.79 \pm 0.15$ & $\mathbf{0.67 \pm 0.23}$ \\
Zipf &  $\mathbf{0.18 \pm 0.02}$ & $0.48 \pm 0.13$ \\
Yule-Simon &  $\mathbf{0.11 \pm 0.01}$ & $0.14 \pm 0.03$ \\
\midrule
\end{tabular}
\end{table}

Figure~\ref{fig:likelihood} plots the log-likelihood of the predicted distribution over the support of the target distribution using identity~\eqref{eq:loglike}, in comparison to the true log of the probability mass function (PMF) $\log\mu(x)$. The predicted log-likelihood is computed using a Monte Carlo estimator with $10,000$ samples. The mean and standard error of the estimator are reported using the solid and dashed lines, respectively. We note that the estimated log-likelihood shows close agreement to the true PMF, especially in regions where the target distribution has higher probability mass, e.g., we note that growth in the standard error for the Yule-Simon distribution for larger $x$.

\begin{figure}[!ht]
\centering
\includegraphics[width=0.32\linewidth]{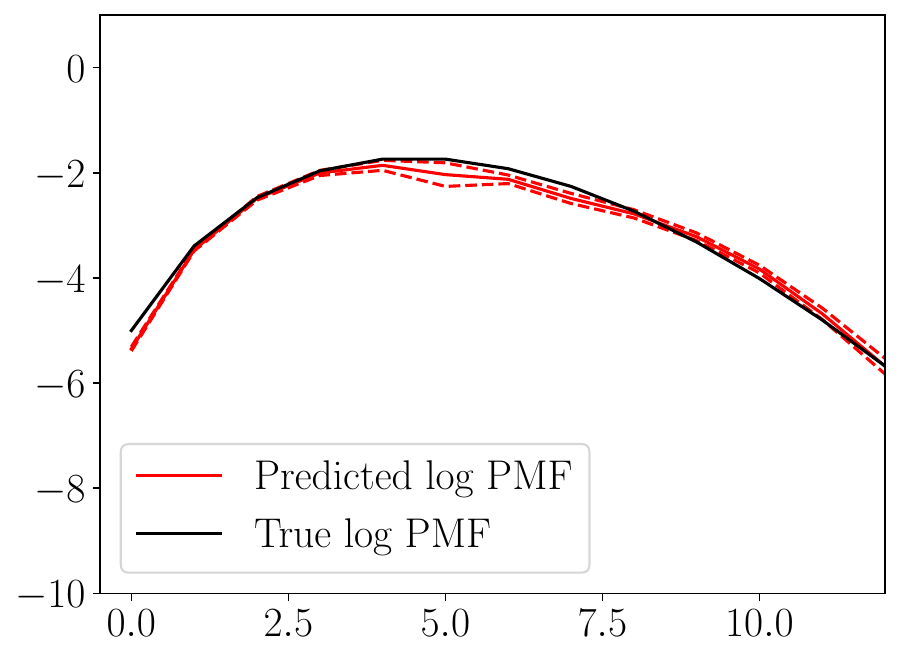}
\includegraphics[width=0.32\linewidth]{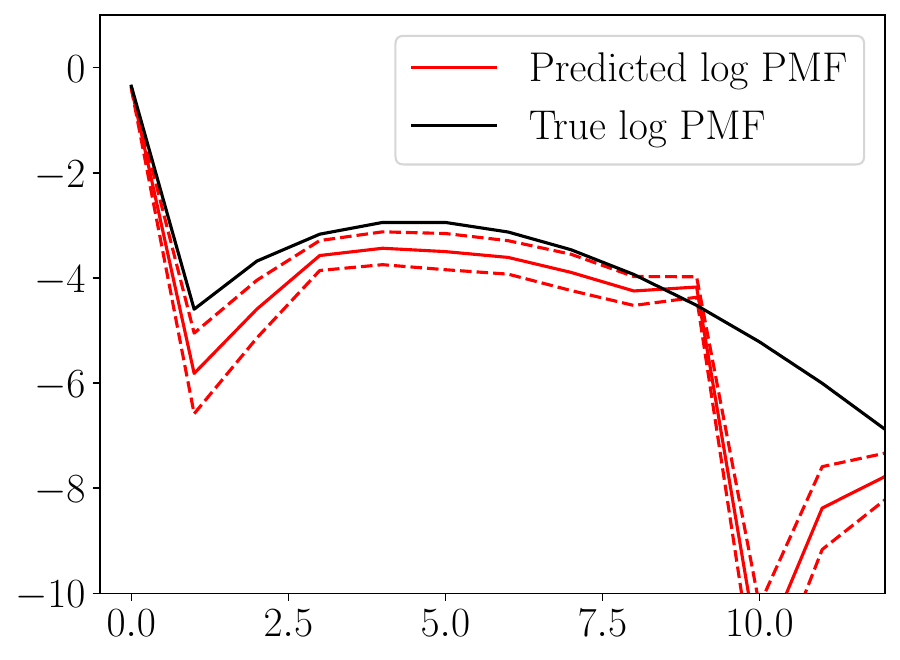}
\includegraphics[width=0.32\linewidth]{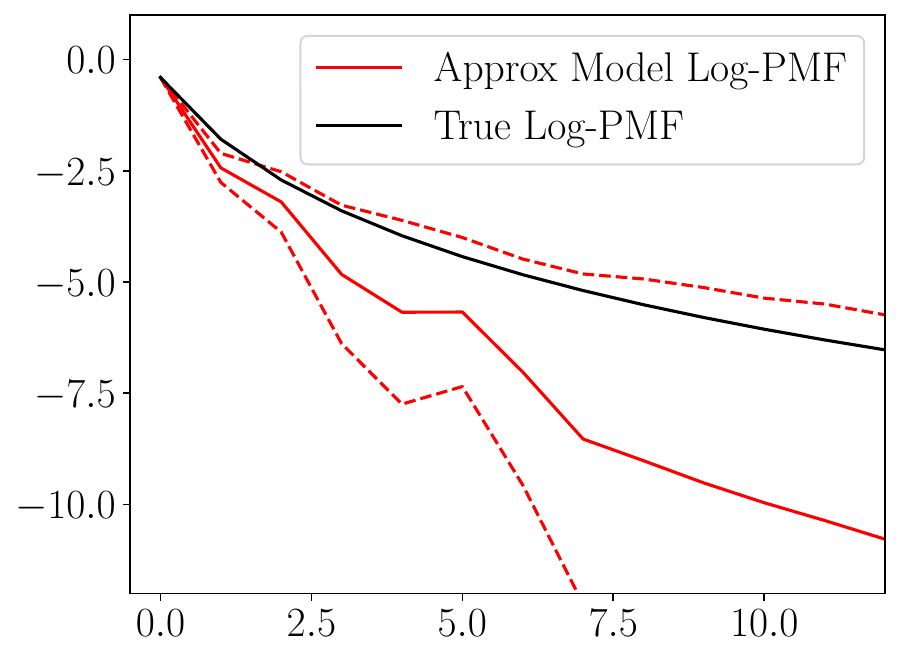}
\caption{True and estimated log-likelihood evaluations of the Poisson, Zero-Inflated Poisson, and Yule-Simon distributions \label{fig:likelihood}}
\end{figure}

\section{Experiments with imaging data}\label{sec:experiments}

\subsection{Preconditioning}
\label{subsec:preconditioning_appen}
In this section we provide further details on preconditioning. We follow the EDM framework~\citep{karras2022elucidating} where we write 
\begin{equation}
\label{eq:mf_appen}
\den_{\theta}(\x_t,t)=c_{\textnormal{skip}}(t)\x_t+c_{\textnormal{out}}(t)F_{\theta}(\widetilde{\x_t},t),\qquad \qquad\widetilde{x}_t=c_{\text{in}}(t) x_t + s_{\text{in}}.
\end{equation}
Following EDM's optimality principles, we derive scaling functions $c_{\text{in}}(t)$, $c_{\text{skip}}(t)$, $c_{\text{out}}(t)$ and loss weight $w^2(t)$ to satisfy three criteria: (i) $c_{\text{in}}(t)$ normalizes network inputs $\widetilde{x}_t$ to unit variance (and additional centering via $s_{\text{in}}$ in our case), (ii) $c_{\text{skip}}(t)$ minimizes $c_{\text{out}}(t)$ to prevent error amplification while $c_{\text{out}}(t)$ ensures unit variance training targets, and (iii) $w^2(t)$ maintains constant expected loss across time levels. Just for the purpose of this derivation we assume that the coordinates of $\cp_1\sim\target$ are i.i.d. with mean $\mu_{\textnormal{data}}$ and variance $\sigma^2_{\textnormal{data}}$.

\subsubsection{Binomial noise}
In this section we derive the EDM scaling for the Binomial noise used in our Binomial flow algorithm. 
\begin{claim}
\label{cl:mean_var_bino}
Fix $\idx\in [\Dim]$ and $t\in [0,1]$. Then,
\begin{equation}
\label{eq:cond_min_x_xt}
\E[\cp_t^{\idx}|\cp_1]=t\cp_1^{\idx},
\end{equation}
\begin{equation}
\label{eq:var_x_xt}
\var[\cp_t^{\idx}|\cp_1]= t(1-t)\cp_1^{\idx},
\end{equation}
\begin{equation}
\label{eq:x_xt_square}
\E[(\cp_t^{\idx})^2|\cp_1]=t(1-t)\cp_1^{\idx}+t^2(\cp_1^{\idx})^2,
\end{equation}
\begin{equation}
\label{eq:xt}
\E[\cp_t^{\idx}]=t\mu_{\textnormal{data}},
\end{equation}
\begin{equation}
\label{eq:var_xt}
\var[\cp_t^{\idx}]=\mu_{\textnormal{data}}t(1-t)+\sigma^2_{\textnormal{data}}t^2,
\end{equation}
\begin{equation}
\label{eq:xt_sq}
\E[(\cp_t^{\idx})^2]=\mu_{\textnormal{data}}t(1-t)+\sigma^2_{\textnormal{data}}t^2+t^2\mu_{\textnormal{data}}^2,
\end{equation}
\begin{equation}
 \label{eq:e_x1_xt} 
\E[\cp_t^{\idx}\cp_1^{\idx}]=t(\sigma^2_{\textnormal{data}}+\mu_{\textnormal{data}}^2),
\end{equation}
\begin{equation}
\label{eq:cov_x1_xt}
\cov\left(\cp_1^{\idx},\cp_t^{\idx}\right)=t\sigma^2_{\textnormal{data}}.
\end{equation}
\end{claim}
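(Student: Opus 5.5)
The plan is to get all eight identities from the conditional law $\cp_t|\cp_1\sim\B_{\cp_1,t}$ (Lemma~\ref{lem:bridge} with $\Time=1$), together with the tower property. The standing assumption is that the coordinates of $\cp_1\sim\target$ have mean $\mu_{\textnormal{data}}$ and variance $\sigma^2_{\textnormal{data}}$. Since $\B_{\cp_1,t}$ is a product measure, the coordinate $\cp_t^{\idx}$ given $\cp_1$ is $\B_{\cp_1^{\idx},t}$. This reduces everything to one-dimensional binomial moments.

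\textbf{Conditional moments.} For a $\B_{n,t}$ variable the mean is $nt$ and the variance is $nt(1-t)$. These are standard facts; one derivation is to write the variable as a sum of $n$ independent Bernoulli$(t)$ variables. Applying this with $n=\cp_1^{\idx}$ gives \eqref{eq:cond_min_x_xt} and \eqref{eq:var_x_xt}. Then \eqref{eq:x_xt_square} follows from $\E[(\cp_t^{\idx})^2|\cp_1]=\var[\cp_t^{\idx}|\cp_1]+(\E[\cp_t^{\idx}|\cp_1])^2$.

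\textbf{Unconditional moments.} Take expectations over $\cp_1$ and use the tower property.
\begin{itemize}
\item Averaging \eqref{eq:cond_min_x_xt} gives \eqref{eq:xt}.
\item Averaging \eqref{eq:x_xt_square}, and using $\E[(\cp_1^{\idx})^2]=\sigma^2_{\textnormal{data}}+\mu^2_{\textnormal{data}}$, gives \eqref{eq:xt_sq}.
\item Subtracting $(t\mu_{\textnormal{data}})^2$ from \eqref{eq:xt_sq} gives \eqref{eq:var_xt}. Equivalently, this is the law of total variance.
\item For the cross moment, $\E[\cp_t^{\idx}\cp_1^{\idx}]=\E[\cp_1^{\idx}\E[\cp_t^{\idx}|\cp_1]]=t\E[(\cp_1^{\idx})^2]$, which is \eqref{eq:e_x1_xt}.
\item Finally, $\cov(\cp_1^{\idx},\cp_t^{\idx})=t(\sigma^2_{\textnormal{data}}+\mu^2_{\textnormal{data}})-\mu_{\textnormal{data}}\cdot t\mu_{\textnormal{data}}=t\sigma^2_{\textnormal{data}}$, which is \eqref{eq:cov_x1_xt}.
\end{itemize}

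There is no real obstacle here. The only point needing care is justifying the conditional law. Lemma~\ref{lem:bridge} supplies it for the Poisson-F\"ollmer process; in the training setting it holds directly by the construction \eqref{eq:cond_dist_bino_poss}. One should also record that $\cp_1^{\idx}$ has finite second moment, so that every expectation above is finite.
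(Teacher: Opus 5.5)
Your proposal is correct and follows essentially the same route as the paper: it uses the conditional law $\cp_t^{\idx}|\cp_1\sim\B_{\cp_1^{\idx},t}$ from Lemma~\ref{lem:bridge}, the binomial mean and variance, and the tower property. The only cosmetic difference is ordering: you obtain \eqref{eq:xt_sq} by averaging \eqref{eq:x_xt_square} and then deduce \eqref{eq:var_xt}, whereas the paper first gets \eqref{eq:var_xt} from the law of total variance and then derives \eqref{eq:xt_sq}.
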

\begin{proof}
We recall that, by Lemma \ref{lem:bridge},$\cp_t^{\idx}|\cp_1\sim \B_{\x_1^{\idx},1}$, and that the mean and variance of $\B_{n,\alpha}$ are $n\alpha$ and $n\alpha(1-\alpha)$, respectively. Equation \eqref{eq:cond_min_x_xt} and Equation \eqref{eq:var_x_xt} then follow immediately. Equation \eqref{eq:x_xt_square} follows from the combination of \eqref{eq:cond_min_x_xt} and \eqref{eq:var_x_xt}. Equation \eqref{eq:xt} follows by taking expectation in \eqref{eq:cond_min_x_xt}. Equation \eqref{eq:var_xt} follows from the law of total variance,
\[
\var[\cp_t^{\idx}]=\E[\var[\cp_t^{\idx}|\cp_1]]+\var[\E[\cp_t^{\idx}|\cp_1]]=\mu_{\text{data}}t(1-t)+\sigma^2_{\text{data}}t^2.
\]
Equation \eqref{eq:xt_sq} follows from the combination of \eqref{eq:xt} and \eqref{eq:var_xt}. Equation \eqref{eq:e_x1_xt} follows from \eqref{eq:cond_min_x_xt} and \eqref{eq:xt_sq} as
\begin{align*}
\E[\cp_1^{\idx}\cp_t^{\idx}]=\E[\cp_1^{\idx}\E[\cp_t^{\idx}|\cp_1^{\idx}]]=t\E[(\cp_1^{\idx})^2]=t(\sigma^2_{\textnormal{data}}+\mu_{\textnormal{data}}^2).
\end{align*}
Equation \eqref{eq:cov_x1_xt} follows from \eqref{eq:e_x1_xt}  and \eqref{eq:xt} as 
\begin{align*}
&\cov\left(\cp_1^{\idx},\cp_t^{\idx}\right)=\E[\cp_1^{\idx}\cp_t^{\idx}]-\E[\cp_1^{\idx}]\E[\cp_t^{\idx}]=t(\sigma^2_{\textnormal{data}}+\mu_{\textnormal{data}}^2)-t\mu_{\textnormal{data}}^2=t\sigma^2_{\textnormal{data}}.
\end{align*}
\end{proof}
We now derive the expression for $c_{\text{in}}(t)$ which is chosen to keep the variance of the input $\widetilde{\x}_t$ constant.
\begin{claim}[Derivation of $c_{\text{in}}$]
\label{cl:cin}
If
\begin{equation}
\label{eq:c_in}
c_{\textnormal{in}}(t)=\dfrac{1}{\sqrt{\mu_{\textnormal{data}} t(1-t) + \sigma^2_{\textnormal{data}} t^2 }}
\end{equation}
then $\var[c_{\textnormal{in}}(t)\cp_t^{\idx}]=1$ for all $t\in [0,1]$ and $\idx\in [\Dim]$.
\end{claim}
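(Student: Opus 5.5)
The claim follows directly from the variance formula \eqref{eq:var_xt} in Claim \ref{cl:mean_var_bino}. The plan is to use that variance is quadratic under scaling by a deterministic constant: for fixed $t$, $c_{\textnormal{in}}(t)$ is a (nonrandom) scalar, so
\[
\var[c_{\textnormal{in}}(t)\cp_t^{\idx}]=c_{\textnormal{in}}(t)^2\,\var[\cp_t^{\idx}].
\]
Substituting \eqref{eq:var_xt}, $\var[\cp_t^{\idx}]=\mu_{\textnormal{data}}t(1-t)+\sigma^2_{\textnormal{data}}t^2$, together with the definition \eqref{eq:c_in}, the two factors cancel exactly and give $1$.

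The only point requiring care is that the expression under the square root in \eqref{eq:c_in} must be strictly positive, so that $c_{\textnormal{in}}(t)$ is well defined. For $t\in(0,1]$ this holds whenever $\mu_{\textnormal{data}}>0$ or $\sigma^2_{\textnormal{data}}>0$. At $t=0$ we have $\cp_0=0$ deterministically, so the variance vanishes and $c_{\textnormal{in}}(0)$ is infinite. I would therefore note that the identity is meant for $t\in(0,1]$. In practice this is also how the paper uses it, since $t$ is bounded away from $0$ through $\varepsilon_{\textnormal{noise}}$.

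Strictly speaking, \eqref{eq:var_xt} itself rests on the law of total variance applied to the Binomial bridge from Lemma \ref{lem:bridge}, under the standing assumption that the coordinates of $\cp_1$ have mean $\mu_{\textnormal{data}}$ and variance $\sigma^2_{\textnormal{data}}$. Since Claim \ref{cl:mean_var_bino} is already established, I would cite it rather than redo that computation. There is no real obstacle beyond this degenerate endpoint.
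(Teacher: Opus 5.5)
Your proof is correct and is the same as the paper's, which just says the claim follows immediately from \eqref{eq:var_xt} via the scaling $\var[c\,\cp_t^{\idx}]=c^2\var[\cp_t^{\idx}]$. Your caveat that the identity breaks down at $t=0$ (where $\cp_0=0$ makes $c_{\textnormal{in}}(0)$ infinite) is right, with one small inaccuracy: the paper does not avoid $t=0$ through $\varepsilon_{\textnormal{noise}}$, since $t=e^{-\sigma}-\varepsilon_{\textnormal{noise}}$ reaches $0$ at $\sigma=-\log\varepsilon_{\textnormal{noise}}$; the degeneracy is handled instead by the added $\varepsilon_{\textnormal{cin}}$ in $c_{\textnormal{in}}$ in Table~\ref{tab:pf-edm}.
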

\begin{proof}
Follows immediately from \eqref{eq:var_xt}.
\end{proof}
Next we turn to the derivation of the relation between $c_{\textnormal{out}}$ and $c_{\textnormal{skip}}$. With the parametrization \eqref{eq:mf_appen} the objective \eqref{eq:denoiser_obj_quad_exp} reads
\begin{equation}
\label{eq:train_loss_appen}  
\begin{split}
&\int_0^1\int_{\Space}\w(t)\left|\x_1-\den_{\theta}(\x_t,t)\right|^2 \flow_{t|1}(\x_t|\x_{1})\target(\x_1)\diff t\\
&=\int_0^1\int_{\Space}\w(t)c_{\textnormal{out}}^2(t)\left|F_{\theta}(\widetilde{\x_t},t)-F_{\textnormal{target}}(\x_1,\x_t,t)\right|^2\flow_{t|1}(\x_t|\x_1)\target(\x_1)\diff t,
\end{split}
\end{equation}
where 
\begin{equation}
\label{eq:F_target}
F_{\textnormal{target}}(\x_1,\x_t,t):=\frac{1}{c_{\textnormal{out}}(t)}\left(\x_1-c_{\textnormal{skip}}(t)\x_t\right).
\end{equation}

We will choose $c_{\textnormal{out}}$ and $c_{\textnormal{skip}}$ so that the variance of $F_{\textnormal{target}}$ is constant in time.
\begin{claim}[Relation between $c_{\textnormal{out}}$ and $c_{\textnormal{skip}}$]
\label{cl:cout_sckip}
Let 
\begin{equation}
\label{eq:cout_sckip}
c_{\textnormal{out}}^2(t)=\sigma^2_{\textnormal{data}}+c_{\textnormal{skip}}^2(t)\left(\mu_{\textnormal{data}}t(1-t)+\sigma^2_{\textnormal{data}} t^2\right)-2c_{\textnormal{skip}}(t)\sigma^2_{\textnormal{data}} t.
\end{equation}
Then, for all $t\in [0,1]$ and $\idx\in [\Dim]$,
\begin{equation}
\label{eq:var_F}    
\var[F_{\textnormal{target}}^{\idx}(\cp_1,\cp_t,t)]=1,
\end{equation}
where $F_{\textnormal{target}}^{\idx}$ is the $\idx$th coordinate of $F_{\textnormal{target}}$ for $\idx\in [\Dim]$. 
\end{claim}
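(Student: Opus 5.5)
We compute the variance of $F^{\idx}_{\textnormal{target}}(\cp_1,\cp_t,t)=\frac{1}{c_{\textnormal{out}}(t)}\bigl(\cp_1^{\idx}-c_{\textnormal{skip}}(t)\cp_t^{\idx}\bigr)$ directly from Claim \ref{cl:mean_var_bino}. Since $c_{\textnormal{out}}(t)$ and $c_{\textnormal{skip}}(t)$ are deterministic, bilinearity of the variance gives
\begin{equation*}
c_{\textnormal{out}}^2(t)\,\var[F^{\idx}_{\textnormal{target}}]=\var[\cp_1^{\idx}]+c_{\textnormal{skip}}^2(t)\var[\cp_t^{\idx}]-2c_{\textnormal{skip}}(t)\cov(\cp_1^{\idx},\cp_t^{\idx}).
\end{equation*}

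We then substitute the three ingredients. First, $\var[\cp_1^{\idx}]=\sigma^2_{\textnormal{data}}$ holds by the standing assumption on the coordinates of $\cp_1\sim\target$. Second, $\var[\cp_t^{\idx}]=\mu_{\textnormal{data}}t(1-t)+\sigma^2_{\textnormal{data}}t^2$ by \eqref{eq:var_xt}. Third, $\cov(\cp_1^{\idx},\cp_t^{\idx})=t\sigma^2_{\textnormal{data}}$ by \eqref{eq:cov_x1_xt}.

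After substitution, the right-hand side is exactly the expression defining $c_{\textnormal{out}}^2(t)$ in \eqref{eq:cout_sckip}. Dividing by $c_{\textnormal{out}}^2(t)$ yields \eqref{eq:var_F}.

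The only point needing care is that the division requires $c_{\textnormal{out}}^2(t)>0$. The right-hand side of \eqref{eq:cout_sckip} is a variance, so it is nonnegative, and it is strictly positive unless $\cp_1^{\idx}$ is almost surely an affine function of $\cp_t^{\idx}$, which is a degenerate case. We will note this as the implicit nondegeneracy assumption, for $t\in[0,1]$, under which $c_{\textnormal{out}}$ is well defined. Beyond that, the argument is a one-line computation with no real obstacle.
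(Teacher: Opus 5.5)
Your proof is correct and follows the same route as the paper: expand $\var[F^{\idx}_{\textnormal{target}}]$ bilinearly and substitute $\var[\cp_1^{\idx}]=\sigma^2_{\textnormal{data}}$, \eqref{eq:var_xt} and \eqref{eq:cov_x1_xt}. Your nondegeneracy caveat is also warranted. For example, at $t=1$ the paper's own choice $c_{\textnormal{skip}}(1)=1$ gives $c_{\textnormal{out}}(1)=0$, an edge case that the paper's ``for all $t\in[0,1]$'' passes over.
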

\begin{proof}
The result follows by combining  \eqref{eq:F_target}, \eqref{eq:var_xt}, and \eqref{eq:cov_x1_xt}.
\end{proof}
Claim \ref{cl:cout_sckip} provides the relation between $c_{\textnormal{out}}$ and $c_{\textnormal{skip}}$ so it suffices to determine the value of one of them. We do so by choosing $c_{\textnormal{skip}}$ to minimize $c_{\textnormal{out}}$. 
\begin{claim}[Best $c_{\textnormal{skip}}$]
\label{cl:c_skip}
For fixed $c_{\textnormal{out}}$ the minimizer of \eqref{eq:cout_sckip} is
\begin{equation}
    \label{eq:c_skip}
c_{\textnormal{skip}}(t):=\frac{\sigma^2_{\textnormal{data}}}{\mu_{\textnormal{data}}(1-t)+\sigma^2_{\textnormal{data}}t}.
\end{equation}
\end{claim}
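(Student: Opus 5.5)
The claim reduces to minimizing a quadratic in $c_{\textnormal{skip}}$. By Claim \ref{cl:cout_sckip}, the right-hand side of \eqref{eq:cout_sckip} is, for fixed $t\in[0,1]$, the function
\[
q(c):=\sigma^2_{\textnormal{data}}+c^2\left(\mu_{\textnormal{data}}t(1-t)+\sigma^2_{\textnormal{data}}t^2\right)-2c\,\sigma^2_{\textnormal{data}}t,
\]
and we want the $c=c_{\textnormal{skip}}(t)$ making $c_{\textnormal{out}}^2(t)=q(c)$ as small as possible. (The statement's phrase ``for fixed $c_{\textnormal{out}}$'' should be read as minimizing $c_{\textnormal{out}}$ over $c_{\textnormal{skip}}$, with the variance normalization \eqref{eq:var_F} enforced.) I would first observe that $q(c)=\var[\cp_1^{\idx}-c\,\cp_t^{\idx}]$, using \eqref{eq:var_xt} and \eqref{eq:cov_x1_xt}. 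Hence $q$ is a convex quadratic with leading coefficient $\var[\cp_t^{\idx}]\ge 0$.

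Next I would argue strict convexity for $t\in(0,1]$. The coefficient $\mu_{\textnormal{data}}t(1-t)+\sigma^2_{\textnormal{data}}t^2$ is positive whenever $\sigma^2_{\textnormal{data}}>0$, or whenever $\mu_{\textnormal{data}}>0$ and $t<1$. So the unique minimizer is the critical point, found by setting $q'(c)=0$:
\[
c=\frac{\sigma^2_{\textnormal{data}}t}{\mu_{\textnormal{data}}t(1-t)+\sigma^2_{\textnormal{data}}t^2}.
\]
Cancelling the common factor $t$ gives \eqref{eq:c_skip}. Equivalently, this is the least-squares regression coefficient $\cov(\cp_1^{\idx},\cp_t^{\idx})/\var[\cp_t^{\idx}]$, which gives a sanity check.

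The only delicate point is the endpoint $t=0$. There the quadratic degenerates: $q(c)\equiv\sigma^2_{\textnormal{data}}$, so every $c$ is a minimizer. I would note that formula \eqref{eq:c_skip} extends continuously to $t=0$, with value $\sigma^2_{\textnormal{data}}/\mu_{\textnormal{data}}$, and adopt this as the convention there. This endpoint caveat is the main obstacle. The rest is routine algebra.
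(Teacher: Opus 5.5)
Your proposal is correct and follows the paper's proof: the right-hand side of \eqref{eq:cout_sckip} is a convex quadratic in $c_{\textnormal{skip}}(t)$, and setting its derivative to zero yields \eqref{eq:c_skip}. Your additions are accurate refinements that the paper omits: recognizing the quadratic as $\var[\cp_1^{\idx}-c\,\cp_t^{\idx}]$, so that the minimizer is the regression coefficient, and noting that the problem degenerates at $t=0$.
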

\begin{proof}
As a function of $c_{\textnormal{skip}}(t)$ the expression in \eqref{eq:cout_sckip} is convex so the result follows by equating the its derivative with respect to $c_{\textnormal{skip}}(t)$ to zero.
\end{proof}
Once $c_{\textnormal{skip}}$ is determined, the expression for $c_{\textnormal{out}}$ follows by substituting \eqref{eq:c_skip} into \eqref{eq:cout_sckip}. 

Next we determine the value of $\w$. We do so by requiring the loss to stay constant in time at initialization $F_{\theta}=0$. Note that when $F_{\theta}=0$, Equation \eqref{eq:train_loss_appen} reads
\begin{equation}
\label{eq:train_loss_appen_int}  
\int_0^1\int_{\Space}\w(t)\left|\x_1-c_{\textnormal{skip}}\x_t\right|^2\flow_{t|1}(\x_t|\x_{1})\target(\x_1)\diff t.
\end{equation}

\begin{claim}
\label{eq:loss_constant}
Let 
\begin{equation}
\label{eq:w}   
\w(t)=\frac{1}{c_{\textnormal{out}}^2(t)+\left(\mu_{\textnormal{data}}-c_{\textnormal{skip}}(t)t\mu_{\textnormal{data}}\right)^2}.
\end{equation}
Then, for all $t\in [0,1]$,
\begin{equation}
\label{eq:train_loss_appen_int_const}  
\int_{\Space}\w(t)\left|\x_1-c_{\textnormal{skip}}\x_t\right|^2\flow_{t|1}(\x_t|\x_{1})\target(\x_1)=\Dim.
\end{equation}
\end{claim}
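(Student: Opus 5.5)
The identity will follow from a bias--variance decomposition applied one coordinate at a time, reusing the moment computations of Claim \ref{cl:mean_var_bino} and the defining relation \eqref{eq:cout_sckip} for $c_{\textnormal{out}}$. Throughout, $c_{\textnormal{skip}}$ is a deterministic function of $t$, so it can be treated as a constant inside expectations.

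First, write the spatial integral in \eqref{eq:train_loss_appen_int_const} as an expectation. Since $\cp_1\sim\target$ and $\cp_t|\cp_1\sim\flow_{t|1}(\cdot|\cp_1)$, it equals
\[
\w(t)\,\E\left[|\cp_1-c_{\textnormal{skip}}(t)\cp_t|^2\right]=\w(t)\sum_{\idx=1}^{\Dim}\E\left[(\cp_1^{\idx}-c_{\textnormal{skip}}(t)\cp_t^{\idx})^2\right].
\]
Under the standing assumption that the coordinates of $\cp_1$ are i.i.d.\ with mean $\mu_{\textnormal{data}}$ and variance $\sigma^2_{\textnormal{data}}$, each summand has the same value. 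It therefore suffices to show that every summand equals $1/\w(t)$, i.e.\ $c_{\textnormal{out}}^2(t)+\left(\mu_{\textnormal{data}}-c_{\textnormal{skip}}(t)t\mu_{\textnormal{data}}\right)^2$.

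Fix $\idx$ and split the second moment as $\E[Z^2]=\var[Z]+(\E Z)^2$ with $Z=\cp_1^{\idx}-c_{\textnormal{skip}}(t)\cp_t^{\idx}$.

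\emph{Mean.} By \eqref{eq:xt} and $\E[\cp_1^{\idx}]=\mu_{\textnormal{data}}$, we get $\E Z=\mu_{\textnormal{data}}-c_{\textnormal{skip}}(t)t\mu_{\textnormal{data}}$.

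\emph{Variance.} Expanding,
\[
\var[Z]=\sigma^2_{\textnormal{data}}+c_{\textnormal{skip}}^2(t)\var[\cp_t^{\idx}]-2c_{\textnormal{skip}}(t)\cov(\cp_1^{\idx},\cp_t^{\idx}).
\]
Inserting \eqref{eq:var_xt} and \eqref{eq:cov_x1_xt} gives exactly the right-hand side of \eqref{eq:cout_sckip}, which is $c_{\textnormal{out}}^2(t)$. This is the same computation underlying Claim \ref{cl:cout_sckip}, since $\var[F^{\idx}_{\textnormal{target}}]=\var[Z]/c_{\textnormal{out}}^2(t)$.

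Adding the two pieces, each summand equals $1/\w(t)$ by \eqref{eq:w}. Summing over the $\Dim$ coordinates and multiplying by $\w(t)$ gives $\Dim$.

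There is no real obstacle; the only points to check are bookkeeping. One must keep the squared mean term, which \eqref{eq:cout_sckip} does not capture, since $c_{\textnormal{out}}^2$ is only the variance. One must also use that the i.i.d.\ coordinate assumption makes all summands identical. Finally, \eqref{eq:w} is well defined because $c_{\textnormal{out}}^2(t)\ge0$ and the denominator is positive whenever either term is nonzero.
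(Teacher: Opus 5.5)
Your proposal is correct and follows essentially the same route as the paper: write the integral as $\w(t)\sum_{\idx=1}^{\Dim}\E[(\cp_1^{\idx}-c_{\textnormal{skip}}(t)\cp_t^{\idx})^2]$, split each term into a variance equal to $c_{\textnormal{out}}^2(t)$ (the computation behind Claim~\ref{cl:cout_sckip}) plus the squared mean from \eqref{eq:xt}, and sum the $\Dim$ identical terms. One caveat, shared with the paper: your well-definedness remark is vacuous at $t=1$. With the choice \eqref{eq:c_skip} one has $c_{\textnormal{skip}}(1)=1$, so both terms of the denominator vanish, and the identity really holds only for $t\in[0,1)$; this is why the implemented weight in Table~\ref{tab:pf-edm} adds $\varepsilon_{\textnormal{cin}}$.
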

\begin{proof}
Equation \eqref{eq:train_loss_appen_int_const} reads 
\begin{align*}
&\int_{\Space}\w(t)\left|\x_1-c_{\textnormal{skip}}\x_t\right|^2\flow_{t|1}(\x_t|\x_{1})\target(\x_1)=\w(t)\sum_{\idx=1}^{\Dim}\E[(\cp_1^{\idx}-c_{\textnormal{skip}}(t)\cp_t^{\idx})^2].
\end{align*}
On the other hand,
\begin{align*}
&\E[(\cp_1^{\idx}-c_{\textnormal{skip}}(t)\cp_t^{\idx})^2]=\var[\cp_1^{\idx}-c_{\textnormal{skip}}(t)\cp_t^{\idx}]+\E[\cp_1^{\idx}-c_{\textnormal{skip}}(t)\cp_t^{\idx}]^2. 
\end{align*}
By Claim \ref{cl:cout_sckip}, the first term reads
\begin{align*}
\var[\cp_1^{\idx}-c_{\textnormal{skip}}(t)\cp_t^{\idx}]=c_{\textnormal{out}}^2(t)\var[F_{\textnormal{target}}(\cp_1,\cp_t,t)]=c_{\textnormal{out}}^2(t),
\end{align*}
while, by \eqref{eq:xt}, the second term reads
\begin{align*}
\E[\cp_1^{\idx}-c_{\textnormal{skip}}(t)\cp_t^{\idx}]^2=\left(\mu_{\textnormal{data}}-c_{\textnormal{skip}}(t)t\mu_{\textnormal{data}}\right)^2.
\end{align*}
We conclude that 
\[
\int_{\Space}\w(t)\left|\x_1-c_{\textnormal{skip}}\x_t\right|^2\flow_{t|1}(\x_t|\x_{1})\target(\x_1)=\Dim\,\w(t)\left[c_{\textnormal{out}}^2(t)+\left(\mu_{\textnormal{data}}-c_{\textnormal{skip}}(t)t\mu_{\textnormal{data}}\right)^2\right],
\]
which proves the result.
\end{proof}
Next we show how to choose $b_{\text{skip}}$  and $b_{\text{out}}$ so that
$\mathrm{b}(t) := b_{\text{skip}}(t) \cp_t + b_{\text{out}}(t) \mu_{\text{data}}$ minimizes the expected squared error $\loss_t(\mathrm{b}) = \E[|\cp_1 - \mathrm{b}(t)|^2]$ at each time $t$. 
\begin{claim}
\label{cl:opt_b_bin}
The minimizer $\mathrm{b}(t) := b_{\textnormal{skip}}(t) \cp_t + b_{\textnormal{out}}(t) \mu_{\textnormal{data}}$ of  $\loss_t(\mathrm{b}) = \E[|\cp_1 - \mathrm{b}(t)|^2]$ is
\begin{equation}
\label{eq::opt_b_bin}
b_{\textnormal{skip}}(t)=c_{\textnormal{skip}}(t)=\frac{\sigma^2_{\textnormal{data}}}{\mu_{\textnormal{data}}(1-t)+\sigma^2_{\textnormal{data}}t},\qquad b_{\textnormal{out}}(t) = 1- t b_{\textnormal{skip}}(t).
\end{equation}
\end{claim}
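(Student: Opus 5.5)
Because the loss separates across coordinates and the baseline treats every coordinate the same way, it suffices to minimize
\[
\loss_t^{\idx}(b_{\textnormal{skip}},b_{\textnormal{out}})=\E\big[(\cp_1^{\idx}-b_{\textnormal{skip}}\cp_t^{\idx}-b_{\textnormal{out}}\mu_{\textnormal{data}})^2\big]
\]
for one fixed $\idx\in[\Dim]$ and one fixed $t$. This is an ordinary least-squares problem: we regress $\cp_1^{\idx}$ on $\cp_t^{\idx}$ with an intercept, where the intercept is $b_{\textnormal{out}}\mu_{\textnormal{data}}$. The objective is a convex quadratic in $(b_{\textnormal{skip}},b_{\textnormal{out}})$, so the minimizer is characterized by the first-order conditions. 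We assume $\mu_{\textnormal{data}}\neq 0$ and $t\in(0,1]$ so that the problem is nondegenerate.

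\textbf{Step 1: the intercept.} Setting the derivative in $b_{\textnormal{out}}$ to zero gives
\[
\E[\cp_1^{\idx}]=b_{\textnormal{skip}}\E[\cp_t^{\idx}]+b_{\textnormal{out}}\mu_{\textnormal{data}}.
\]
By \eqref{eq:xt} this reads $\mu_{\textnormal{data}}=b_{\textnormal{skip}}t\mu_{\textnormal{data}}+b_{\textnormal{out}}\mu_{\textnormal{data}}$, hence $b_{\textnormal{out}}=1-tb_{\textnormal{skip}}$.

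\textbf{Step 2: the slope.} With the intercept chosen this way the residual has mean zero, so the objective becomes
\[
\var[\cp_1^{\idx}-b_{\textnormal{skip}}\cp_t^{\idx}]=\sigma^2_{\textnormal{data}}+b_{\textnormal{skip}}^2\var[\cp_t^{\idx}]-2b_{\textnormal{skip}}\cov(\cp_1^{\idx},\cp_t^{\idx}).
\]
This is exactly the right-hand side of \eqref{eq:cout_sckip}, after substituting \eqref{eq:var_xt} and \eqref{eq:cov_x1_xt}. Its minimizer is
\[
b_{\textnormal{skip}}=\frac{\cov(\cp_1^{\idx},\cp_t^{\idx})}{\var[\cp_t^{\idx}]}=\frac{t\sigma^2_{\textnormal{data}}}{\mu_{\textnormal{data}}t(1-t)+\sigma^2_{\textnormal{data}}t^2},
\]
and cancelling $t$ gives \eqref{eq::opt_b_bin}. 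This is the same computation as in Claim \ref{cl:c_skip}, so $b_{\textnormal{skip}}=c_{\textnormal{skip}}$.

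There is no genuine obstacle here. The only point that needs care is justifying that the problem reduces to minimizing the variance term once the intercept is profiled out, which follows from the decomposition $\E[Z^2]=\var[Z]+(\E Z)^2$. Under Step 1 the mean term vanishes, and Step 2 then minimizes the variance term. Joint optimality of the pair follows from convexity, since the Hessian is positive definite whenever $\var[\cp_t^{\idx}]>0$.
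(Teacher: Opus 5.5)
Your proof is correct. It differs from the paper's only in how the $b_{\textnormal{skip}}$ step is organized.

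The paper works with raw moments. It expands $\E[(\cp_1^{\idx}-\mathrm{b}^{\idx}(t))^2]$ using $\E[(\cp_1^{\idx})^2]$, $\E[\cp_1^{\idx}\cp_t^{\idx}]$ and $\E[(\cp_t^{\idx})^2]$ from Claim \ref{cl:mean_var_bino}. It then writes down both first-order conditions and solves the coupled $2\times 2$ system, substituting $b_{\textnormal{out}}=1-tb_{\textnormal{skip}}$ into the $b_{\textnormal{skip}}$ equation and rearranging.

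You instead decouple the two parameters with $\E[Z^2]=\var[Z]+(\E Z)^2$. The intercept kills the bias term, and what remains, $\var[\cp_1^{\idx}-b_{\textnormal{skip}}\cp_t^{\idx}]$, is a one-parameter quadratic minimized at $\cov(\cp_1^{\idx},\cp_t^{\idx})/\var[\cp_t^{\idx}]$.

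Your route buys two things. First, it explains why $b_{\textnormal{skip}}=c_{\textnormal{skip}}$ instead of leaving it as a coincidence of formulas: the profiled objective is exactly $c_{\textnormal{out}}^2$ as a function of $c_{\textnormal{skip}}$ in \eqref{eq:cout_sckip}, so Claim \ref{cl:c_skip} and this claim are the same minimization. Second, it handles joint optimality explicitly, along with the nondegeneracy conditions $\mu_{\textnormal{data}}\neq 0$, $t>0$ and $\var[\cp_t^{\idx}]>0$. The paper uses these silently when it divides by $\mu_{\textnormal{data}}^2$ and by $t$. At $t=0$ the slope is not identifiable since $\cp_0=0$, though the stated pair is still a minimizer.

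The paper's route is more mechanical, but it needs only raw moments. It does not use the covariance identity \eqref{eq:cov_x1_xt} or the link to Claim \ref{cl:cout_sckip}.
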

\begin{proof}
Fix $\idx\in [\Dim]$. Equation \eqref{eq:xt_sq} gives
\begin{align*}
\E[(\cp_1^{\idx})^2]]=\sigma^2_{\textnormal{data}}+\mu_{\textnormal{data}}^2,
\end{align*}
while, by \eqref{eq:e_x1_xt}, 
\begin{align*}
\E[\cp_1^{\idx}\mathrm{b}^{\idx}(t)]= b_{\text{skip}}(t) \E[\cp_1^{\idx}  \cp_t^{\idx}] + b_{\text{out}}(t) \mu_{\text{data}}\E[\cp_1^{\idx}]=b_{\text{skip}}(t)t(\sigma^2_{\textnormal{data}}+\mu_{\textnormal{data}}^2)+ b_{\text{out}}(t) \mu_{\text{data}}^2.
\end{align*}
By  \eqref{eq:xt} and \eqref{eq:xt_sq},
\begin{align*}
\E[(\mathrm{b}^{\idx}(t))^2]&=b_{\text{skip}}^2(t)\E[(\cp_t^{\idx})^2]+2b_{\text{skip}}(t)b_{\text{out}}(t)\mu_{\text{data}}\E[\cp_t^{\idx}]+b_{\text{out}}^2(t)\mu_{\text{data}}^2\\
&=b_{\text{skip}}^2(t)\left(\mu_{\textnormal{data}}t(1-t)+\sigma^2_{\textnormal{data}}t^2+t^2\mu_{\textnormal{data}}^2\right)+2b_{\text{skip}}(t)b_{\text{out}}(t)t\mu_{\text{data}}^2+b_{\text{out}}^2(t)\mu_{\text{data}}^2,
\end{align*}
so putting everything together,
\begin{align*}
&\E[|\cp_1^{\idx} - \mathrm{b}^{\idx}(t)|^2]=\left\{\sigma^2_{\textnormal{data}}+\mu_{\textnormal{data}}^2\right\}-2\left\{b_{\text{skip}}(t)t(\sigma^2_{\textnormal{data}}+\mu_{\textnormal{data}}^2)+ b_{\text{out}}(t) \mu_{\text{data}}^2\right\}\\
&+\left\{b_{\text{skip}}^2(t)\left(\mu_{\textnormal{data}}t(1-t)+\sigma^2_{\textnormal{data}}t^2+t^2\mu_{\textnormal{data}}^2\right)+2b_{\text{skip}}(t)b_{\text{out}}(t)t\mu_{\text{data}}^2+b_{\text{out}}^2(t)\mu_{\text{data}}^2\right\}.
\end{align*}
Differentiating the error with respect to $b_{\text{skip}}(t)$ we obtain
\begin{align*}
&-2t(\sigma^2_{\textnormal{data}}+\mu_{\textnormal{data}}^2)+2b_{\text{skip}}(t)\left(\mu_{\textnormal{data}}t(1-t)+\sigma^2_{\textnormal{data}}t^2+t^2\mu_{\textnormal{data}}^2\right)+2b_{\text{out}}(t)t\mu_{\text{data}}^2 = 0,
\end{align*}
and hence  
\begin{align*}
&b_{\text{skip}}(t)=\frac{t(\sigma^2_{\textnormal{data}}+\mu_{\textnormal{data}}^2)-b_{\text{out}}(t)t\mu_{\text{data}}^2}{\mu_{\textnormal{data}}t(1-t)+\sigma^2_{\textnormal{data}}t^2+t^2\mu_{\textnormal{data}}^2}.
\end{align*}
Differentiating the error with respect to $b_{\text{out}}(t)$ we obtain
\begin{align*}
&-2\mu_{\text{data}}^2 + 2b_{\text{skip}}(t)t\mu_{\text{data}}^2+2b_{\text{out}}(t)\mu_{\text{data}}^2 = 0,
\end{align*}
and hence 
\begin{align*}
b_{\text{out}}(t) = 1- t b_{\text{skip}}(t).
\end{align*}
Plugging this in the expression for $b_{\text{skip}}$ we obtain
\begin{align*}
b_{\text{skip}}(t)&=\frac{t(\sigma^2_{\textnormal{data}}+\mu_{\textnormal{data}}^2)-(1- t b_{\text{skip}}(t))t\mu_{\text{data}}^2}{\mu_{\textnormal{data}}t(1-t)+\sigma^2_{\textnormal{data}}t^2+t^2\mu_{\textnormal{data}}^2}\\
&=\frac{\sigma^2_{\textnormal{data}}+  tb_{\text{skip}}(t)\mu_{\text{data}}^2}{\mu_{\textnormal{data}}(1-t)+\sigma^2_{\textnormal{data}}t+t\mu_{\textnormal{data}}^2},
\end{align*}
and rearranging the terms and solving for $b_{\text{skip}}$ gives us 
\begin{align*}
&b_{\text{skip}}(t)=\frac{\sigma^2_{\textnormal{data}} }{\mu_{\textnormal{data}}(1-t)+\sigma^2_{\textnormal{data}}t}.
\end{align*}
\end{proof}

\begin{table}[H]%
\centering
\caption{Binomial flow EDM ($T = 1$). Here $\mu_{\text{data}}$ and $\sigma^2_{\text{data}}$ are the mean and variance of the unscaled images $[0, 255]$, and $\varepsilon_{\text{cin}} = 0.01$ in our experiments.}
\label{tab:pf-edm}
\begin{tabular}{@{}ll@{}}
\toprule
\textbf{Symbol} & \textbf{Formula} \\
\midrule
$\sigma$ & $-\log(t + \varepsilon_\text{noise})$ \\
$t$ & $\exp(-\sigma) - \varepsilon_\text{noise}$ \\
\midrule
$p(\sigma)$ & $\propto \mathcal{N}(\sigma;\mu_\sigma, \gamma^2_\sigma)\mathbf{1}_{[0,-\log(\varepsilon_\text{noise})]}(\sigma)$ \\
$\trans_{t|1}(\x_t \mid x_1)$ & $\B_{\x_1, t}$ \\
\midrule
$c_{\text{in}}(t)$ & $\dfrac{1}{\sqrt{\mu_{\text{data}} t(1-t) + \sigma^2_{\text{data}} t^2 + \varepsilon_{\text{cin}}}}$ \\[2ex]
$s_{\text{in}}$ & $-\dfrac{\mu_{\text{data}}}{\sqrt{\sigma^2_{\text{data}}}}$ \\[2ex]
$c_{\text{skip}}(t)$ & $\dfrac{\sigma^2_{\text{data}}}{\mu_{\text{data}}(1-t) + \sigma^2_{\text{data}} t}$ \\[2ex]
$c_{\text{out}}(t)$ & $\sqrt{\dfrac{\sigma^2_{\text{data}} \mu_{\text{data}} (1-t)}{\mu_{\text{data}}(1-t) + \sigma^2_{\text{data}} t}}$ \\[2ex]
$w^2(t)$ & $\dfrac{1}{c_{\text{out}}(t)^2 + (\mu_{\text{data}} - c_{\text{skip}}(t) \mu_{\text{data}} t)^2 + \varepsilon_{\text{cin}}}$ \\[2ex]
\midrule
$\widetilde{x}_t$ & $c_{\text{in}}(t) x_t + s_{\text{in}}$ \\[1ex]
$\den(x_t, t)$ & $c_{\text{skip}}(t) x_t + c_{\text{out}}(t) F_\theta(\widetilde{x}_t, \sigma)$ \\[1ex]
$\loss_t(\den)$ & $\w^2(t) \|\den(\x_t,t) - x_1\|^2$ \\
\bottomrule
\end{tabular}
\end{table}

\subsection{Additional figures}
\label{subsec:add_fig}

\ \ 

\begin{figure}[H]
\centering
\includegraphics[width=0.45\linewidth]{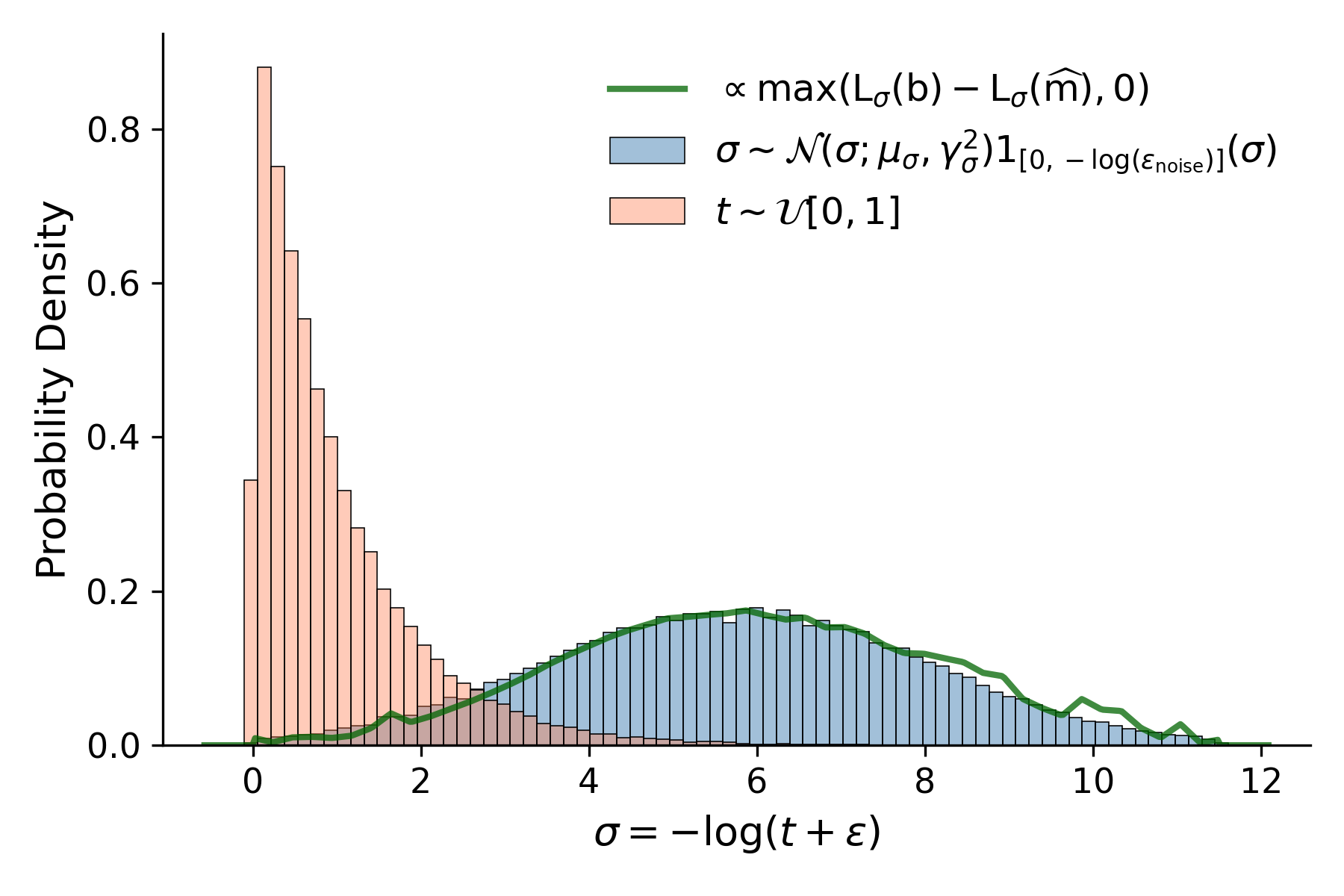}
\caption{Noise sampling distributions. The normalized improvement curve $\loss_\sigma(\mathrm{b}) - \loss_\sigma(\widehat{\den})$ (solid) compared to the induced distributions from uniform time sampling $t \sim \mathcal{U}[0,1]$ and matched Gaussian noise sampling.}\label{fig:time}
\end{figure}

\begin{figure}[H]
\centering
\includegraphics[width=0.5\linewidth]{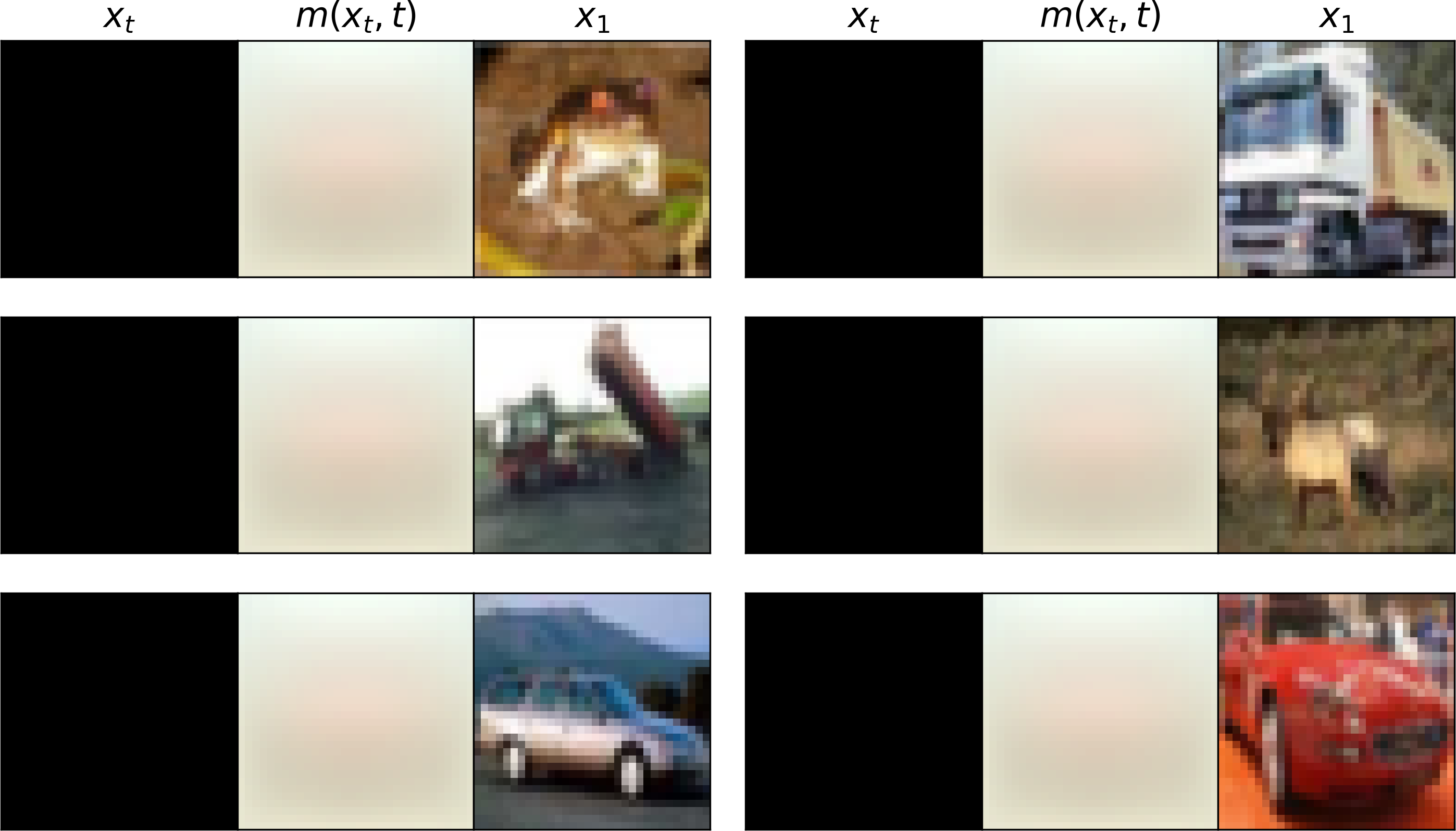}
\includegraphics[width=0.5\linewidth]{denoiser_t0.001.png}
\includegraphics[width=0.5\linewidth]{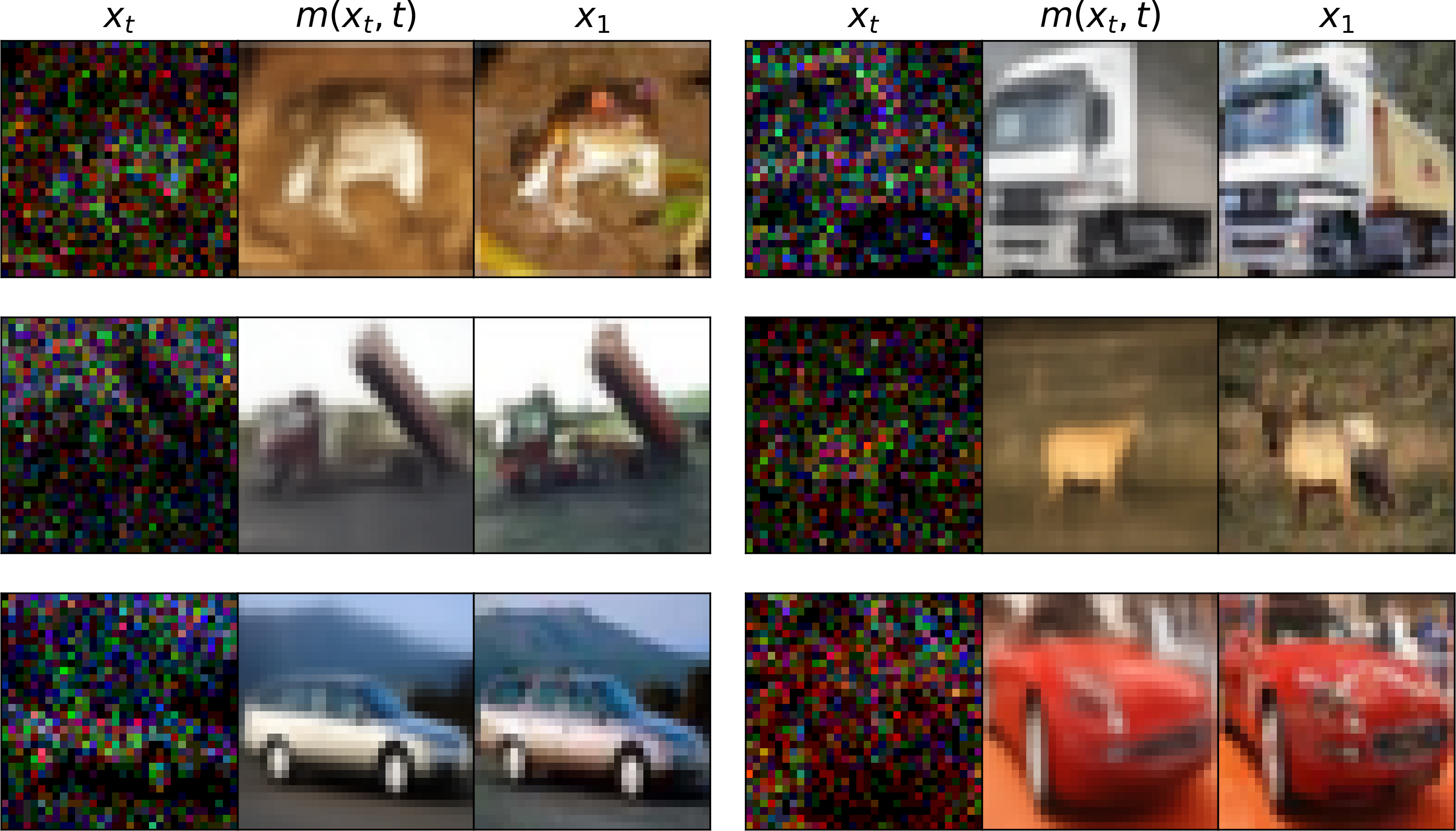}
\includegraphics[width=0.5\linewidth]{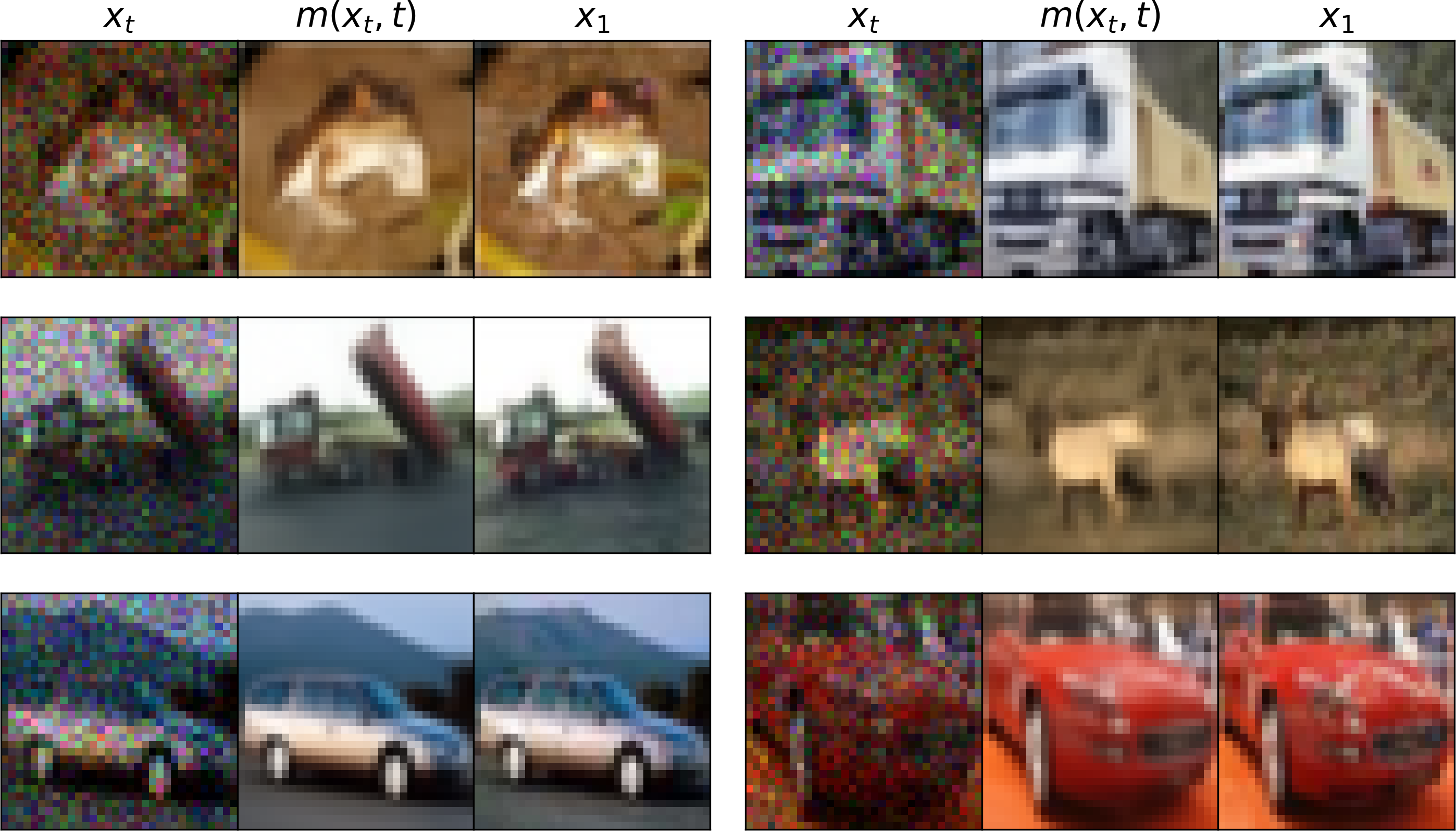}
\caption{Denoiser Poisson-Föllmer EDM samples for $t=0,0.001,0.01,0.1$.}\label{fig:denoiser_all}
\end{figure}

\begin{figure}[H]
\centering
\includegraphics[width=0.7\linewidth]{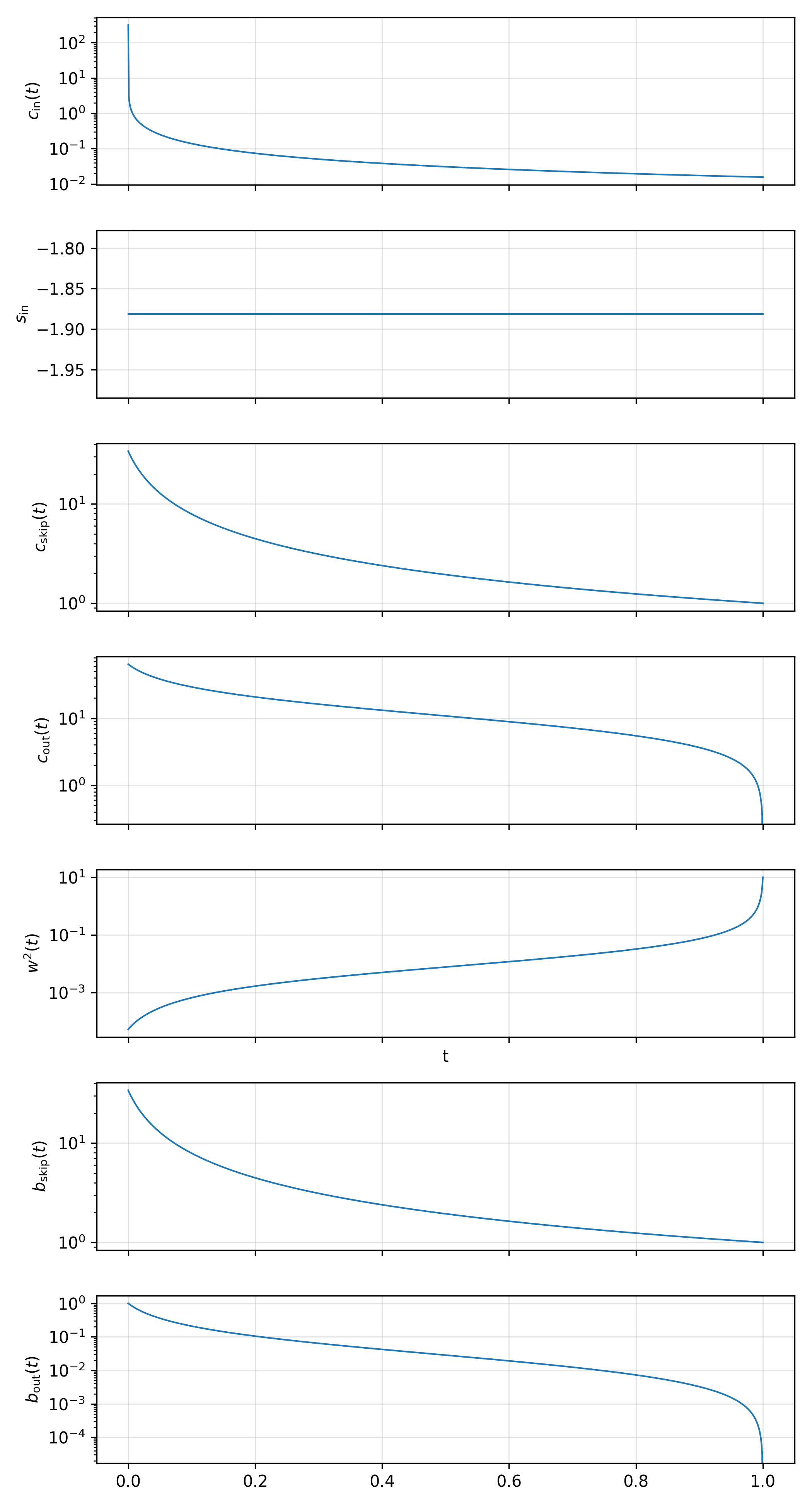}
\caption{Scaling functions for CIFAR10.}\label{fig:scalings}
\end{figure}

\ \

\phantom{.}
\end{document}